\documentclass[twoside,11pt]{article}
\usepackage{jmlr2e}

\usepackage{amsmath, amsfonts,amssymb,euscript, graphicx,epsfig,enumerate,float,afterpage, subfigure, ifthen, moreverb, algpseudocode,algorithm, booktabs, enumitem}
\usepackage[normalem]{ulem}
\usepackage{url}
\usepackage{hyperref}
\hypersetup{ hidelinks }
\usepackage{tikz}
\usepackage{placeins}
\usetikzlibrary{arrows.meta, positioning, calc}
\usepackage{xcolor}

\newtheorem{thm}{Theorem}
\newtheorem{prop}{Proposition}

\newtheorem{lem}{Lemma}
\newtheorem{defn}{Definition}

\newtheorem{rmk}{Remark}

\newenvironment{namedproof}[1]{%
  \par\medskip\noindent\textbf{#1.}\ \ignorespaces
}{%
  \hfill\ensuremath{\square}\par\medskip
}

\newcommand{\curl}[1]{\left\{#1\right\}}
\newcommand{\vect}[1]{\mathbf{#1}}
\newcommand{\parn}[1]{\left(#1\right)}

\newcommand{\sqr}[1]{\left[#1\right]}
\newcommand{\bars}[1]{\left|#1\right|}

\tikzset{
  pnode/.style  = {circle, draw=black, line width=0.6pt, fill=white,
                   minimum size=6mm, inner sep=0pt},
  wbox/.style   = {rounded corners=1.5pt, draw=black, line width=0.5pt, fill=white,
                   minimum width=12mm, minimum height=6mm, inner sep=1pt, font=\footnotesize},
  summ/.style   = {circle, draw=black, line width=0.6pt, fill=white,
                   minimum size=8mm, inner sep=0pt},
  mapbox/.style = {rounded corners=2.5pt, draw=black, line width=0.6pt, fill=blue!8,
                   minimum size=8mm, inner sep=2pt, font=\small},
  genbox/.style = {rounded corners=2.5pt, draw=black, line width=0.6pt, fill=green!12,
                   minimum height=10mm, inner sep=3pt, font=\small},
  flow/.style   = {-{Stealth[length=2.2mm,width=1.8mm]}, semithick, black},
  gedge/.style  = {-{Stealth[length=2.6mm,width=2mm]}, line width=1pt, black},
  ann/.style    = {font=\scriptsize, text=black!62, align=center, inner sep=1pt},
}

\newcommand{\cardframe}[2]{%
  \coordinate (m@nw) at (current bounding box.north west);
  \coordinate (m@ne) at (current bounding box.north east);
  \coordinate (m@se) at (current bounding box.south east);
  \coordinate (m@tm) at ($(m@nw)!0.5!(m@ne)$);
  \node[font=\small] at ($(m@tm)+(0,7mm)$) {#2};                       
  \draw[thin] ($(m@nw)+(-3mm,3mm)$) -- ($(m@ne)+(3mm,3mm)$);           % separator
  \draw[semithick, rounded corners=5pt]                               % the card
       ($(m@nw)+(-3mm,11mm)$) rectangle ($(m@se)+(3mm,-3mm)$);
  \node[font=\bfseries, anchor=south] at ($(m@tm)+(0,13mm)$) {#1};    
}

\begin{document}

\title{On the Identifiability of Mixed Ordinal and Exponential Family Causal DAGs under Linear Parametric Models}

\author{\name Sambit~Mishra \email sambitmi@usc.edu \\
       \addr Ming Hsieh Department of Electrical and Computer Engineering\\
       University of Southern California\\
       Los Angeles, CA 90089, USA
       \AND
       \name Urbashi~Mitra \email ubli@usc.edu \\
       \addr Ming Hsieh Department of Electrical and Computer Engineering\\
       University of Southern California\\
       Los Angeles, CA 90089, USA
}

\editor{---}

\maketitle
\begin{abstract}
The problem of identifiability in linear parametric models (LPMs) whose nodes follow either an ordered logit model or a regular one-parameter exponential family is evaluated. The results go beyond classical structural equation models as well as results for nodes with observations from a homogeneous family of distributions.
%Determining causal direction from observational data requires structural assumptions, and existing identifiability theory, developed either for exogenous-noise structural equation models (SEM) or for single-family parametric conditionals, leaves edges between variables of different families outside its scope. 
The main result establishes that the orientation of every edge joining an ordinal node to an exponential-family node is identifiable from the joint distribution alone at every parameter value, provided the ordinal node has at least three categories and the exponential-family node at least three points of support, with no restriction on the sufficient statistic. Converses show that both requirements are necessary: the three-category requirement is binding only for affine sufficient statistics, and the three-point requirement is binding under the canonical link. The guarantee extends to orienting every such mixed ordinal-exponential family edge of a given $d$-node undirected skeleton. Numerical experiments illustrate the theoretical results by  successfully separating orientations within a Markov equivalence class, which are indistinguishable by conditional independence alone.
\end{abstract}

\begin{keywords}
Causal Discovery, Causal Inference, Directed Acyclic Graphs, Identifiability, Distributional Identifiability, Exponential Family, Ordinal Distribution, Structure Learning, Parametric Causal Models, Linear Models
\end{keywords}

\section{Introduction}
\label{sec:intro}
{Causal reasoning plays a major role across modern scientific inquiry, in fields as varied as wireless networks \citep{thomas2024causal}, oncology \citep{xue2019tumour}, molecular biology \citep{triantafillou2017predicting}, financial fraud detection \citep{ren2025causal}, and digital advertising \citep{hill2015measuring}. Common to all of them is the need to uncover the causal mechanisms driving a stochastic system rather than its correlations.} A commonly used structure for modeling causal relationships among variables is the directed acyclic graph (DAG), where nodes represent variables, and directed edges encode the flow of causal influence \citep{pearl2009causality}. The central task of causal discovery is recovering the graph from data. Graph knowledge enables the design of interventions and the separation of cause from correlation, both of which most applications of causal inference rely on.

Current literature divides causal DAG recovery into two broad classes, interventional and observational. Interventional causal discovery refines the set of graphs compatible with the data, thereby improving identifiability \citep{hauser2012characterization}. Recent works in interventional causal discovery include the optimized selection and design of soft interventions \citep{yang2018characterizing, jaber2020causal, peng2026intervene, peng2025soft} and causal bandit settings \citep{lattimore2016causal, varici2023causal, peng2025bandits}. In many scientific settings, however, experimentation is expensive, unethical, or infeasible, and one must recover the graph from observational data alone. Existing works in observational causal discovery explore DAG recovery by constraint-based procedures testing conditional independence \citep{spirtes2000causation}, by score-based procedures searching over graphs \citep{chickering2002optimal}, or by recent characterizations of the finite-sample and detection-theoretic limits of recovery \citep{ghoshal2017learning, gao2022optimal, shaska2025causal, lungu2025bayesian, mishra2026causal}. Underlying all such procedures is the more basic question of identifiability, namely, whether the observational distribution suffices to determine causal direction at all. {Identifiability is the question we address herein for causal graphs with observational data and nodes that admit mixed-ordinal and one-parameter exponential-family distributions.}

{A key challenge in identifiability theory is the difficulty in identifying a causal DAG beyond its Markov equivalence class (MEC) from observational data alone without additional structural assumptions \citep{spirtes2000causation}. The existing literature addresses the challenge using structural equation modeling (SEM), thereby obtaining identifiability across various continuous and discrete families, as reviewed in Section~\ref{ss:rel_W}. Across those results, the causal mechanism often takes one of two restrictive forms: either the child node's value is a \textit{deterministic} function of its parents and exogenous noise, or the parents act on the child only through a \textit{structural intermediate}, rather than directly shaping the parameters of the child's conditional distribution.}

Real observational datasets rarely fit assumptions of either kind. The present work is motivated by epidemiological studies routinely combining ordinal, count, bounded, and continuous variables in a single record. For instance, a study of COVID-19 case dynamics in Germany \citep{steiger2021causal} models a daily count of new cases against ordinal intervention indicators, binary policy variables, continuous weather measurements, bounded mobility percentages, and unbounded socio-demographic shares. The authors had to center and scale continuous predictors and decorrelate collinear mobility variables via principal component analysis, precisely because no single SEM family natively accommodates the heterogeneity. Similar works on global mammalian virus spillover \citep{kreuder2015spillover, johnson2020global} likewise model a count outcome against a mix of ordinal, binary, and continuous predictors. Forcing such datasets into a continuous additive-noise SEM requires latent thresholding, link transformations, and truncations, all of which risk mis-specifications that can invalidate both the identifiability guarantees and the downstream causal estimates.

A complementary line of work on parametric causal models \citep{bodik2025identifiability, bodik2026structural} lets the parent values directly shape the parameters of the child's conditional distribution. The support, dispersion, and tail behavior of the child are then inherited from a chosen parametric family rather than from an exogenous noise term, allowing count, bounded, and continuous variables to coexist in a single graph without ad hoc transformations. {We shift the formulation to construct the \textbf{Linear Parametric Model (LPM)}, in which the parameters of each child's conditional distribution are functions of a weighted linear combination of the parent values, the weights being the model parameters. This allows for the conditional family to differ across nodes, with parents acting on each child only through the linear predictor defined by the weights.}
{Additionally, the structure of LPMs also allows for the use of more computationally efficient back propagation-based algorithms, which remain out of scope of this paper.}
%\um{say anything about LPM enabling other things, e.g. more computaionally efficient algos?}

We specialize our focus on identifiability in the mixed setting where an edge joins an ordinal node to a node whose conditional distribution belongs to a regular one-parameter exponential family. Existing parametric models that achieve within-MEC identifiability each assume a single distributional family for all nodes, leaving edges across families untreated. The conditionally parametric causal model (CPCM) framework of \citet{bodik2025identifiability} treats the exponential family in generality, but requires both conditional directions to admit the exponential family form. {We fix the ordinal nodes to follow the ordered-logit model of \citet{mccullagh1980regression}, and remark on extensions beyond it. This proportional-odds cumulative link model has served as the standard model for ordinal responses across the applied literature \citep{lall2002review, williams2006generalized, desantis2014regression, french2022regression}, and does not admit a one-parameter exponential family form, as Proposition~\ref{prop:ord_ne} below establishes, so the CPCM identifiability conditions do not carry over to an edge carrying an ordinal endpoint. The identifiability of edges between ordinal and exponential-family nodes in mixed-distribution DAGs is therefore an open problem that we address in this work. We enumerate the contributions of this work as follows:}
\begin{enumerate}
    \item We prove any edge between an ordinal node and a regular one-parameter exponential family node to be distributionally identifiable at every parameter value of the model, requiring only that the ordinal node has at least three categories and that the exponential-family node has at least three points of support, with no restriction on the sufficient statistic. 
    \item We also prove converses, showing that when the exponential-family node has two points of support, we can construct an explicit family of forward and reverse parameters that induce identical joint distributions for any number of ordinal categories. Similarly, when the ordinal node has two categories, we establish a dichotomy that the orientation is identifiable if and only if the exponential-family sufficient statistic is non-affine on its support.
    \item {We establish a multivariate extension orienting every such edge in a general $d$-node DAG at every parameter value, under the Markov property, causal sufficiency, and the absence of selection. We strengthen this result to the identifiability of the entire orientation of a given skeleton, and prove a converse showing that both requirements remain necessary for an edge whose endpoints carry no neighbors other than each other.}
    \item {We empirically validate the framework on the canonical $3$-node MEC, demonstrating a decay of the orientation error rate with increasing sample size, and identifiability within the MEC.}
\end{enumerate}

\subsection{Notation}
\label{ss:setup}
A causal DAG is a pair $\mathcal{G} = \parn{\mathcal{V}, \mathcal{E}}$ on $d$ nodes, where $\mathcal{V} = \curl{1, \dots, d}$ is the vertex set and $\mathcal{E} = \curl{\parn{i, j} \mid i \to j \text{ edge exists}}$ is the edge set, carrying a deterministic weighted adjacency matrix $\vect{W}_{\mathcal{G}} \in \mathbb{R}^{d \times d}$ given by
\begin{equation}
[\vect{W}_{\mathcal{G}}]_{i,j} =
\begin{cases}
0, & \parn{i,j} \notin \mathcal{E}, \\
w_{i,j} \neq 0, & \parn{i,j} \in \mathcal{E}.
\end{cases}
\label{eq1}
\end{equation}
We assume $\mathcal{G}$ is acyclic. The parent set of a node $j$ is $\mathcal{P}\parn{j}$ defined as
\begin{equation}
\mathcal{P}\parn{j} \triangleq
\curl{ i \in \mathcal{V} : \parn{i,j} \in \mathcal{E} },
\label{eq2}
\end{equation}
and each node $i$ carries a scalar random variable $X_i$, with $\vect{x} = \parn{X_1, \dots, X_d}$ denoting the vector of all $d$ variables and $\vect{X} \in \mathbb{R}^{n \times d}$ is the data matrix collecting $n$ independent realizations of $\vect{x}$. With a slight abuse of notation, we write $\vect{x}_{\mathcal{P}\parn{i}}$ for the parent variables and for their observed values interchangeably. When more than one graph is in play, we write $\mathcal{P}_{\mathcal{G}}\parn{j}$ for the parent set of $j$ in $\mathcal{G}$.

\subsection{Related Works}
\label{ss:rel_W}
The existing literature has extensively considered additive noise-based SEMs for proving identifiability in causal DAGs, restricting the causal mechanism to $X_i = f_i\parn{\vect{x}_{\mathcal{P}\parn{i}}} + \varepsilon_i$ for a deterministic function $f_i$, with the noise variables $\varepsilon_i$ jointly independent. For continuous variables, \citet{peters2014identifiability} established identifiability for linear Gaussian models with equal error variance, and \citet{shimizu2006linear} established the same for linear non-Gaussian models, with both cases considering the linear model $X_i = \vect{\beta}_i^{\mathsf{T}} \vect{x}_{\mathcal{P}\parn{i}} + \varepsilon_i$ with the noise $\varepsilon_i$ being Gaussian and non-Gaussian, respectively. \citet{hoyer2008nonlinear} considered non-linear additive noise models, while \citet{zhang2009identifiability} introduced the post-nonlinear model $X_i = g_1\parn{g_2\parn{\vect{x}_{\mathcal{P}\parn{i}}} + \varepsilon_i}$ with an invertible link function $g_1$. In the discrete setting, similar results hold for additive noise models \citep{peters2011causal} and integer modular additive noise models \citep{suzuki2014identifiability}. At the same time, \citet{cai2018causal} exploited a structural asymmetry instead, passing the cause through a low-cardinality deterministic intermediate before it generates the effect. In each case, identifiability was established by fixing the functional form through which the noise enters, and by drawing every node of the graph from a single distributional family.

Subsequent studies also allowed the parents to influence the dispersion of the child as well as its mean, giving the location-scale or heteroscedastic noise models $X_i = f_i\parn{\vect{x}_{\mathcal{P}\parn{i}}} + g_i\parn{\vect{x}_{\mathcal{P}\parn{i}}} \varepsilon_i$ for a positive scale function $g_i$. \citet{immer2023identifiability} established that the causal direction in such models is identifiable outside a set of pathological cases, with related literature considering binned approximations \citep{xu2022inferring}, patient-specific root-cause estimation \citep{strobl2022identifying}, autoregressive-flow parametrizations \citep{khemakhem2021causal}, and skewed variants \citep{klippert2025skewness}. 

A third line of work dropped the noise term entirely and lets the parents parametrize the child's conditional distribution within a specified family, deriving identifiability from that family's structural form. \citet{park2015learning} treated Poisson DAGs, in which $X_i \mid \vect{x}_{\mathcal{P}\parn{i}} \sim \operatorname{Poisson}\parn{\theta_i\parn{\vect{x}_{\mathcal{P}\parn{i}}}}$, and \citet{park2019identifiability} considered the broader generalized hypergeometric family, in which the mean and variance stand in a polynomial relationship, while \citet{ni2022ordinal} analyzed the identifiability in ordinal DAGs. \citet{rajendran2021structure} tied exponential-family conditionals to greedy score-based recovery through a Bregman-information characterization, and \citet{wang2024causal} employed node-wise generalized linear models under a formulation admitting latent confounders. Each such treatment, however, fixed a single family for the whole graph, leaving an edge between nodes of different families out of reach.

Closer to the present setting is a body of work that targeted the mixed continuous--discrete edge directly. \citet{wei2018mixed} proposed a locally consistent information criterion over a mixed factor space, and \citet{marx2019causal} oriented pairs of arbitrary and possibly mixed type by minimum description length. \citet{huang2018generalized} constructed kernel-based score functions admitting mixed data within greedy equivalence search, while \citet{zeng2022causal} gave sufficient conditions for a linear mixed model combining LiNGAM with a logistic-type conditional. More recently, \citet{yao2025causal} extended additive noise models to mixed types through a general function class, and \citet{maeda2025density} oriented a continuous--discrete pair by testing monotonicity of the conditional density ratio. An older tradition treated such variables through the conditional Gaussian distributions of \citet{lauritzen1989graphical}, with structure learning studied by \citet{andrews2019learning}. When the discrete variable was the cause, such treatments took the conditionals across its levels either to form a location-shift family or to be independently parametrized, and drew identifiability from non-Gaussianity, additive noise, description length, or a genericity argument placing coincidences in a set of measure zero.

Nearest in formulation is a line of work that replaced the deterministic parent-to-child link with a node-specific conditional distribution whose parameters are themselves functions of the parent values, so that $X_i \mid \vect{x}_{\mathcal{P}\parn{i}} \sim F\parn{\theta_i\parn{\vect{x}_{\mathcal{P}\parn{i}}}}$ for a known parametric family $F$ and a parameter map $\theta_i$. \citet{janzing2009distinguishing} fixed a second-order exponential maximum-entropy form and observed that variables supported on proper subsets of $\mathbb{R}$ can induce distinct joint laws even for Markov-equivalent graphs, illustrating the point for a binary--continuous pair. The aforementioned CPCM model of \citet{bodik2025identifiability} developed the idea across the exponential family, where the density takes the form $H\parn{x} \exp\parn{\theta^{\mathsf{T}} T\parn{x} - a\parn{\theta}}$ with sufficient statistic $T$, base measure $H$, log-partition function $a$, and natural parameter $\theta = \theta_i\parn{\vect{x}_{\mathcal{P}\parn{i}}}$, reaching multi-parameter cases such as the two-parameter Gaussian, Gamma, and Beta as well as non-linear parameter maps, with identifiability characterized by conditions relating that map to the sufficient statistics. The characterization, however, required the exponential family representation to hold in both causal directions.

%\um{do you mention the Asilomar paepr somewhere?}
Our own earlier treatment of mixed Ordinal--Poisson DAGs \citep{shaska2025ordinal}, which motivates this work, showed such an edge between an ordinal and a Poisson distributed node to be orientable outside a Lebesgue-null set of parameters by an analyticity argument. The treatment admits any analytic cumulative link with a nowhere-vanishing derivative and thus covers the probit, but remains confined to the Poisson conditional and the bivariate case, leaves the multivariate extension as a simulation-supported conjecture, and does not address where identifiability fails. {Our most recent work on the treatment of Ordinal--Exponential family DAGs \citep{mishra2026ordinal} explores this further by considering a subset of the regular one-parameter exponential family of distributions, restricted by functional assumptions on the sufficient statistics of the exponential family. The work proves distributional identifiability for continuous exponential family distributions, subject to the specified assumptions on the sufficient statistics and to identifiability outside a Lebesgue-null set of parameters, for their discrete counterparts. However, it omits the necessary conditions, the analysis of non-identifiability, the treatment of the unrestricted sufficient-statistic case, the distributional identifiability of discrete cases, and an extensive analysis of the multi-node DAG case, all of which this work addresses in depth.}

{To the best of our knowledge, no existing treatment addresses the case where an edge has endpoints in different families, one of which admits no exponential-family representation. The mixed continuous--discrete results do not apply since an exponential-family conditional is, in general, neither a location-shift family across the levels of the cause nor a collection of independently parametrized laws. The CPCM characterization fails for a different reason: it is stated in terms of a representation that an ordinal endpoint does not admit. Where such an edge has been treated at all, the conclusion has held outside a null set of parameters rather than at every point of the parameter space, {and where the conclusion has been shown to hold for every point in the parameter set, it has been restricted by strong assumptions on the exponential family,} and nothing has been established about where the conclusion fails.}

\textbf{The remainder of the paper is organized as follows.} Section~\ref{sec:bg} introduces the LPM framework and specializes the framework to the mixed ordinal--exponential family setting. Section~\ref{sec:id} establishes the bivariate and multivariate identifiability results together with their converses. Section~\ref{sec:num} presents the numerical experiments, and Section~\ref{sec:conc} concludes the paper. 

\section{Preliminaries}
\label{sec:bg}
In this section, we formalize the modeling framework used throughout the paper. We begin with the standard assumptions for causal DAGs, then introduce the LPM framework as a specialization of the CPCM framework, and finally specialize the LPM to the ordinal-exponential setting underlying our identifiability results.

\subsection{Causal DAGs}
We assume throughout that the joint distribution of $\vect{x}$ is Markov with respect to $\mathcal{G}$, factorizing as
\begin{equation}
p\parn{\vect{x}} =
\prod_{i=1}^{d}
p\!\parn{ X_i \mid \vect{x}_{\mathcal{P}\parn{i}} }.
\label{eq3}
\end{equation}
Two DAGs are Markov equivalent when they entail the same conditional independence relations, equivalently when they share a skeleton and a set of v-structures. The collection of all such DAGs equivalent to a given DAG is its Markov equivalence class (MEC).

The classical SEM specifies each node as a \textit{deterministic} function of its parents and exogenous noise,
\begin{eqnarray}
    X_i = f_i\parn{\vect{x}_{\mathcal{P}\parn{i}}, \varepsilon_i}, \quad i \in \mathcal{V},
    \label{eq4}
\end{eqnarray}
where $f_i: \mathbb{R}^{\left|\mathcal{P}\parn{i}\right| + 1} \to \mathbb{R}$ is the deterministic function defining the SEM and the noise variables $\curl{\varepsilon_i}_{i = 1}^{d}$ are mutually independent with pre-defined distributions. In such instantiations, however,  the stochasticity of the child enters only through the exogenous noise term $\varepsilon_i$ whose family is fixed in advance, leaving tail and support inherited from $\varepsilon_i$, restricting the parents from acting on the full conditional distribution of the child.

\subsection{Linear Parametric Models}
The CPCM framework of \citet{bodik2025identifiability} replaces the deterministic construction with a node-specific conditional distribution whose parameters are functions of the parent values,
\begin{eqnarray}
    X_i \sim p_i\parn{\cdot \mid \boldsymbol{\theta}_i\parn{\vect{x}_{\mathcal{P}\parn{i}}}}, \quad i \in \mathcal{V},
    \label{eq5}
\end{eqnarray}
where $p_i$ is the conditional distribution of $X_i$. This allows for the support, dispersion, and tail behavior to be inherited from the parametric family rather than an exogenous noise term. Instead of focusing on each individual parent value in $\vect{x}_{\mathcal{P}\parn{i}}$ directly as done in CPCM, we turn our focus on a weighted linear combination of the parent values $\vect{w}_i^{\mathsf{T}}\vect{x}_{\mathcal{P}\parn{i}}$, where the weights $\vect{w}_i$ determine the influence of each parent value on the conditional of the child node. This formulation is motivated by artificial neural networks, where the output of a single neuron is estimated as a function of the weighted linear combination of its inputs. Such constructions also facilitate the development of efficient back-propagation-based algorithms currently studied in the literature, which are outside the scope of this paper. We refer to such models as \textbf{Linear Parametric Models (LPM)}, formalized in Definition~\ref{def:ssm} below.
\begin{defn}[Linear Parametric Model]
\label{def:ssm}
    A linear parametric model on a DAG $\mathcal{G} = \parn{\mathcal{V}, \mathcal{E}}$ is a tuple
    \begin{eqnarray}
        \mathcal{S} = \parn{\mathcal{G}, \curl{p_i}_{i \in \mathcal{V}}, \curl{\boldsymbol{\theta}_i}_{i \in \mathcal{V}}, \vect{W}_{\mathcal{G}}},
        \label{eq6}
    \end{eqnarray}
    satisfying the following:
    \begin{enumerate}
        \item For each $i \in \mathcal{V}$, $p_i\parn{\cdot \mid \boldsymbol{\theta}_i}$ is a regular parametric family with real parameters $\boldsymbol{\theta}_i$ and real support $\mathcal{X}_i \subseteq \mathbb{R}$.
        \item For each $i \in \mathcal{V}$, the parameters are determined by the parent values through a known parameter link function given as,
        \begin{eqnarray}
            \boldsymbol{\theta}_i \triangleq \boldsymbol{\theta}_i\parn{\vect{w}_i^{\mathsf{T}}\vect{x}_{\mathcal{P}\parn{i}}},
            \label{eq7}
        \end{eqnarray}
        where $\vect{w}_i \triangleq \sqr{\vect{W}_{\mathcal{G}}}_{\mathcal{P}\parn{i}, i}$ is the weight vector corresponding to node $i$.
        \item The conditional distributions $\curl{p_i\parn{X_i \mid \vect{x}_{\mathcal{P}\parn{i}}}}_{i \in \mathcal{V}}$ are compatible with the Markov factorization in \eqref{eq3}.
    \end{enumerate}
\end{defn}
It is interesting to note that Definition~\ref{def:ssm} recovers the canonical linear Gaussian SEM considered by \citet{peters2014identifiability} with additive noise $\varepsilon_i \sim \mathcal{N}\parn{0, \sigma_i^2}$ on setting $p_i$ Gaussian with mean $\vect{w}_i^{\mathsf{T}}\vect{x}_{\mathcal{P}\parn{i}}$ and fixed variance $\sigma_i^2$. The conditional family $p_i$ may also differ across nodes, allowing a single LPM to combine ordinal, count, bounded, and continuous variables within one graph without ad-hoc transformations. Lastly, note that the conditional distribution $p_i$ rather than an exogenous noise term determines the support of $X_i$, respecting discrete and bounded variables by construction. We summarize the contrast between the three constructions --- SEM, CPCM, and LPM --- at a single node $X_i$ in Figure~\ref{fig:ssm_toy}, marking the quantities known in advance and those estimated from data. 

\begin{figure}[!htbp]
\centering
\begin{tabular}{@{}c@{\hspace{2mm}}c@{}}
%==================== SEM ====================
\begin{tikzpicture}[baseline=0pt]
  \node[pnode] (p1) at (0,0.9)  {};   \node[pnode] (p2) at (0,0.15) {};
  \node        (pd) at (0,-0.45){$\vdots$};   \node[pnode] (p3) at (0,-1.05){};
  \draw[thin] (-0.32,0.95) .. controls (-0.58,0.6) and (-0.58,-0.72) .. (-0.32,-1.1)
       node[midway, left=1pt, font=\footnotesize] {$\vect{x}_{\mathcal{P}\parn{i}}$};
  \node[mapbox] (f) at (2.0,0) {$f_i$};
  \node[font=\footnotesize] (eps) at (2.6,1.2) {$\varepsilon_i$ (noise)};
  \draw[flow] (eps) -- (f);
  \draw[flow] (p1) -- (f);  \draw[flow] (p2) -- (f);  \draw[flow] (p3) -- (f);
  \node (xi) at (3.5,0) {$X_i$};
  \draw[gedge] (f) -- (xi);
  \node[ann, below=5mm of f, text width=30mm]
       {arbitrary deterministic function that is either known or estimated};
  \path (0,1.4) (0,-2.4);  % equal-height strut for the top row
  \cardframe{SEM}{$X_i \boldsymbol{=} f_i\!\parn{\vect{x}_{\mathcal{P}\parn{i}},\,\varepsilon_i}$}
\end{tikzpicture}
&
%==================== CPCM ====================
\begin{tikzpicture}[baseline=0pt]
  \node[pnode] (p1) at (0,0.9)  {};   \node[pnode] (p2) at (0,0.15) {};
  \node        (pd) at (0,-0.45){$\vdots$};   \node[pnode] (p3) at (0,-1.05){};
  \draw[thin] (-0.32,0.95) .. controls (-0.58,0.6) and (-0.58,-0.72) .. (-0.32,-1.1)
       node[midway, left=1pt, font=\footnotesize] {$\vect{x}_{\mathcal{P}\parn{i}}$};
  \node[mapbox] (th) at (1.5,0) {$\boldsymbol{\theta}_i$};
  \draw[flow] (p1) -- (th);  \draw[flow] (p2) -- (th);  \draw[flow] (p3) -- (th);
  \node[genbox] (P) at (4.0,0) {$p_i\!\parn{\cdot \mid \boldsymbol{\theta}_i}$};
  \draw[flow] (th) -- (P);
  \node (xi) at (6.5,0) {$X_i$};
  \draw[gedge] (P) -- node[below=1pt, font=\scriptsize, text=black!55] {generates} (xi);
  \node[ann, below=5mm of th, text width=24mm]
       {non-linear parameter map that is \textcolor{orange!80!black}{estimated from data}};
  \node[ann, below=5mm of P, text width=24mm]
       {\textcolor{green!45!black}{known} conditional family whose parameters are determined by $\boldsymbol{\theta}_i$};
  \path (0,1.4) (0,-2.4);
  \cardframe{CPCM}{$X_i \boldsymbol{\sim} p_i\!\parn{\cdot \mid \boldsymbol{\theta}_i\parn{\vect{x}_{\mathcal{P}\parn{i}}}}$}
\end{tikzpicture}
\\
\end{tabular}

\vspace{6mm}
%==================== LPM ====================
\begin{tikzpicture}
  \node[pnode] (p1) at (0,0.9)  {};   \node[pnode] (p2) at (0,0.15) {};
  \node        (pd) at (0,-0.45){$\vdots$};   \node[pnode] (p3) at (0,-1.05){};
  \draw[thin] (-0.32,0.95) .. controls (-0.58,0.6) and (-0.58,-0.72) .. (-0.32,-1.1)
       node[midway, left=1pt, font=\footnotesize] {$\vect{x}_{\mathcal{P}\parn{i}}$};
  \node[wbox] (w1) at (1.3,0.9)  {$w_{1,i}$};   \node[wbox] (w2) at (1.3,0.15) {$w_{2,i}$};
  \node       (wd) at (1.3,-0.45){$\vdots$};    \node[wbox] (wn) at (1.3,-1.05){$w_{|\mathcal{P}\parn{i}|,i}$};
  \draw[flow] (p1) -- (w1);  \draw[flow] (p2) -- (w2);  \draw[flow] (p3) -- (wn);
  \node[summ] (S) at (3.4,0) {$\Sigma$};
  \draw[flow] (w1) -- (S);  \draw[flow] (w2) -- (S);  \draw[flow] (wn) -- (S);
  \node[mapbox] (g) at (5.4,0) {$\boldsymbol{\theta}_i$};
  \draw[flow] (S) -- node[above=1pt, font=\scriptsize] {$z_i$} (g);
  \node[genbox] (P) at (8.5,0) {$p_i\!\parn{\cdot \mid \boldsymbol{\theta}_i}$};
  \draw[flow] (g) -- (P);
  \node (xi) at (11.4,0) {$X_i$};
  \draw[gedge] (P) -- node[below=1pt, font=\scriptsize, text=black!55] {generates} (xi);
  \node[ann, below=5mm of wn, text width=30mm]
       {trainable weights $\vect{w}_i$};
  \node[ann, below=5mm of g, text width=35mm]
       {\textcolor{green!45!black}{known} deterministic parameter link function};
  \node[ann, below=5mm of P, text width=26mm]
       {\textcolor{green!45!black}{known} regular parametric family, one per node};
  \cardframe{LPM}{$X_i \boldsymbol{\sim} p_i\!\parn{\cdot \mid \boldsymbol{\theta}_i\parn{z_i}}, \quad z_i = \vect{w}_i^{\mathsf{T}}\,\vect{x}_{\mathcal{P}\parn{i}}$}
\end{tikzpicture}

\caption{Generative view of the SEM, CPCM, and LPM frameworks at a single node $i$
with parent set $\mathcal{P}\parn{i}$.}
\label{fig:ssm_toy}
\end{figure}

\subsection{Ordinal-Exponential LPM}
We now focus on the ordinal and regular one-parameter exponential family mixed distribution LPM studied in the remainder of this paper. The node set is partitioned as $\mathcal{V} = \mathcal{V}_{\operatorname{ord}} \cup \mathcal{V}_{\operatorname{exp}}$, where $\mathcal{V}_{\operatorname{ord}}$ represents the set of the ordinal nodes and $\mathcal{V}_{\operatorname{exp}}$ represents that of the exponential-family nodes. We describe both node families below.

\subsubsection{Ordinal Nodes} Let $i \in \mathcal{V}_{\operatorname{ord}}$ correspond to an ordinal random variable $X_i$ with finite ordered support $\mathcal{X}_i = \curl{1, \dots, s}$ for $s \geq 2$. If the node is a root node, i.e. $\mathcal{P}\parn{i} = \emptyset$, the distribution of $X_i$ reduces to a categorical distribution specified by category probabilities $\pi_{i, x}$ as
\begin{eqnarray}
    \label{eq:ord_root}
    p_i\parn{X_i = x} = \pi_{i, x}, \quad \pi_{i, x} > 0, \quad \forall x \in \mathcal{X}_i, \quad \text{and} \quad \sum_{x \in \mathcal{X}_i}\pi_{i,x} = 1.
\end{eqnarray}
For the non-root node case, we adopt the ordered logit cumulative link parametrization of \citet{mccullagh1980regression} given by
\begin{eqnarray}
    p\parn{X_i = x \mid \vect{x}_{\mathcal{P}\parn{i}}} = \sigma\!\parn{\gamma_{i,x} - \vect{w}_i^{\mathsf{T}} \vect{x}_{\mathcal{P}\parn{i}}}  - \sigma\!\parn{\gamma_{i,x-1} - \vect{w}_i^{\mathsf{T}} \vect{x}_{\mathcal{P}\parn{i}}}, \quad \forall x \in \mathcal{X}_i
    \label{eq:ord_log_mod}
\end{eqnarray}
where $\sigma\parn{z} = \parn{1 + e^{-z}}^{-1}$ is the sigmoid function, and $\gamma_{i,j}$ are strictly ordered cutpoints such that $-\infty = \gamma_{i, 0} < \gamma_{i, 1} < \cdots < \gamma_{i, s - 1} < \gamma_{i, s} = +\infty$. The cutpoints are further assumed to be independent of the parent variables $\vect{x}_{\mathcal{P}\parn{i}}$ and the weights $\vect{w}_i$.

\subsubsection{Exponential-Family Nodes} 
Let $i \in \mathcal{V}_{\operatorname{exp}}$ correspond to a random variable $X_i$ whose conditional distribution belongs to a regular one-parameter exponential family. The support $\mathcal{X}_i \subseteq \mathbb{R}$ may be discrete or continuous. For the root node case with $\mathcal{P}\parn{i} = \emptyset$, the marginal distribution of $X_i$ is fixed to be non-degenerate and positive on the support $\mathcal{X}_i$. Therefore, we have,
\begin{eqnarray}
    \label{eq:ef_root}
    p_{X_i}\parn{x} > 0, \quad \forall x \in \mathcal{X}_i, \qquad \text{and} \qquad \sum_{x \in \mathcal{X}_i}p_{X_i}\parn{x} = 1
\end{eqnarray}
If the node has parent nodes, i.e. $\mathcal{P}\parn{i} \neq \emptyset$, then the node follows a regular one-parameter exponential family distribution with known sufficient statistic $T_i: \mathcal{X}_i \to \mathbb{R}$, taken to be minimal in the sense that $T_i$ is not constant on $\mathcal{X}_i$, so that distinct natural parameters index distinct distributions, and where the natural parameter is fixed to follow a known strictly monotone function of the linear predictor of the parents $\eta_i\parn{\vect{w}_{i}^{\mathsf{T}}\vect{x}_{\mathcal{P}\parn{i}}}$, and the conditional distribution takes the form
\begin{eqnarray}
    p_{X_i \mid \vect{x}_{\mathcal{P}\parn{i}}}\parn{x \mid \vect{x}_{\mathcal{P}\parn{i}}} = H_i\parn{x} \exp\!\parn{\eta_i\!\parn{\vect{w}_{i}^{\mathsf{T}}\vect{x}_{\mathcal{P}\parn{i}}} T_i\parn{x} - a_i\!\parn{\eta_i\!\parn{\vect{w}_{i}^{\mathsf{T}}\vect{x}_{\mathcal{P}\parn{i}}}}}, \quad \forall x \in \mathcal{X}_i,
\end{eqnarray}
where $H_i\parn{x} > 0$ is the base measure, and $a_i\parn{\cdot}$ is the log-partition function.

We now proceed to record a fundamental mismatch between the ordinal and the exponential family nodes described in this section.
\begin{prop}
\label{prop:ord_ne}
Let $X$ be ordinal with support $\mathcal{X} = \curl{1, \dots, s}$ for $s \geq 3$, let $Y$ be supported on a set $\mathcal{Y}$ containing at least three points, and let
\begin{equation}
    p\parn{X = x \mid Y = y} = \sigma\!\parn{\gamma_x - w y} - \sigma\!\parn{\gamma_{x-1} - w y}, \quad \forall x \in \mathcal{X}, y \in \mathcal{Y},
    \label{eq:ord_ne_cond}
\end{equation}
for some $w \neq 0$ and strictly ordered cutpoints $-\infty = \gamma_0 < \gamma_1 < \cdots < \gamma_{s-1} < \gamma_s = +\infty$. Then, regarded as a family of distributions on $\mathcal{X}$ indexed by $y$, this conditional is not a one-parameter exponential family in $X$.
\end{prop}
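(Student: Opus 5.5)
Suppose, for contradiction, that the conditional in \eqref{eq:ord_ne_cond} is a one-parameter exponential family in $X$. Then there exist functions $H:\mathcal{X}\to(0,\infty)$ and $T:\mathcal{X}\to\mathbb{R}$, together with maps $y\mapsto\eta(y)$ and $y\mapsto a(y)$, such that
\[
p\parn{X=x\mid Y=y}=H\parn{x}\exp\parn{\eta\parn{y}T\parn{x}-a\parn{y}}
\]
for all $x\in\mathcal{X}$ and $y\in\mathcal{Y}$. All probabilities are strictly positive, so we may take logarithms. Denote by $\ell_x(y)=\log p(X=x\mid Y=y)$ the log-probabilities, and form log-ratios against a reference category, say $x=1$:
\[
\ell_x\parn{y}-\ell_1\parn{y}=\log\frac{H(x)}{H(1)}+\eta\parn{y}\sqr{T(x)-T(1)}.
\]
Hence the vector of log-ratios $\parn{\ell_x(y)-\ell_1(y)}_{x=2}^{s}$, viewed as a function of $y$, lies in an affine line in $\mathbb{R}^{s-1}$ (a fixed vector plus $\eta(y)$ times a fixed direction). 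Since $s\ge 3$, this vector has at least two coordinates. The plan is to show that this collinearity fails for the ordered logit at some three points of $\mathcal{Y}$.

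Concretely, fix the three categories $x=1$, $x=2$, $x=s$. Write $u=-wy$ and $\sigma_j=\sigma(\gamma_j+u)$. The category probabilities then have closed forms:
\[
p(1)=\sigma_1,\qquad p(2)=\sigma_2-\sigma_1,\qquad p(s)=1-\sigma_{s-1}.
\]
The log-ratios become explicit functions of $u$:
\[
f(u)=\log\frac{1-\sigma(\gamma_{s-1}+u)}{\sigma(\gamma_1+u)},\qquad g(u)=\log\frac{\sigma(\gamma_2+u)-\sigma(\gamma_1+u)}{\sigma(\gamma_1+u)}.
\]
For $f$, the logistic identities give $f(u)=\log\parn{1+e^{-\gamma_1-u}}-\log\parn{1+e^{\gamma_{s-1}+u}}$. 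For $g$, the simplification is
\[
g(u)=\log\parn{e^{\gamma_2-\gamma_1}-1}-\log\parn{1+e^{-\gamma_2-u}}.
\]
The exponential-family form forces an affine relation
\[
\alpha f(u)+\beta g(u)=c \quad\text{at } u=-wy,\ y\in\mathcal{Y},
\]
with $(\alpha,\beta)\neq(0,0)$. This holds because two coordinates of a point moving on a line satisfy a fixed affine equation. (If $T(s)=T(1)$ or $T(2)=T(1)$, one of $f,g$ must be constant in $y$, which is an even stronger statement.)

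The main obstacle is that $\mathcal{Y}$ is only assumed to contain three points, not an interval. So one cannot differentiate in $u$ and appeal to linear independence of analytic functions: with three values of $u$, three unknowns $(\alpha,\beta,c)$ can always be solved nontrivially. In fact, a homogeneous $3\times 3$ system $[f(u_k),\,g(u_k),\,1]$ always has a kernel only if the determinant vanishes. So the contradiction must come from showing
\[
D=\det\begin{pmatrix} f(u_1)&g(u_1)&1\\ f(u_2)&g(u_2)&1\\ f(u_3)&g(u_3)&1\end{pmatrix}\neq 0
\]
for distinct $u_k$, i.e.\ the planar curve $u\mapsto(f(u),g(u))$ has no three collinear points. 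The first thing I would try is to show this curve is strictly convex: compute $f'$ and $g'$, show $f'<0$ and $g'>0$, and show that the slope $g'/f'$ is strictly monotone in $u$. Both derivatives are simple combinations of sigmoids, namely
\[
f'(u)=-\sigma(-\gamma_1-u)-\sigma(\gamma_{s-1}+u),\qquad g'(u)=\sigma(-\gamma_2-u).
\]
A strictly convex graph cannot contain three collinear points, so $D\ne0$, which contradicts the exponential-family form.

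Finally, I should double-check the edge case $\alpha=0$. There $g$ would have to be constant on three points, which is impossible since $g$ is strictly increasing. Likewise $\beta=0$ is impossible because $f$ is strictly decreasing. If the monotonicity of $g'/f'$ turns out to be delicate, a fallback is to pick the categories $1$ and $s$ together with any middle category and use $s\ge3$ to exploit the distinct cutpoints $\gamma_1<\gamma_{s-1}$. Another fallback is to argue via the second divided differences at the three points directly.
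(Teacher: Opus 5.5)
You have the paper's strategy: log-odds of categories $2$ and $s$ against category $1$, strict curvature of the planar curve they trace (shown by monotonicity of the ratio of their derivatives), and three points of $\mathcal{Y}$, distinct in $u$ since $w\neq 0$, contradicting the collinearity the exponential-family form forces. The paper runs exactly this argument in the variable $z=wy=-u$.

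\textbf{The formula for $g$ is wrong.} This must be fixed before the crux step means anything. Since $\sigma(\gamma_2+u)/\sigma(\gamma_1+u)-1=(e^{\gamma_2-\gamma_1}-1)/(1+e^{\gamma_2+u})$, the correct expression is $g(u)=\log(e^{\gamma_2-\gamma_1}-1)-\log(1+e^{\gamma_2+u})$. Hence $g'(u)=-\sigma(\gamma_2+u)<0$, and $g$ is strictly decreasing. As a check, at $\gamma_1=u=0$, $\gamma_2=1$ your expression gives $0.228$, while the true value is $-0.772$.

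Your $g$ equals the true one plus $u+\gamma_2$, and this shift does not preserve collinearity of triples. A convexity test run on your $g'$ would therefore certify the wrong curve. Worse, that test actually succeeds for your $g'$ as well, so the slip would go unnoticed. Your edge cases are unaffected: they use only strict monotonicity of $g$, and they are subsumed by the no-three-collinear-points statement anyway.

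\textbf{The crux step, with the corrected derivative.} The step you left as the first thing you would try does close, and it is the paper's computation. With $t=e^{u}$ and $a_1=e^{\gamma_1}<a_2=e^{\gamma_2}\le b=e^{\gamma_{s-1}}$,
\[
\frac{f'(u)}{g'(u)}=\frac{1}{a_2t(1+a_1t)}+\frac{1}{1+a_1t}+\frac{b}{a_2}\cdot\frac{1+a_2t}{1+bt}.
\]
The first two terms are strictly decreasing in $t$. The last is non-increasing because $a_2\le b$, and it is constant exactly when $s=3$. So $g'/f'$ is strictly monotone in $u$. Since $f$ is strictly monotone, $g$ is a strictly concave function of $f$ along the curve, and no three of its points are collinear. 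Under $t\mapsto 1/t$ the first two terms together become the paper's $F_2$ and the last becomes its $F_1$. After the correction, your proof coincides with the paper's.
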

\begin{proof}
See Appendix~\ref{app:ord_ne}.
\end{proof}
{Proposition~\ref{prop:ord_ne} shows that on the ordinal support, the ordered logit generates a family that does not admit a one-parameter exponential family representation. At the same time, this makes the edge orientable within the framework developed here, as we show in the following section.}

\section{Identifiability}
\label{sec:id}
This section provides the theoretical analysis of the population-level identifiability of mixed ordinal-exponential Linear Parametric Models (LPMs) as discussed in Section~\ref{sec:bg}. We begin by analyzing a simple bivariate Directed Acyclic Graph (DAG) setting and then extend our findings to multivariate scenarios.

\subsection{Identifiability in Bivariate Case}
\label{ss:id_bivar}
Consider a two-node causal DAG over random variables $X$ and $Y$, whose single edge admits the two competing orientations $\mathcal{M}_{X \to Y}$ and $\mathcal{M}_{Y \to X}$, referred to respectively as the forward and the reverse model throughout the paper. To establish identifiability, we need to show that the two model classes yield distinct joint distributions, which we formalize as follows.
\begin{defn}[Distributional Identifiability of Edge Direction]
\label{def:d_id}
    The edge direction, in a directed bivariate causal model over random
    variables $X$ and $Y$, is \textbf{distributionally identifiable} if the two orientations induce disjoint sets of joint distributions, i.e. $P_{X, Y} \neq Q_{X, Y}$ for all $P_{X, Y} \in \mathcal{M}_{X \to Y}$ and $Q_{X, Y} \in \mathcal{M}_{Y \to X}$.
\end{defn}

We start by fixing the conditional distribution of $X$ to follow the ordered-logit ordinal model with support $\mathcal{X} = \curl{1, \dots, s}$, and the conditional distribution of $Y$ to belong to a regular one-parameter exponential family with known sufficient statistic $T\parn{y}$ and support $\mathcal{Y}$. With a slight abuse of notation, we write $\mathcal{M}_{X \to Y}$ and $\mathcal{M}_{Y \to X}$ also for the sets of joint laws the two models respectively induce on $\mathcal{X} \times \mathcal{Y}$. In the forward model, the edge weight is $w \neq 0$, and the node corresponding to $X$ is the root node with a categorical marginal:
\begin{equation}
    p_X\parn{x; \mathcal{M}_{X \to Y}} = \pi_{x}, \quad \text{with } \pi_{x} > 0
    \ \ \forall x \in \mathcal{X} \ \text{ and } \ \sum_{x \in \mathcal{X}}\pi_{x} = 1,
    \label{eq:fwd_mar_X}
\end{equation}
and $Y \mid X$ follows a regular one-parameter exponential family distribution, with its conditional distribution defined as
\begin{equation}
    p_{Y \mid X}\parn{y \mid x; \mathcal{M}_{X \to Y}}
    = H\parn{y} \exp\!\parn{\eta\!\parn{wx} T\parn{y} - a\!\parn{\eta\!\parn{wx}}},
    \quad \forall y \in \mathcal{Y},\ x \in \mathcal{X},
    \label{eq:fwd_cond_Y}
\end{equation}
where $\eta\parn{\cdot}$ is a known monotone injective link, $H\parn{\cdot}$ the base measure, and $a\parn{\cdot}$ the log-partition function, so the forward class is
\begin{eqnarray}
    p_{X, Y}\parn{x, y; \mathcal{M}_{X \to Y}} = \pi_xH\parn{y} \exp\!\parn{\eta\!\parn{wx} T\parn{y} - a\!\parn{\eta\!\parn{wx}}},
    \quad \forall y \in \mathcal{Y},\ x \in \mathcal{X}.
    \label{eq:fwd_jnt}
\end{eqnarray}

In the reverse model with edge weight $v \neq 0$, the node corresponding to $Y$ is the root node with its marginal fixed to be non-degenerate and positive on $\mathcal{Y}$,
\begin{equation}
    p_{Y}\parn{y; \mathcal{M}_{Y \to X}} > 0, \quad \forall y \in \mathcal{Y}, \qquad \text{and}
    \qquad \sum_{y \in \mathcal{Y}} p_{Y}\parn{y} = 1,
    \label{eq:rev_mar_Y}
\end{equation}
and $X \mid Y$ follows the ordered-logit ordinal model with cutpoints $\gamma_1 < \dots < \gamma_{s-1}$,
\begin{equation}
    p_{X \mid Y}\parn{x \mid y; \mathcal{M}_{Y \to X}}
    = \sigma\!\parn{\gamma_{x} - vy} - \sigma\!\parn{\gamma_{x-1} - vy},
    \quad \forall x \in \mathcal{X},\ y \in \mathcal{Y},
    \label{eq:rev_cond_X}
\end{equation}
where $\sigma\parn{z} = \parn{1 + e^{-z}}^{-1}$ is the sigmoid function. We adopt the boundary conventions $\gamma_0 = -\infty$ and $\gamma_s = +\infty$, giving $\sigma\parn{\gamma_0 - vy} = 0$ and $\sigma\parn{\gamma_s - vy} = 1$. The reverse class is thus
\begin{eqnarray}
    p_{X, Y}\parn{x, y; \mathcal{M}_{Y \to X}} = p_{Y}\parn{y; \mathcal{M}_{Y \to X}}\parn{\sigma\!\parn{\gamma_{x} - vy} - \sigma\!\parn{\gamma_{x-1} - vy}},
    \quad \forall x \in \mathcal{X},\ y \in \mathcal{Y}.
    \label{eq:rev_jnt}
\end{eqnarray} 

To compare the two classes, we focus on the conditional distribution of $X \mid Y$. In the reverse model \eqref{eq:rev_cond_X} supplies the conditional directly, while in the forward model, Bayes' theorem obtains this conditional from the marginal \eqref{eq:fwd_mar_X} and the $Y \mid X$ conditional \eqref{eq:fwd_cond_Y}. Since a categorical law on $\mathcal{X}$ is determined uniquely by its log-odds relative to a fixed reference category, comparing conditional log-ratios suffices. For any $u, x \in \mathcal{X}$ and any $y \in \mathcal{Y}$, define the conditional \textbf{log-odds-ratio}
\begin{eqnarray}
    \label{eq:log_odds_ratio}
    R\parn{y, u, x; \mathcal{M}} \triangleq \log\parn{\frac{p_{X \mid Y}\parn{u \mid y; \mathcal{M}}}{p_{X \mid Y}\parn{x \mid y; \mathcal{M}}}},
\end{eqnarray}
where $\mathcal{M} \in \curl{\mathcal{M}_{X \to Y}, \mathcal{M}_{Y \to X}}$. The following lemma establishes an important restriction of the forward model.
\begin{lem}
\label{lem:fwd_affine}
In the forward model $\mathcal{M}_{X \to Y}$ with edge weight $w \neq 0$,
where the marginal distribution of $X$ follows a categorical distribution on
support $\mathcal{X} = \curl{1, \dots, s}$ given by \eqref{eq:fwd_mar_X}, and
$Y \mid X$ follows a regular one-parameter exponential family distribution given
by \eqref{eq:fwd_cond_Y}, \textbf{the log-odds-ratio \eqref{eq:log_odds_ratio} is affine
in the sufficient statistic $T\parn{y}$}, i.e.,
\begin{equation}
    R\parn{y, u, x; \mathcal{M}_{X \to Y}} = \alpha\, T\parn{y} + \beta, \quad \forall\, u, x \in \mathcal{X}, u \neq x
    \text{ and } y \in \mathcal{Y},
    \label{eq:forward_lpr}
\end{equation}
where $\alpha \neq 0$ and $\beta$ are independent of $y$ and are given by
\begin{equation}
    \alpha = \eta\parn{wu} - \eta\parn{wx}, \qquad
    \beta = a\parn{\eta\parn{wx}} - a\parn{\eta\parn{wu}}
    + \log\!\parn{\frac{\pi_u}{\pi_x}}.
\end{equation}
\end{lem}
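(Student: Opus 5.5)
The plan is to write the forward conditional $p_{X\mid Y}$ via Bayes' theorem, form the ratio for two categories $u \neq x$, and observe that every $y$-dependent factor other than the exponential terms cancels. Concretely, by \eqref{eq:fwd_jnt}, for each $y \in \mathcal{Y}$ the marginal $p_Y(y) = \sum_{x' \in \mathcal{X}} \pi_{x'} H(y) \exp\parn{\eta(wx')T(y) - a(\eta(wx'))}$ is strictly positive, since $\pi_{x'} > 0$, $H(y) > 0$ and the exponential is positive. Hence
\begin{equation*}
p_{X\mid Y}(u \mid y; \mathcal{M}_{X\to Y}) = \frac{\pi_u H(y)\exp\parn{\eta(wu)T(y) - a(\eta(wu))}}{p_Y(y)}
\end{equation*}
is well defined and positive for every $u$. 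In particular the logarithm in \eqref{eq:log_odds_ratio} is finite.

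Taking the ratio for $u$ and $x$, the normalizer $p_Y(y)$ and the base measure $H(y)$ cancel. Taking logarithms gives
\begin{equation*}
R(y,u,x;\mathcal{M}_{X\to Y}) = \log\frac{\pi_u}{\pi_x} + \parn{\eta(wu) - \eta(wx)}T(y) - a(\eta(wu)) + a(\eta(wx)).
\end{equation*}
Reading off the coefficients gives exactly the stated $\alpha$ and $\beta$. Neither depends on $y$, because $\pi$, $\eta$, $a$, $w$, $u$ and $x$ are all fixed.

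The only nontrivial point is $\alpha \neq 0$. Since $u \neq x$ and $w \neq 0$, we have $wu \neq wx$. Since $\eta$ is injective (strictly monotone), it follows that $\eta(wu) \neq \eta(wx)$. One should also check that $\eta(wu)$ and $\eta(wx)$ lie in the natural parameter space, so that $a$ is finite. This is implicit in the model being well defined through \eqref{eq:fwd_cond_Y}, and I would state it as part of the standing regularity assumption. I expect no real obstacle; the care lies only in justifying positivity of $p_Y(y)$ so that the conditional and its logarithm exist.
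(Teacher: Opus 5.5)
Your proposal is correct and follows essentially the same route as the paper's proof. The paper likewise first shows that the joint, and hence $p_Y(y)$, is strictly positive, then reduces the posterior ratio to the joint ratio so that $p_Y(y)$ and $H(y)$ cancel, reads off $\alpha$ and $\beta$, and gets $\alpha \neq 0$ from the injectivity of $\eta$ together with $w \neq 0$ and $u \neq x$.
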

\begin{proof}
See Appendix \ref{app:proof_fwd_affine}.
\end{proof}

We now turn to the reverse model, whose conditional \eqref{eq:rev_cond_X} is strictly positive on $\mathcal{X}$ for every $y \in \mathcal{Y}$ by the strict ordering of the cutpoints and the strict monotonicity of the sigmoid. It is important to note that $T\parn{y}$ carries no distinguished role in the reverse model since $Y$ is not constrained to an exponential family. Therefore, $T\parn{y}$ enters only as a fixed function on $\mathcal{Y}$ for the reverse model. However, if the two models were to induce a common joint law, then Lemma~\ref{lem:fwd_affine} would force every reverse log-odds-ratio to be affine in that same $T\parn{y}$, although nothing but the sigmoid geometry of the cutpoints governs those ratios. We analyze the reverse model by focusing on the two adjacent category pairs whose log-odds ratios admit closed forms with opposite curvature, an important observation that helps settle the affineness argument for the reverse model.
\begin{lem}
\label{lem:rev_bounds}
For the reverse model $\mathcal{M}_{Y \to X}$ with edge weight $v \neq 0$, where $X \mid Y$ follows the ordered-logit ordinal conditional given in \eqref{eq:rev_cond_X} with cutpoints $-\infty = \gamma_0 < \gamma_1 < \cdots < \gamma_{s-1} < \gamma_s = +\infty$ and support $\mathcal{X} = \curl{1, \dots, s}$ where $s \geq 3$, the extreme adjacent log-odds-ratios $B\parn{y} \triangleq R\parn{y, 2, 1; \mathcal{M}_{Y \to X}}$ and $U\parn{y} \triangleq R\parn{y, s, s-1; \mathcal{M}_{Y \to X}}$ admit the closed forms
\begin{equation}
    B\parn{y} = c_B - \log\parn{1 + e^{\gamma_2 - vy}}, \quad
    U\parn{y} = c_U + \log\parn{1 + e^{vy - \gamma_{s-2}}}, \quad
    \forall\, y \in \mathcal{Y},
    \label{eq:b_u_def}
\end{equation}
for finite constants $c_B$ and $c_U$ independent of $y$ given by
\begin{equation}
    c_B = \log\parn{\frac{e^{\gamma_2} - e^{\gamma_1}}{e^{\gamma_1}}}, \quad
    c_U = \log\parn{\frac{e^{\gamma_{s-2}}}{e^{\gamma_{s-1}} - e^{\gamma_{s-2}}}}.
\end{equation}
Moreover, the right-hand sides of \eqref{eq:b_u_def} are defined for every $y \in \mathbb{R}$, thus, for the real line extensions $\bar{B}, \bar{U} : \mathbb{R} \to \mathbb{R}$ such that $B = \bar{B}\big\vert_{\mathcal{Y}}$ and $U = \bar{U}\big\vert_{\mathcal{Y}}$, \textbf{$\bar{B}$ is strictly concave and $\bar{U}$ is strictly convex on $\mathbb{R}$}.
\end{lem}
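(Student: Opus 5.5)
The plan is a direct computation: rewrite each ordered-logit probability as a rational function of the single variable $z \triangleq e^{vy} > 0$, form the two adjacent ratios, and read off the softplus structure. Since $\sigma\parn{\gamma - vy} = e^{\gamma}/\parn{e^{\gamma} + z}$ for finite $\gamma$, the edge probabilities become $p_{X\mid Y}\parn{1 \mid y} = e^{\gamma_1}/\parn{e^{\gamma_1} + z}$ and $p_{X\mid Y}\parn{s \mid y} = 1 - \sigma\parn{\gamma_{s-1} - vy} = z/\parn{e^{\gamma_{s-1}} + z}$. Each interior category $x$ is a difference of two such terms over a common denominator:
\begin{equation*}
p_{X\mid Y}\parn{x \mid y} = \frac{z\parn{e^{\gamma_x} - e^{\gamma_{x-1}}}}{\parn{e^{\gamma_x} + z}\parn{e^{\gamma_{x-1}} + z}}, \quad 2 \le x \le s-1.
\end{equation*}
This is where $s \geq 3$ enters: it guarantees that categories $2$ and $s-1$ are interior, so $\gamma_2$ and $\gamma_{s-2}$ are finite cutpoints (possibly with $\gamma_2 = \gamma_{s-1}$ and $\gamma_{s-2} = \gamma_1$ when $s = 3$, which is harmless). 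Strict ordering of the cutpoints makes every probability strictly positive, so the logarithms are well defined.

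Next I would form the two ratios, where the common factors $z$ and $\parn{e^{\gamma_1} + z}$, respectively $\parn{e^{\gamma_{s-1}} + z}$, cancel. For the bottom pair this gives $p\parn{2\mid y}/p\parn{1\mid y} = \frac{e^{\gamma_2} - e^{\gamma_1}}{e^{\gamma_1}} \cdot \frac{z}{e^{\gamma_2} + z}$. Writing $z/\parn{e^{\gamma_2} + z} = \parn{1 + e^{\gamma_2 - vy}}^{-1}$ and taking logs yields $B\parn{y} = c_B - \log\parn{1 + e^{\gamma_2 - vy}}$. For the top pair, $p\parn{s\mid y}/p\parn{s-1\mid y} = \parn{e^{\gamma_{s-2}} + z}/\parn{e^{\gamma_{s-1}} - e^{\gamma_{s-2}}}$. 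Factoring out $e^{\gamma_{s-2}}$ gives $U\parn{y} = c_U + \log\parn{1 + e^{vy - \gamma_{s-2}}}$. Both constants are finite and free of $y$ because $\gamma_1 < \gamma_2$ and $\gamma_{s-2} < \gamma_{s-1}$.

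For the extensions, both right-hand sides are compositions of the softplus $\zeta\parn{t} = \log\parn{1 + e^t}$ with affine maps of $y$, so they are smooth on all of $\mathbb{R}$. Since $\zeta''\parn{t} = \sigma\parn{t}\parn{1 - \sigma\parn{t}} > 0$, the chain rule gives
\begin{equation*}
\bar{B}''\parn{y} = -v^2 \sigma\parn{\gamma_2 - vy}\parn{1 - \sigma\parn{\gamma_2 - vy}} < 0, \qquad \bar{U}''\parn{y} = v^2 \sigma\parn{vy - \gamma_{s-2}}\parn{1 - \sigma\parn{vy - \gamma_{s-2}}} > 0,
\end{equation*}
where strictness uses $v \neq 0$. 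A strictly negative (positive) second derivative everywhere gives strict concavity (convexity).

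There is no real obstacle. The only points needing care are the bookkeeping of which cutpoint survives the cancellation in each ratio, and noting explicitly that $s \geq 3$ and $v \neq 0$ are exactly what keep the cutpoints finite and the curvature strict.
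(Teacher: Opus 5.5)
Your proposal is correct and follows essentially the same route as the paper's proof. Both rewrite each ordered-logit probability via $\sigma(\gamma_k - vy) = e^{\gamma_k}/(e^{\gamma_k} + e^{vy})$, cancel the common factors in the two extreme adjacent ratios to get the softplus closed forms, and differentiate twice, with $v \neq 0$ giving strict curvature and $s \geq 3$ keeping $\gamma_2$ and $\gamma_{s-2}$ (hence $c_B$ and $c_U$) finite.
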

\begin{proof}
See Appendix \ref{app:proof_rev_bounds}.
\end{proof}

The two lemmas are found to impose incompatible requirements on a single quantity: Lemma~\ref{lem:fwd_affine} confines every forward log-odds-ratio to a straight line in $T\parn{y}$, while Lemma~\ref{lem:rev_bounds} bends the two extreme reverse ones in opposite directions. Eliminating $T\parn{y}$ between the two extremes, as defined in Lemma~\ref{lem:rev_bounds}, leaves a function of $y$ whose shape the cutpoints and the reverse edge weight settle by themselves, and which the elimination forces to be constant on $\mathcal{Y}$, whereas strict curvature permits a constant value at no more than two points. This creates a contradiction, which leads us to the identifiability result formally stated in Theorem~\ref{thm:general_id}.
\begin{thm}
\label{thm:general_id}
Let $\mathcal{M}_{X \to Y}$ be the forward model with edge weight $w \neq 0$, in which $X$ follows the categorical marginal \eqref{eq:fwd_mar_X} on $\mathcal{X} = \curl{1, \dots, s}$ with $s \geq 3$, and $Y \mid X$ follows the regular one-parameter exponential family conditional \eqref{eq:fwd_cond_Y} with sufficient statistic $T\parn{y}$ and support $\mathcal{Y}$. Let $\mathcal{M}_{Y \to X}$ be the reverse model with edge weight $v \neq 0$, in which $Y$ follows the positive marginal \eqref{eq:rev_mar_Y} on $\mathcal{Y}$, and $X \mid Y$ follows the ordered-logit conditional \eqref{eq:rev_cond_X} with cutpoints $-\infty = \gamma_0 < \gamma_1 < \cdots < \gamma_{s-1} < \gamma_s = +\infty$. If the support contains at least three points, $\bars{\mathcal{Y}} \geq 3$, then \textbf{$\mathcal{M}_{X \to Y}$ and $\mathcal{M}_{Y \to X}$ are distributionally identifiable}.
\end{thm}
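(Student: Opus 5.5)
We argue by contradiction. Suppose some forward law $P_{X,Y} \in \mathcal{M}_{X \to Y}$ coincides with some reverse law $Q_{X,Y} \in \mathcal{M}_{Y \to X}$. Both laws put positive mass on every $y \in \mathcal{Y}$: under the reverse model by \eqref{eq:rev_mar_Y}, and under the forward model because $\pi_x > 0$ and $H(y) > 0$. Hence the conditionals $p_{X \mid Y}(\cdot \mid y)$ agree for every $y \in \mathcal{Y}$, and so do all log-odds-ratios $R(y,u,x;\cdot)$.

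We apply Lemma~\ref{lem:fwd_affine} to the two adjacent pairs $(u,x) = (2,1)$ and $(u,x) = (s,s-1)$. This gives
\[
B(y) = \alpha_1 T(y) + \beta_1, \qquad U(y) = \alpha_2 T(y) + \beta_2
\]
on $\mathcal{Y}$, where $\alpha_1 = \eta(2w) - \eta(w)$ and $\alpha_2 = \eta(sw) - \eta((s-1)w)$. Both are nonzero by injectivity of $\eta$, since $w \neq 0$. Because $s \geq 3$, the two pairs are distinct, though for $s=3$ they share the middle category; this does not matter. Eliminating $T$ gives
\[
\alpha_2 B(y) - \alpha_1 U(y) = \alpha_2\beta_1 - \alpha_1\beta_2 =: c \quad \text{for all } y \in \mathcal{Y}.
\]
We then invoke Lemma~\ref{lem:rev_bounds} and set $\bar F(y) = \alpha_2 \bar B(y) - \alpha_1 \bar U(y)$ on $\mathbb{R}$.

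The key step, which I expect to be the main obstacle, is showing that $\bar F$ cannot take the value $c$ at three distinct points. The signs of $\alpha_1$ and $\alpha_2$ matter here. Since $\eta$ is strictly monotone, $\alpha_1$ and $\alpha_2$ both have the sign of $w\cdot(\text{monotonicity direction of }\eta)$, so they have the same sign. With the same sign, $\alpha_2\bar B$ and $-\alpha_1\bar U$ are both strictly concave when $\alpha_1, \alpha_2 > 0$, and both strictly convex when $\alpha_1, \alpha_2 < 0$. Thus $\bar F$ is strictly concave or strictly convex, and a strictly concave or convex function equals a constant at most at two points. Since $|\mathcal{Y}| \geq 3$, we can pick three distinct points of $\mathcal{Y}$ at which $\bar F = c$, which is a contradiction.

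If one wants more than a sign argument, one can compute directly:
\[
\bar B''(y) = -v^2\sigma(\gamma_2 - vy)\sigma(vy-\gamma_2), \qquad \bar U''(y) = v^2\sigma(vy-\gamma_{s-2})\sigma(\gamma_{s-2}-vy).
\]
Both are nonzero for every $y$ because $v \neq 0$, so $\bar F'' = \alpha_2 \bar B'' - \alpha_1\bar U''$ has a strict sign everywhere.

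The subtle points to verify are these. First, $\eta$ being strictly monotone (and not merely injective) is what gives $\operatorname{sign}\alpha_1 = \operatorname{sign}\alpha_2$. Second, no regularity of $T$ is used, since $T$ has been eliminated entirely. Third, the case $\mathcal{Y}$ discrete is harmless, because the strict-curvature argument works on the real-line extension evaluated at the points of $\mathcal{Y}$. The contradiction shows that $\mathcal{M}_{X\to Y}$ and $\mathcal{M}_{Y\to X}$ are disjoint, which is distributional identifiability in the sense of Definition~\ref{def:d_id}.
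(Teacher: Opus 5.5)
Your proposal is correct and follows the paper's first proof. Both argue by contradiction, apply Lemma~\ref{lem:fwd_affine} to the pairs $(2,1)$ and $(s,s-1)$, eliminate $T$ through $\alpha_{s,s-1}\bar{B} - \alpha_{2,1}\bar{U}$, and use the opposing curvature of Lemma~\ref{lem:rev_bounds}. The only difference is the last step. You conclude by strict concavity or convexity in $y$ from $\operatorname{sign}(\alpha_{2,1}) = \operatorname{sign}(\alpha_{s,s-1})$, which is exactly the argument the paper gives for Proposition~\ref{prop:general_link}. The paper's proof of this theorem instead passes to $t = e^{vy}$ and shows that $\Psi$ has a single critical point on $(0,\infty)$, since the associated quadratic has roots of opposite sign.
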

\begin{proof}
Two proofs are given in Appendix~\ref{app:proof_general_id}, the first proceeding from the opposing curvature of Lemma~\ref{lem:rev_bounds} and the second from Proposition~\ref{prop:ord_ne}.
\end{proof}

{Theorem~\ref{thm:general_id} imposes no condition on the forward model sufficient statistic $T\parn{y}$, beyond the minimality already required in Section~\ref{sec:bg}, so a single result governs every regular one-parameter exponential family, over finite, countable, and continuous supports alike. The LPM setting calls for exactly such generality, since a graph whose exponential-family nodes carry distinct families admits no reduction to a canonical case, and a criterion stated in terms of $T$ would have to be verified edge by edge. The guarantee also holds for every parameter value, rather than only outside an exceptional set, in contrast to \cite{shaska2025ordinal}. It is also important to note that guarantees based on generic identifiability, which prove identifiability outside an exceptional set, cannot be reliably used on real datasets whose parameters are unknown. Such issues are mitigated under distributional identifiability, which holds for every valid parameter in the parameter set.}

{Appendix~\ref{app:proof_general_id} gives two proofs for Theorem~\ref{thm:general_id}, and both of them are important for establishing two distinct ideas. The first proof provides direct intuition for the requirement of $\lvert \mathcal{Y} \rvert \geq 3$, and more importantly, it also paves the way for a sufficient condition on any cumulative link beyond the ordered-logit, as formalized in Proposition~\ref{prop:general_link}. The second proof relies directly on Proposition~\ref{prop:ord_ne} and is the shorter one, exhibiting the asymmetry in its plainest form, at the cost of being tied to the logit through the algebra that establishes that proposition.}
\begin{prop}
\label{prop:general_link}
Let the forward model and the remaining hypotheses of Theorem~\ref{thm:general_id} be unchanged, and let the ordered-logit conditional \eqref{eq:rev_cond_X} of the reverse model be replaced by
\begin{equation}
    p_{X \mid Y}\parn{x \mid y; \mathcal{M}_{Y \to X}} = F\parn{\gamma_{x} - vy} - F\parn{\gamma_{x-1} - vy}, \quad \forall\, x \in \mathcal{X},\ y \in \mathcal{Y},
    \label{eq:gen_link_cond}
\end{equation}
for a continuous and strictly increasing cumulative link $F : \mathbb{R} \to \parn{0, 1}$, the boundary conventions $\gamma_0 = -\infty$ and $\gamma_s = +\infty$ being retained, so that $F\parn{\gamma_0 - vy} = 0$ and $F\parn{\gamma_s - vy} = 1$. Write
\begin{equation}
    \bar{B}\parn{y} = \log \parn{\frac{F\parn{\gamma_2 - vy} - F\parn{\gamma_1 - vy}}{F\parn{\gamma_1 - vy}}}, \quad
    \bar{U}\parn{y} = \log \parn{\frac{1 - F\parn{\gamma_{s-1} - vy}}{F\parn{\gamma_{s-1} - vy} - F\parn{\gamma_{s-2} - vy}}},
    \label{eq:gen_link_bu}
\end{equation}
for the two extreme adjacent log-odds-ratios of \eqref{eq:gen_link_cond}, regarded as functions on all of $\mathbb{R}$. If, for every $v \neq 0$ and every strictly ordered set of cutpoints, $\bar{B}$ is strictly concave and $\bar{U}$ strictly convex on $\mathbb{R}$, then \textbf{$\mathcal{M}_{X \to Y}$ and $\mathcal{M}_{Y \to X}$ are distributionally identifiable}.
\end{prop}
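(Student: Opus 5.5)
Suppose, toward a contradiction, that some forward law $P \in \mathcal{M}_{X\to Y}$ equals some reverse law $Q \in \mathcal{M}_{Y\to X}$. The plan is to show that this forces a function of $y$ with strict curvature to be constant on at least three points of $\mathcal{Y}$, which is impossible.

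Since both laws put positive mass on every $y\in\mathcal{Y}$ (forward by positivity of $\pi_x$ and $H$, reverse by \eqref{eq:rev_mar_Y}), the conditionals of $X$ given $Y=y$ agree for every $y\in\mathcal{Y}$. Hence all log-odds-ratios \eqref{eq:log_odds_ratio} coincide. The reverse conditional \eqref{eq:gen_link_cond} is strictly positive by strict monotonicity of $F$ and the strict ordering of the cutpoints, so these ratios are finite. Applying Lemma~\ref{lem:fwd_affine}, whose proof uses only the forward model and so is unaffected by the change of link, with $(u,x)=(2,1)$ and $(u,x)=(s,s-1)$ gives
\[
\bar B(y)=\alpha_1 T(y)+\beta_1,\qquad \bar U(y)=\alpha_2 T(y)+\beta_2 \quad \text{for all } y\in\mathcal{Y},
\]
with $\alpha_1=\eta(2w)-\eta(w)\neq 0$ and $\alpha_2=\eta(sw)-\eta((s-1)w)\neq0$. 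Here $s\ge3$ guarantees that the pairs $(2,1)$ and $(s,s-1)$ are adjacent pairs at the two extremes, as in \eqref{eq:gen_link_bu}. Eliminating $T$ yields
\[
G(y)\triangleq \alpha_2\bar B(y)-\alpha_1\bar U(y)=\alpha_2\beta_1-\alpha_1\beta_2,
\]
so $G$ is constant on $\mathcal{Y}$.

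The key step is a sign analysis of $\alpha_1$ and $\alpha_2$. Because $\eta$ is strictly monotone and $w\neq0$, the sequence $\eta(w),\eta(2w),\dots,\eta(sw)$ is strictly monotone. Therefore $\alpha_1$ and $\alpha_2$ are nonzero and have the same sign.

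First suppose both are positive. Then $\alpha_2\bar B$ is strictly concave and $-\alpha_1\bar U$ is strictly concave, so $G$ is strictly concave on $\mathbb{R}$. If both are negative, $G$ is strictly convex by the same argument. In either case $G$ has strict curvature on all of $\mathbb{R}$, which is exactly what the hypothesis supplies for every $v\neq0$ and every set of cutpoints.

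A strictly concave (or convex) function on $\mathbb{R}$ takes any given value at most twice. To see this, take three points $y_1<y_2<y_3$ with equal values. Writing $y_2$ as a convex combination of $y_1$ and $y_3$, strict concavity gives $G(y_2)>G(y_1)$, a contradiction; strict convexity gives the reverse inequality, again a contradiction. Since $|\mathcal{Y}|\ge3$, this contradicts $G$ being constant on $\mathcal{Y}$. Hence no common law exists, and the two classes are disjoint as required by Definition~\ref{def:d_id}.

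The main obstacle is making sure that the elimination produces curvature in a single direction. This relies on $\alpha_1$ and $\alpha_2$ having the same sign, which comes from the monotonicity of $\eta$ evaluated along the ordered category values $wx$. If the signs could differ, the two curvatures would cancel rather than reinforce, and the argument would fail. A secondary point to check is that $T$ being possibly non-injective or discrete causes no trouble, since the argument only uses values of $T$ at points of $\mathcal{Y}$ and never requires inverting it.
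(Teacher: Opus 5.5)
Your proposal is correct and follows the paper's proof essentially step for step. Both arguments equate the conditionals of $X$ given $Y$, apply Lemma~\ref{lem:fwd_affine} to the pairs $(2,1)$ and $(s,s-1)$, and eliminate $T$ to obtain a combination of $\bar{B}$ and $\bar{U}$ that is constant on $\mathcal{Y}$. They then use the common sign of $\alpha_{2,1}$ and $\alpha_{s,s-1}$, which follows from the strict monotonicity of $\eta$, to make that combination strictly concave or strictly convex, and reach a contradiction through the three points supplied by $\bars{\mathcal{Y}} \geq 3$.
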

\begin{proof}
See Appendix~\ref{app:proof_general_link}.
\end{proof}

We note that Proposition~\ref{prop:general_link} provides only a sufficient condition, and its necessity remains out of the scope of this paper. Proposition~\ref{prop:general_link}'s hypothesis isolates the single property of the logit that the first proof of Theorem~\ref{thm:general_id} uses, and Lemma~\ref{lem:rev_bounds} shows the logit to possess that property. We investigate in the following subsection whether the thresholds $s \geq 3$ and $\bars{\mathcal{Y}} \geq 3$ are necessary, or merely sufficient for the arguments given.

\subsection{Analysis of Non-Identifiability}
\label{subsec:non_id}

We start by analyzing the necessity of the $\bars{\mathcal{Y}} \geq 3$ condition. Intuitively, when the support of $Y$ is only two points, the constraints linking the two models become too few to force a unique orientation. The following result establishes the necessity of the support condition on the exponential-family node for key cases.
\begin{thm}
\label{thm:converse_two_points}
Let $\mathcal{M}_{X \to Y}$ be the forward model with edge weight $w \neq 0$, in which $X$ follows the categorical marginal \eqref{eq:fwd_mar_X} on $\mathcal{X} = \curl{1, \dots, s}$ with $s \geq 3$, and $Y \mid X$ follows the regular one-parameter exponential family conditional \eqref{eq:fwd_cond_Y} with sufficient statistic $T\parn{y}$ and support $\mathcal{Y}$. Let $\mathcal{M}_{Y \to X}$ be the reverse model with edge weight $v \neq 0$, in which $Y$ follows the positive marginal \eqref{eq:rev_mar_Y} on $\mathcal{Y}$, and $X \mid Y$ follows the ordered-logit conditional \eqref{eq:rev_cond_X} with cutpoints $-\infty = \gamma_0 < \gamma_1 < \cdots < \gamma_{s-1} < \gamma_s = +\infty$. If the support of $Y$ contains exactly two points, $\mathcal{Y} = \curl{y_1, y_2}$ with $y_1 < y_2$, the forward model sufficient statistic is affine in $y$, i.e. $T\parn{y} = c_1 y + c_0$ with $c_0, c_1 \in \mathbb{R}$ and $c_1 \neq 0$, and the link is identity, i.e. $\eta\parn{z} = z$, then there exists a two-parameter family of forward parameters $\parn{\pi, w}$ with $w \neq 0$ and reverse parameters $\parn{\gamma, v}$ with $v \neq 0$ that induce identical joint distributions on $\mathcal{X} \times \mathcal{Y}$. \textbf{The direction of the edge is therefore not distributionally identifiable.}
\end{thm}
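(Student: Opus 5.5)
The plan is to build one joint law on $\mathcal{X}\times\{y_1,y_2\}$ that lies in both classes, working directly from the two-point geometry. Write $\kappa \triangleq c_1(y_2-y_1) \neq 0$ and $h \triangleq \log\parn{H(y_2)/H(y_1)}$.

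First, I reduce the forward class to a simple form. With $\eta(z)=z$ and $T$ affine, the forward conditional $Y\mid X$ on two points is Bernoulli with logit
\[
\log\frac{p(y_2\mid x)}{p(y_1\mid x)} = \kappa w x + h .
\]
By Bayes' theorem, a joint law is forward-representable exactly when $p_{X\mid Y}(\cdot\mid y_2)$ is an exponential tilt $p_{X\mid Y}(x\mid y_2)\propto p_{X\mid Y}(x\mid y_1)\,e^{\lambda x}$ with $\lambda=\kappa w\neq 0$. The factor $e^{h}$ and the normalizing constant are absorbed into the ratio $p_Y(y_2)/p_Y(y_1)$. This is the two-point specialization of Lemma~\ref{lem:fwd_affine}.

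On the reverse side, write $c_x \triangleq \gamma_x - v y_1$ and $\Delta \triangleq -v(y_2-y_1)$. Then the cumulative logits at $y_2$ are $c_x+\Delta$. Let $p^{(1)}$ and $p^{(2)}$ denote the ordered-logit category probabilities built from $\{c_x\}$ and $\{c_x+\Delta\}$. The problem therefore reduces to a single condition: find strictly increasing $c_1<\dots<c_{s-1}$ and $\Delta\neq 0$ such that
\[
r_x \triangleq \frac{p^{(2)}_x}{p^{(1)}_x} = r_1 e^{\lambda(x-1)}, \qquad x=1,\dots,s,
\]
for some $\lambda\neq 0$.

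Given such $(c,\Delta,\lambda)$, I assemble both parameter sets as follows.
\begin{itemize}
\item Take $v=-\Delta/(y_2-y_1)$ and $\gamma_x=c_x+vy_1$.
\item Choose any positive $p_Y$ on $\{y_1,y_2\}$.
\item Form the joint law. Its $X$-marginal gives $\pi$, which is automatically positive.
\item Set $w=\lambda/\kappa$. The geometric ratio makes the logit of $Y\mid X$ equal to $\lambda x + \text{const}$.
\item The constant must equal exactly $h$. This is one scalar equation, and I satisfy it by choosing $p_Y(y_2)/p_Y(y_1)=e^{h}/r_1$ (up to the $e^{\lambda}$ shift convention).
\end{itemize}
The remaining freedom, namely the location of $c_1$ and the size of $\Delta$, supplies the two-parameter family claimed in Theorem~\ref{thm:converse_two_points}.

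The core step, and the main obstacle, is solving the geometric-ratio condition. I would do it by a recursive shooting argument. Fix $\Delta>0$. Then $p^{(2)}$ is stochastically smaller than $p^{(1)}$, which forces $\lambda<0$. Fix $c_1$, which determines $r_1=\sigma(c_1+\Delta)/\sigma(c_1)$, and treat $\lambda<0$ as the shooting parameter. Given $c_{x-1}$, I seek $c_x>c_{x-1}$ with
\[
g_x(c) \triangleq \frac{\sigma(c+\Delta)-\sigma(c_{x-1}+\Delta)}{\sigma(c)-\sigma(c_{x-1})} = r_1e^{\lambda(x-1)} .
\]
The function $g_x$ is continuous on $(c_{x-1},\infty)$. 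It tends to $\sigma'(c_{x-1}+\Delta)/\sigma'(c_{x-1})$ as $c\downarrow c_{x-1}$ and to $(1-\sigma(c_{x-1}+\Delta))/(1-\sigma(c_{x-1}))$ as $c\to\infty$. I expect log-concavity of $\sigma$ to make $g_x$ strictly decreasing, so the intermediate value theorem gives a unique $c_x$ whenever the target lies between these two limits. After $c_{s-1}$ is built, one closure equation remains:
\[
\frac{1-\sigma(c_{s-1}+\Delta)}{1-\sigma(c_{s-1})} = r_1 e^{\lambda(s-1)} .
\]
I would show that the difference of the two sides is continuous in $\lambda$ and changes sign. At $\lambda$ slightly below $0$, the recursion pushes $c_{s-1}$ to $+\infty$, or the target exits the range early. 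For $\lambda\to-\infty$, the targets collapse towards the lower limits and the $c_x$ crowd together. A further intermediate value argument then yields an admissible $\lambda$.

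The delicate points are verifying that the targets stay inside the admissible range at each step, and controlling the sign change at the ends. If this becomes messy, my fallback is the case $s=3$ treated by hand. There only $c_1,c_2$ and one closure equation appear, and an explicit symmetric choice $c_2=-c_1-\Delta$ should make the closure automatic. I would then extend to general $s$ by the same symmetry $c_{s-x}=-c_x-\Delta$, which makes the problem invariant under $x\mapsto s+1-x$ and halves the closure conditions.
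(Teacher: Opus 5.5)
Your reduction and assembly are correct. On two points the forward class consists exactly of the joints in which $p_{X\mid Y}(\cdot\mid y_2)$ is an $e^{\lambda x}$-tilt of $p_{X\mid Y}(\cdot\mid y_1)$, with the leftover scalar absorbed by the free ratio $p_Y(y_2)/p_Y(y_1)$, and your recovery of $(\pi,w)$ and $p_Y$ goes through. The gap is the core step: existence of cutpoints making $r_x$ geometric for every $s\ge 3$. You leave this to a shooting argument, and as stated it fails.

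Put $\phi(c)\triangleq\log\bigl(\sigma(c+\Delta)/\sigma(c)\bigr)$. For $\Delta>0$ this is a strictly decreasing bijection of $\mathbb{R}$ onto $(0,\Delta)$. A direct computation gives $\log r_x=\phi(c_x)+\phi(c_{x-1})-\Delta$, with $\phi(c_0)=\Delta$ and $\phi(c_s)=0$. So your condition is equivalent to $\phi(c_x)-\phi(c_{x-2})=\lambda$ for $x=2,\dots,s$.
\begin{itemize}
\item For $s=2m$ this forces $\lambda=-\Delta/m$, and it is solvable only if $\phi(c_1)>(m-1)\Delta/m$.
\item For $s=2m+1$ it forces $\lambda=-\phi(c_1)/m$, and it requires $\phi(c_1)>m\Delta/(m+1)$.
\end{itemize}
Hence for fixed $\Delta$ and a $c_1$ that is not negative enough, no $\lambda$ closes the recursion. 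This contradicts your intermediate value claim, and it shows $(c_1,\Delta)$ are not both free.

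Your end-point heuristics are also reversed. For $\lambda\in(\phi(c_1)-\Delta,0)$ the target $r_1e^{\lambda}$ exceeds $\sup g_2=\sigma'(c_1+\Delta)/\sigma'(c_1)=r_1e^{\phi(c_1)-\Delta}$. So the recursion already fails at $x=2$, with $c_2$ collapsing onto $c_1$. Conversely, as $\lambda\downarrow-\Delta$ it is $c_2$ that escapes to $+\infty$.

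Your $s=3$ fallback does work. Since $\phi(-c-\Delta)=\Delta-\phi(c)$, the choice $c_2=-c_1-\Delta$ gives exactly the unique solution $\phi(c_2)=\Delta-\phi(c_1)$, provided $c_1<-\Delta/2$. For larger $s$, however, the symmetry only pairs up the equations without solving them. For $s=4$ it forces $c_2=-\Delta/2$ and leaves $\phi(c_1)=3\Delta/4$ to be solved. For every even $s$ it picks out the midpoint of the admissible $\phi(c_1)$-interval, so it yields only a one-parameter subfamily rather than the claimed two-parameter one.

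The missing idea is the paper's linearizing substitution $D_j=\log\bigl((e^{\gamma_j}+t_2)/(e^{\gamma_j}+t_1)\bigr)$ with $t_i=e^{vy_i}$; in your notation this is $-\phi(c_j)$. In these variables the reverse log-odds increments are $(D_0+D_1)-(D_x+D_{x-1})$. Matching the forward increments $w(x-1)\kappa$ therefore becomes the linear recursion $D_i-D_{i+2}=w\kappa$. This gives two interleaved arithmetic chains pinned at $D_0=v(y_2-y_1)$ and $D_s=0$. Strict monotonicity of $D_j$ in $\gamma_j$ then converts the ordering of the cutpoints into explicit open intervals. The free scalars are $D_1$ for even $s$ or $v$ for odd $s$, together with $w$. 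This settles existence and the two-parameter family in closed form, with no shooting.
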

\begin{proof}
See Appendix~\ref{app:proof_converse_two_points}
\end{proof}

Theorem~\ref{thm:converse_two_points} shows the requirement $\bars{\mathcal{Y}} \geq 3$ cannot be dropped from Theorem~\ref{thm:general_id}. Its assumptions are weaker than they appear. Since two distinct points are always collinear, every non-constant sufficient statistic is affine on a two-point support. Therefore, the minimality already assumed in Section~\ref{sec:bg} supplies the affineness, leaving the canonical link as the only genuine restriction added. An exponential-family node supported on two points is a binary variable, so the non-identifiable edges are those joining an ordinal node to a binary one, which is a common pairing in practical datasets, highlighting the importance of this result. We provide a construction for deriving parameters which produce identical joint distributions in either direction under the condition that $\bars{\mathcal{Y}} = 2$ in Remark~\ref{rem:reconstruction}. It is important to note that this construction may not be the only way to derive parameters that produce the non-identifiable models. 
\begin{rmk}
\label{rem:reconstruction}
{\textbf{Given} the support $\mathcal{Y} = \curl{y_1, y_2}$ with $y_1 < y_2$, the affine sufficient statistic $T\parn{y} = c_1 y + c_0$ with $c_1 \neq 0$, the canonical link $\eta\parn{z} = z$ with log-partition $a\parn{\cdot}$ and base measure $H\parn{\cdot}$, the number of categories $s \geq 3$, and a forward model edge weight $w \neq 0$ with $w c_1 > 0$ (the case for $w c_1 < 0$ follows by reversing every inequality below).}

{\textbf{We define} the following terms for a reverse model with edge weight $v$ (the value to be determined below),
\begin{equation}
    \Delta y = y_2 - y_1, \qquad \Delta T = c_1 \Delta y, \qquad L = v \Delta y, \qquad t_i = e^{v y_i}, \; i \in \curl{1, 2},
\end{equation}
and, for a sequence $L = D_0 > D_1 > \cdots > D_{s-1} > D_s = 0$, define the cutpoints
\begin{equation}
    \gamma_j = \log\parn{\frac{t_2 - t_1 e^{D_j}}{e^{D_j} - 1}}, \qquad j = 1, \dots, s - 1 .
    \label{eq:rem_gamma}
\end{equation}}

{\textbf{Construction of the reverse parameters $\parn{\gamma, v}$.}
\begin{enumerate}
    \item If $s$ is even, i.e. $\exists$  $m > 0$ such that $s = 2m$: set $v = m c_1 w$, choose any $D_1 \in \parn{\tfrac{m - 1}{m} L,\, L}$, and set
    \begin{equation}
        D_j = L - \tfrac{j}{2m} L \;\; (j \text{ even}), \qquad
        D_j = D_1 - \tfrac{j - 1}{2m} L \;\; (j \text{ odd}), \qquad j = 0, \dots, s .
    \end{equation}
    \item If $s$ is odd, i.e. $\exists$  $m > 0$ such that $s = 2m + 1$: choose any $v \in \parn{m w c_1,\, \parn{m + 1} w c_1}$, and set
    \begin{equation}
        D_j = L - \tfrac{j}{2} w \Delta T \;\; (j \text{ even}), \qquad
        D_j = \parn{m - \tfrac{j - 1}{2}} w \Delta T \;\; (j \text{ odd}), \qquad j = 0, \dots, s .
    \end{equation}
    \item In either case, obtain $\gamma_1 < \cdots < \gamma_{s-1}$ from \eqref{eq:rem_gamma}.
\end{enumerate}}

{\textbf{Construction of the forward marginal $\pi$.} With $\parn{\gamma, v}$ in hand, set for $x \in \mathcal{X}$
\begin{equation}
    l_x = \log\parn{\frac{\sigma\parn{\gamma_x - v y_1} - \sigma\parn{\gamma_{x-1} - v y_1}}{\sigma\parn{\gamma_1 - v y_1}}}, \qquad
    \beta_x = l_x - w \parn{x - 1} T\parn{y_1},
\end{equation}
with $\gamma_0 = -\infty$ and $\gamma_s = +\infty$, and take
\begin{equation}
    \pi_x = \frac{\exp\parn{\beta_x + a\parn{w x} - a\parn{w}}}{\sum_{x' \in \mathcal{X}} \exp\parn{\beta_{x'} + a\parn{w x'} - a\parn{w}}}, \qquad x \in \mathcal{X} .
\end{equation}}

{\textbf{Reverse marginal of $Y$.} Set
\begin{equation}
    p_Y\parn{y_i; \mathcal{M}_{Y \to X}} = \sum_{x \in \mathcal{X}} \pi_x H\parn{y_i} \exp\parn{w x\, T\parn{y_i} - a\parn{w x}}, \qquad i \in \curl{1, 2}.
\end{equation}}

{The forward model $\parn{\pi, w}$ and the reverse model $\parn{\gamma, v}$ so obtained induce the same joint distribution on $\mathcal{X} \times \mathcal{Y}$. The one free scalar ($D_1$ when $s$ is even, $v$ when $s$ is odd) ranges over a non-empty open interval, so the construction yields a continuum of coincident pairs. Conversely, a reverse model $\parn{\gamma, v}$ admits a matching forward model if and only if $D_j = \log\parn{\parn{e^{\gamma_j} + t_2} / \parn{e^{\gamma_j} + t_1}}$ satisfies $D_j - D_{j+2} = D_0 - D_2$ for all $j = 0, \dots, s-2$, in which case the forward weight is uniquely $w = \parn{L - D_2} / \Delta T$ and $\pi$ follows as above.}
\end{rmk}

The strict ordering of the cutpoints is equivalent to the strict decrease of the $D_j$ from $D_0 = L$ to $D_s = 0$, and the constant $D_j - D_{j + 2}$ for any $j \in \curl{0, \dots, s - 2}$ splits that sequence into two interleaved arithmetic chains, each pinned at an endpoint. One scalar survives the count and ranges over a non-empty open interval, so the construction returns a continuum of coincident pairs rather than a single one. Only two scalars are free in total, so the coincident forward models form a two-dimensional subset of the $s$-dimensional forward parameter space, which is Lebesgue-null whenever $s \geq 3$. Theorem~\ref{thm:converse_two_points} therefore establishes that the two model classes fail to be disjoint at two support points, not that the orientation is unrecoverable at a typical parameter value.

We now move to analyze the necessity of the $s \geq 3$ condition on the ordinal nodes. It is interesting to observe that at $s = 2$, the reverse model retains a single cutpoint $\gamma_1$, and the resulting two-category conditional is itself a one-parameter exponential family in $X$, with natural parameter $vy - \gamma_1$. Proposition~\ref{prop:ord_ne} therefore no longer applies, neither form of the argument establishing Theorem~\ref{thm:general_id} survives, and the outcome turns entirely on the sufficient statistic. This intuitively leads to the non-identifiability claim, provided in the following theorem.
\begin{thm}
\label{thm:converse_two_categories}
Let $\mathcal{M}_{X \to Y}$ be the forward model with edge weight $w \neq 0$, in which $X$ follows the categorical marginal \eqref{eq:fwd_mar_X} on $\mathcal{X} = \curl{1, 2}$, and $Y \mid X$ follows the regular one-parameter exponential family conditional \eqref{eq:fwd_cond_Y} with sufficient statistic $T\parn{y}$ and support $\mathcal{Y}$ with $\bars{\mathcal{Y}} \geq 3$. Let $\mathcal{M}_{Y \to X}$ be the reverse model with edge weight $v \neq 0$, in which $Y$ follows the positive marginal \eqref{eq:rev_mar_Y} on $\mathcal{Y}$, and $X \mid Y$ follows the ordered-logit conditional \eqref{eq:rev_cond_X} with the single cutpoint $\gamma_1$. Then \textbf{$\mathcal{M}_{X \to Y}$ and $\mathcal{M}_{Y \to X}$ are distributionally identifiable if and only if the sufficient statistic is non-affine on the support}, that is, if and only if there exist $y_1, y_2, y_3 \in \mathcal{Y}$ for which $\parn{T\parn{y_2} - T\parn{y_1}}/\parn{y_2 - y_1} \neq \parn{T\parn{y_3} - T\parn{y_1}}/\parn{y_3 - y_1}$. In particular, if $T\parn{y} = c_1 y + c_0$ is affine, then for every forward model, there exist reverse parameters $\parn{\gamma_1, v}$ with $v \neq 0$ that induce the same joint distribution, and the models are not identifiable.
\end{thm}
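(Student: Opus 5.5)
The plan is to reduce everything to the single conditional log-odds-ratio $R\parn{y, 2, 1; \mathcal{M}}$ and compare its form under the two orientations. At $s = 2$ this one ratio determines the whole categorical law of $X \mid Y$, and it takes an especially rigid form in each model.

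\emph{Reverse model.} With the single cutpoint $\gamma_1$, \eqref{eq:rev_cond_X} gives $p_{X \mid Y}\parn{1 \mid y} = \sigma\parn{\gamma_1 - vy}$ and $p_{X \mid Y}\parn{2 \mid y} = 1 - \sigma\parn{\gamma_1 - vy}$. Using $\parn{1-\sigma\parn{z}}/\sigma\parn{z} = e^{-z}$, the ratio is
\begin{equation*}
R\parn{y, 2, 1; \mathcal{M}_{Y \to X}} = vy - \gamma_1 ,
\end{equation*}
which is affine in $y$ with nonzero slope $v$.

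\emph{Forward model.} Lemma~\ref{lem:fwd_affine} with $u = 2$, $x = 1$ gives
\begin{equation*}
R\parn{y, 2, 1; \mathcal{M}_{X \to Y}} = \alpha T\parn{y} + \beta, \qquad \alpha = \eta\parn{2w} - \eta\parn{w} \neq 0 .
\end{equation*}
Here $\alpha \neq 0$ because $\eta$ is injective and $w \neq 0$.

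\emph{Non-affine $T$ implies identifiability.} Suppose a forward law $P$ and a reverse law $Q$ coincide on $\mathcal{X} \times \mathcal{Y}$. Both $Y$-marginals are strictly positive on $\mathcal{Y}$: in the forward model because $\pi_x > 0$ and $H > 0$, and in the reverse model by \eqref{eq:rev_mar_Y}. So the conditionals of $X \mid Y$ are well defined and equal, and hence the two log-odds-ratios agree for every $y \in \mathcal{Y}$. This yields
\begin{equation*}
T\parn{y} = \frac{v}{\alpha}\, y - \frac{\gamma_1 + \beta}{\alpha}, \qquad y \in \mathcal{Y},
\end{equation*}
so $T$ is affine on $\mathcal{Y}$. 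Then every difference quotient $\parn{T\parn{y_2} - T\parn{y_1}}/\parn{y_2 - y_1}$ equals $v/\alpha$, which contradicts the existence of the triple in the statement. Hence non-affineness forces disjointness.

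For completeness, I will also check the reverse reading of the condition: if no such triple exists, then all slopes from every base point coincide, so $T = c_1 y + c_0$ on $\mathcal{Y}$. This uses $\bars{\mathcal{Y}} \geq 3$ only to make the condition meaningful. Minimality of $T$ gives $c_1 \neq 0$.

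\emph{Affine $T$ implies non-identifiability (explicit construction).} Let $T = c_1 y + c_0$ and fix any forward model $\parn{\pi, w}$.
\begin{enumerate}
\item Set $v = \alpha c_1$, which is nonzero, and $\gamma_1 = -\parn{\alpha c_0 + \beta}$. Then $vy - \gamma_1 = \alpha T\parn{y} + \beta$ for all $y \in \mathcal{Y}$, so the reverse conditional equals the forward Bayes conditional of $X \mid Y$. No cutpoint ordering constraint arises at $s = 2$.
\item Take the reverse marginal $p_Y\parn{y} = \sum_{x} \pi_x H\parn{y} \exp\parn{\eta\parn{wx} T\parn{y} - a\parn{\eta\parn{wx}}}$. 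This is positive on $\mathcal{Y}$ and sums to one.
\item The reverse joint is this marginal times a conditional that coincides with the forward one, so it reproduces \eqref{eq:fwd_jnt} exactly.
\end{enumerate}
This gives the "in particular" claim and completes the equivalence.

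I do not expect a real obstacle. The only delicate points are that $\alpha \neq 0$, which makes $v \neq 0$ legitimate, and that both $Y$-marginals are positive, which justifies passing from equal joint laws to equal conditionals.
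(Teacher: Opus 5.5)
Your proposal is correct and follows the paper's proof essentially step for step: the same closed form $vy - \gamma_1$ for the reverse log-odds-ratio, the same use of Lemma~\ref{lem:fwd_affine}, the same slope contradiction in the non-affine case, and the same choice $v = \alpha c_1$, $\gamma_1 = -(\alpha c_0 + \beta)$ with the forward-induced $Y$-marginal in the affine case. Your added check that failure of the triple condition forces $T = c_1 y + c_0$ with $c_1 \neq 0$ (the latter by minimality of $T$) is a small completeness detail that the paper leaves implicit.
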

\begin{proof}
See Appendix~\ref{app:proof_converse_two_categories}
\end{proof}

Theorem~\ref{thm:converse_two_categories} shows that instead of a full non-identifiability result, we arrive at a dichotomy. The condition of $s \geq 3$ remains necessary when the sufficient statistic is affine and dispensable otherwise, so the requirement of three categories belongs to the statistic rather than to the ordinal variable alone. A third category then supplies a second cutpoint, and with it both the curvature of Lemma~\ref{lem:rev_bounds} and the applicability of Proposition~\ref{prop:ord_ne}, which is why $T$ is unconstrained once $s \geq 3$. We provide a construction for deriving model parameters such that both the forward and reverse models induce the same joint distribution for affine $T$, as noted in Remark~\ref{rem:reconstruction_two_categories}. We note that Theorem~\ref{thm:converse_two_points} leaves a Lebesgue-null set of coincident forward models, whereas at $s = 2$ with affine $T$ every forward model admits a matching reverse one, so the failure is complete rather than exceptional. For instance, the Poisson statistic is affine, so an Ordinal--Poisson edge is not identifiable at two ordinal categories, and a binary ordinal node is the one case in which the guarantees of Section~\ref{ss:id_bivar} require checking the statistic first.

\begin{rmk}
\label{rem:reconstruction_two_categories}
{\textbf{Given} the support $\mathcal{Y}$, the affine sufficient statistic $T\parn{y} = c_1 y + c_0$ with $c_1 \neq 0$, the monotone injective link $\eta\parn{\cdot}$ with log-partition $a\parn{\cdot}$ and base measure $H\parn{\cdot}$, the number of categories $s = 2$, and forward parameters $\parn{\pi, w}$ with $\pi_1, \pi_2 > 0$, $\pi_1 + \pi_2 = 1$, and $w \neq 0$.}

{\textbf{We define} the slope and intercept of the forward log-odds-ratio,
\begin{equation}
    \alpha \triangleq \eta\parn{2w} - \eta\parn{w} \neq 0, \qquad
    \beta \triangleq a\parn{\eta\parn{w}} - a\parn{\eta\parn{2w}} + \log\parn{\frac{\pi_2}{\pi_1}} .
    \label{eq:rem2_alpha_beta}
\end{equation}}

{\textbf{Construction of the reverse parameters $\parn{\gamma_1, v}$.}
\begin{enumerate}
    \item Set $v = \alpha c_1$, which is nonzero since $\alpha \neq 0$ and $c_1 \neq 0$.
    \item Set $\gamma_1 = -\parn{\alpha c_0 + \beta}$.
\end{enumerate}}

{\textbf{Reverse marginal of $Y$.} Set
\begin{equation}
    p_Y\parn{y; \mathcal{M}_{Y \to X}} = \sum_{x \in \curl{1, 2}} \pi_x H\parn{y} \exp\parn{\eta\parn{w x}\, T\parn{y} - a\parn{\eta\parn{w x}}}, \qquad y \in \mathcal{Y}.
\end{equation}}

{The forward model $\parn{\pi, w}$ and the reverse model $\parn{\gamma_1, v}$ so obtained induce the same joint distribution on $\mathcal{X} \times \mathcal{Y}$. Unlike Remark~\ref{rem:reconstruction}, no condition is imposed on the forward parameters, so every forward model admits a matching reverse model. Conversely, given reverse parameters $\parn{\gamma_1, v}$ with $v \neq 0$ and the reverse marginal of $Y$ set as above, the forward parameters are recovered by solving $\eta\parn{2w} - \eta\parn{w} = v / c_1$ for $w$, which gives $w = v / c_1$ under the canonical link, and then setting
\begin{equation}
    \frac{\pi_2}{\pi_1} = \exp\parn{-\gamma_1 - \alpha c_0 - a\parn{\eta\parn{w}} + a\parn{\eta\parn{2w}}}, \qquad \pi_1 + \pi_2 = 1 .
\end{equation}}
\end{rmk}

Theorem~\ref{thm:general_id} through Theorem~\ref{thm:converse_two_categories} settle the bivariate case, and the classification they give is fixed by the model specification rather than by the parameter values. The number of categories, the size of the support, and the shape of the sufficient statistic are all determined once the node families are chosen, so an edge can be placed before any data are seen. 

\subsection{Multivariate Identifiability}
\label{subsec:multivariate}

The results established so far concern a single edge in isolation, forming the bivariate DAG. We now analyze multivariate mixed ordinal--exponential-family DAGs in which each node follows the LPM model. We begin by stating what identifiability of an edge direction means in the multivariate setting.

%%%%%%%%%%%%%%%%%%%%%%%%%%%% MULTIVARIATE REDUCTION %%%%%%%%%%%%%%%%%%%%%%%%
\begin{defn}[Edge Direction Identifiability in Multivariate DAGs]
\label{def:g_id}
Let $\mathcal{G} = \parn{\mathcal{V}, \mathcal{E}}$ be a causal DAG with undirected skeleton $\mathcal{G}_u = \parn{\mathcal{V}, \mathcal{E}_u}$, and let $\parn{i, j} \in \mathcal{E}_u$. Denote by $\mathcal{G}^{i \to j}$ and $\mathcal{G}^{j \to i}$ the two directed graphs that agree with $\mathcal{G}$ on every edge of $\mathcal{E}_u \setminus \curl{\parn{i, j}}$ and orient the edge $\parn{i, j}$ as $i \to j$ and as $j \to i$ respectively, both assumed acyclic, and let $\mathcal{M}\parn{\mathcal{G}^{i \to j}}$ and $\mathcal{M}\parn{\mathcal{G}^{j \to i}}$ denote the sets of joint distributions over $\mathcal{V}$ induced by them. The direction of the edge $\parn{i, j}$ is \textbf{distributionally identifiable} if
\begin{equation}
    p_{\mathcal{V}} \neq q_{\mathcal{V}}, \quad
    \forall\, p_{\mathcal{V}} \in \mathcal{M}\parn{\mathcal{G}^{i \to j}},\;
    q_{\mathcal{V}} \in \mathcal{M}\parn{\mathcal{G}^{j \to i}}.
    \label{eq:g_id_disjoint}
\end{equation}
\end{defn}

We observe that Definition~\ref{def:g_id} reduces to Definition~\ref{def:d_id} when $\mathcal{V} = \curl{i, j}$, with $\mathcal{M}\parn{\mathcal{G}^{i \to j}}$ and $\mathcal{M}\parn{\mathcal{G}^{j \to i}}$ becoming $\mathcal{M}_{X \to Y}$ and $\mathcal{M}_{Y \to X}$, so the bivariate results of the preceding subsections are the two-node case of the theory below. It is intuitive to see that within a causal DAG, an ordinal--exponential-family edge is embedded among further nodes, and both endpoints may carry additional parents whose contributions enter the conditional laws. Those contributions shift the cutpoints of the ordinal endpoint and the natural parameter of the other, but leave both conditional forms intact, so the pair is again an instance of the bivariate setting. The reverse class available within a graph is also contained in the bivariate reverse class, since the graph constrains a reverse-oriented exponential-family endpoint that the bivariate model leaves free. We formalize this intuition in the next theorem, which establishes that the conditions that secure identifiability in the bivariate setting also suffice for an arbitrary graph.

\begin{thm}
\label{thm:multivariate_id}
Let $\mathcal{G} = \parn{\mathcal{V}, \mathcal{E}}$ be a causal DAG on $d$ nodes with undirected skeleton $\mathcal{G}_u = \parn{\mathcal{V}, \mathcal{E}_u}$, in which each node is designated either ordinal or exponential family and every edge weight is nonzero. Let each non-root ordinal node $k$ with $s_k$ categories follow the ordered-logit conditional \eqref{eq:rev_cond_X} given its parents with cutpoints $-\infty = \gamma_0 < \gamma_1 < \cdots < \gamma_{s_k - 1} < \gamma_{s_k} = +\infty$, let each non-root exponential family node $k$ follow the regular one-parameter exponential family conditional \eqref{eq:fwd_cond_Y} given its parents with sufficient statistic $T_k$, monotone injective link $\eta$, and support $\mathcal{X}_k$, and let each root node $k$ follow an arbitrary strictly positive marginal on its support, categorical on $\curl{1, \dots, s_k}$ when $k$ is ordinal. Let the joint distribution $p_{\mathcal{V}}$ be Markov with respect to $\mathcal{G}$, factorizing as $p_{\mathcal{V}} = \prod_{k \in \mathcal{V}} p_{X_k \mid \vect{x}_{\mathcal{P}\parn{k}}}$, with $\mathcal{G}$ causally sufficient, containing no latent confounders and no selection variables. Let $\parn{i, j} \in \mathcal{E}_u$ join an ordinal node $i$ with $s$ categories to an exponential family node $j$ with sufficient statistic $T$ and support $\mathcal{X}_j$. If the ordinal node has at least three categories, $s \geq 3$, and the exponential family node has at least three points of support, $\bars{\mathcal{X}_j} \geq 3$, then \textbf{the two orientations of the edge $\curl{i, j}$ induce disjoint sets of joint distributions on $\mathcal{V}$, and the direction of the edge is distributionally identifiable}.
\end{thm}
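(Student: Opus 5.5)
The plan is to reduce the multivariate statement to the bivariate argument behind Theorem~\ref{thm:general_id}. Suppose, for contradiction, that some $p_{\mathcal{V}} \in \mathcal{M}\parn{\mathcal{G}^{i \to j}}$ equals some $q_{\mathcal{V}} \in \mathcal{M}\parn{\mathcal{G}^{j \to i}}$. I will isolate the pair $\parn{X_i, X_j}$ by conditioning on a suitable set of other variables. The goal is to obtain, for one fixed conditioning value, a forward bivariate law and a reverse bivariate law that coincide.

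\textbf{Step 1: choose the conditioning set.} Let
\[
A = \parn{\mathrm{An}_{\mathcal{G}^{i \to j}}\parn{j} \cup \mathrm{An}_{\mathcal{G}^{j \to i}}\parn{i}} \setminus \curl{i, j},
\]
where $\mathrm{An}$ denotes the ancestor set. I will show that $A \cup \curl{i,j}$ is ancestral in both graphs, using that both orientations are acyclic.
\begin{itemize}
\item Since both orientations are acyclic, there is no directed path between $i$ and $j$ other than the edge itself.
\item Take $a \in A$ in $\mathcal{G}^{i \to j}$. Its ancestors are ancestors of $i$ or $j$. Suppose $j$ were an ancestor of $a$. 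If $a$ is an ancestor of $i$, we get a cycle through $i \to j$. If $a$ is an ancestor of $j$, we get a cycle directly. So $j$ is not an ancestor of $a$.
\item The symmetric argument handles $i$, and the same reasoning applies in $\mathcal{G}^{j \to i}$.
\end{itemize}
Marginalizing a Markov factorization onto an ancestral set keeps exactly the factors of that set. Conditioning further on $\vect{x}_A$ then gives, in the forward graph,
\[
p\parn{x_i, x_j \mid \vect{x}_A} = p\parn{x_i \mid \vect{x}_{\mathcal{P}\parn{i}}}\, p\parn{x_j \mid \vect{x}_{\mathcal{P}\parn{j}}},
\]
because every parent other than $i$ or $j$ lies in $A$. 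The analogous formula holds in the reverse graph.

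\textbf{Step 2: fix one conditioning value.} Fix any $\vect{x}_A$ of positive probability; such values exist by the positivity of all conditionals. Write $r_i$ and $r_j$ for the linear-predictor contributions of $\vect{x}_A$ to the two endpoints.
\begin{itemize}
\item \emph{Forward graph.} $X_i \mid \vect{x}_A$ is a strictly positive categorical law on $\curl{1,\dots,s}$: it is ordered-logit with shifted cutpoints, or the root marginal. The conditional $X_j \mid x_i, \vect{x}_A$ is the exponential family with natural parameter $\eta\parn{w x_i + r_j}$.
\item \emph{Reverse graph.} $X_j \mid \vect{x}_A$ has a strictly positive law on $\mathcal{X}_j$. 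The conditional $X_i \mid x_j, \vect{x}_A$ is ordered-logit with weight $v$ and cutpoints $\gamma_k - r_i$, which are still strictly ordered.
\end{itemize}
Equality of $p_{\mathcal{V}}$ and $q_{\mathcal{V}}$ forces equality of these two conditional bivariate laws on $\mathcal{X} \times \mathcal{X}_j$.

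\textbf{Step 3: apply the bivariate contradiction.} The argument of Lemma~\ref{lem:fwd_affine} goes through verbatim with the offset. The forward log-odds-ratio is affine in $T\parn{x_j}$ with slope
\[
\alpha = \eta\parn{wu + r_j} - \eta\parn{wx + r_j}.
\]
This slope is nonzero because $\eta$ is injective and $w \neq 0$. Lemma~\ref{lem:rev_bounds} applies to the shifted cutpoints, so the reverse quantity $\bar B$ is strictly concave and $\bar U$ strictly convex. The first proof of Theorem~\ref{thm:general_id} then proceeds unchanged:
\begin{itemize}
\item Eliminating $T$ between $B$ and $U$ forces a fixed combination $\bar B - \kappa \bar U$ with $\kappa > 0$ to be constant on $\mathcal{X}_j$.
\item That combination is strictly concave, so it takes a constant value at no more than two points.
\item This contradicts $\bars{\mathcal{X}_j} \geq 3$.
\end{itemize}
The argument needs only $s \geq 3$ and $\bars{\mathcal{X}_j} \geq 3$, as in the statement.

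\textbf{Main obstacle.} I expect the main obstacle to be Step 1. The difficulty is ensuring that children of $i$ that also have $j$ as a parent, and other descendants, do not contaminate the conditional of the pair. A naive conditioning on all remaining nodes would introduce such factors and destroy both the affine structure and the curvature. The ancestral-set construction is what I would rely on to avoid this, and the acyclicity of both orientations is exactly what makes it valid. A secondary check is that the bivariate lemmas tolerate the offsets $r_i$ and $r_j$; the previous paragraph indicates that they do, since only injectivity of $\eta$ and strict ordering of the cutpoints are used.
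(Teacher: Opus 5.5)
Your proof is correct, but it deals with the descendants of $i$ and $j$ differently from the paper.

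\textbf{The paper's route.} The paper conditions on all of $\mathcal{Z} = \mathcal{V} \setminus \curl{i, j}$ and works only with the log-odds of $X_i$ given $\parn{X_j, \mathcal{Z}}$. The factors of the children of $i$ in $\mathcal{Z}$, including those that also have $j$ as a parent, do appear. But they appear in both factorizations as the same conditionals $p\parn{x_k \mid \vect{x}_{\mathcal{P}\parn{k}}} = q\parn{x_k \mid \vect{x}_{\mathcal{P}\parn{k}}}$, since the parent sets agree and these conditionals are fixed by the common joint. So the sums $\Lambda$ and $\Lambda'$ cancel exactly. The paper then builds auxiliary bivariate models $\tilde{p}$ and $\tilde{q}$, giving $\tilde{q}$ the $X_j$-marginal induced by $\tilde{p}$, and applies Theorem~\ref{thm:general_id} with the shifted link $\tilde{\eta}$.

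\textbf{Your closing remark.} Your claim that naive conditioning on all remaining nodes would destroy the affine structure and the curvature is therefore mistaken. Those extra factors are common to both sides and drop out.

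\textbf{What your route costs and buys.} Your ancestral-set construction avoids the extra factors altogether. The cost is a graph-theoretic check: that $A \cup \curl{i,j}$ is ancestral in both orientations and that $A$ contains no descendant of $i$ or $j$. This is where you use acyclicity of both orientations, namely that there is no directed $i$--$j$ path other than the edge. In return you get a direct equality of two genuine bivariate joint laws, $p\parn{x_i, x_j \mid \vect{x}_A} = q\parn{x_i, x_j \mid \vect{x}_A}$, with no auxiliary construction or marginal-matching step. This makes your argument closer in spirit to the paper's proof of Theorem~\ref{thm:orient_id}, which also conditions on a set of non-descendants.

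\textbf{Step 3.} Your curvature step is the argument of Proposition~\ref{prop:general_link}: strict concavity of $\bar{B} - \kappa \bar{U}$, with $\kappa > 0$ coming from the common sign of the shifted slopes. It is sound. You could equally just note that $\eta\parn{\cdot + r_j}$ is monotone injective and cite Theorem~\ref{thm:general_id} directly.
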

\begin{proof}
See Appendix~\ref{app:proof_multivariate_id}.
\end{proof}

Theorem~\ref{thm:multivariate_id} implicitly assumes the Markov property, causal sufficiency, and the absence of selection, but not faithfulness, and the relaxation is substantive. Constraint-based procedures read orientation off conditional independencies and so require the graph to entail every independence the distribution exhibits; the argument here compares joint distributions directly and tests no independence, remaining untouched at parameter values where path cancellation produces an independence $\mathcal{G}$ does not imply. The exemption covers orientation alone, since recovering the skeleton by conditional independence testing still requires adjacency faithfulness \citep{spirtes2000causation, ramsey2006adjacency} or a similar condition, and the skeleton is taken as given throughout.

The orientation is also local, resting on the conditional law of the two endpoints given their parents, so neither endpoint needs to be a source node, no topological ordering is required, and nothing is propagated from one edge to another. Locality distinguishes the guarantee from orientation rules that follow a v-structure and depend on what the rest of the skeleton contains. It is important to note that locality is achieved precisely because the root marginals of both the ordinal and exponential-family nodes are maximally free as described in \eqref{eq:ord_root} and \eqref{eq:ef_root}, allowing subsequent constraints to produce subsets that inherit the properties derived for the original root node cases. Definition~\ref{def:g_id}, however, compares two graphs agreeing everywhere but on a single edge, so Theorem~\ref{thm:multivariate_id} rules out one reversal at a time, and the following result extends the conclusion to the whole orientation space.

\begin{thm}
\label{thm:orient_id}
Let $\mathcal{G}_u = \parn{\mathcal{V}, \mathcal{E}_u}$ be an undirected skeleton in which every edge joins an ordinal node to an exponential family node, and let $\mathcal{O}\parn{\mathcal{G}_u}$ denote its acyclic orientations. Let $\mathcal{G} \in \mathcal{O}\parn{\mathcal{G}_u}$, and let $p_{\mathcal{V}}$ be generated by an ordinal-exponential LPM on $\mathcal{G}$ in which every ordinal node carries at least three categories, every exponential family node at least three points of support, every link $\eta$ is monotone injective, and every edge weight is nonzero. Then \textbf{for every $\mathcal{G}' \in \mathcal{O}\parn{\mathcal{G}_u}$ with $\mathcal{G}' \neq \mathcal{G}$, no ordinal-exponential LPM on $\mathcal{G}'$ generates $p_{\mathcal{V}}$, irrespective of how many edges the two orientations differ in.}
\end{thm}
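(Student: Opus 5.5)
The plan is to reduce the global statement to the single-edge mechanism behind Theorem~\ref{thm:multivariate_id}. That mechanism compares the conditional law of an ordinal endpoint $X_i$ given an exponential-family endpoint $X_j$ and the remaining parents. In one orientation the log-odds-ratio $R$ is affine in $T(x_j)$ (Lemma~\ref{lem:fwd_affine}, with the other parents only shifting $\eta$'s argument and the marginal weights). In the other orientation the extreme adjacent log-odds-ratios carry the opposite strict curvatures of Lemma~\ref{lem:rev_bounds}, with the other parents only shifting the cutpoints. That theorem is stated for graphs differing in exactly one edge. So the work is to find, for arbitrary $\mathcal{G} \neq \mathcal{G}'$ in $\mathcal{O}(\mathcal{G}_u)$, a marginal of $p_{\mathcal{V}}$ on which the two candidate generators restrict to LPMs on graphs that differ in exactly one edge.

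\emph{Step 1 (ancestral marginalization).} If $A \subseteq \mathcal{V}$ is ancestral in a DAG $\mathcal{H}$, then integrating out $\mathcal{V}\setminus A$ in reverse topological order removes whole conditional factors. Hence the marginal $p_A$ is generated by the LPM on $\mathcal{H}[A]$ with the same conditionals, and root marginals of $\mathcal{H}[A]$ remain strictly positive. Therefore, if $A$ is ancestral in both $\mathcal{G}$ and $\mathcal{G}'$ and both LPMs generate $p_{\mathcal{V}}$, then both restricted LPMs generate $p_A$.

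\emph{Step 2 (choosing the edge and the set).} Fix a topological order of $\mathcal{G}'$. Let $k$ be the first node in that order whose parent set differs between the two graphs, $\mathcal{P}_{\mathcal{G}}(k) \neq \mathcal{P}_{\mathcal{G}'}(k)$; such a $k$ exists since the orientations differ. Because the graphs share a skeleton, some neighbour $\ell$ of $k$ satisfies $\ell \to k$ in one graph and $k \to \ell$ in the other. Let $B$ be the set of nodes preceding $k$ in the order. By minimality of $k$, every node of $B$ has identical parent sets in both graphs. I would then take $A = B \cup \{k\}$, or, if that set fails to be ancestral in $\mathcal{G}$, the smallest set containing it that is ancestral in both graphs.

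\emph{Main obstacle.} The difficulty is ensuring that $\mathcal{G}[A]$ and $\mathcal{G}'[A]$ differ only on edges incident to $k$, ideally on the single edge $\{k,\ell\}$. If several edges at $k$ flip, I would instead condition on $\mathbf{x}_{A\setminus\{k,\ell\}}$ and work with the conditional law of $X_k$ versus $X_\ell$. On the side where the ordinal endpoint is a child, the other parents enter only through cutpoint shifts, so Lemma~\ref{lem:rev_bounds} still applies pointwise in those values. On the side where the ordinal endpoint is a parent, the Bayes factor is still affine in $T$ as in Lemma~\ref{lem:fwd_affine}, with $y$-independent constants absorbing the remaining factors. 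I expect this conditioning bookkeeping to be the delicate part: one must check that no third factor couples $x_k$ and $x_\ell$, using that both lie in $A$ and that any common child of $k$ and $\ell$ lies outside $A$.

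\emph{Conclusion.} The edge $\{k,\ell\}$ joins an ordinal node with $s \ge 3$ categories to an exponential-family node with at least three support points. The curvature-versus-affinity contradiction of the first proof of Theorem~\ref{thm:general_id} therefore applies to $p_A$: at most two points of support can reconcile an affine function of $T$ with a strictly concave $\bar B$ and a strictly convex $\bar U$ simultaneously. This contradicts the assumption that both LPMs generate $p_A$, hence no LPM on $\mathcal{G}'$ generates $p_{\mathcal{V}}$.
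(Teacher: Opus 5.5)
There is a gap, and it sits exactly at the step you flag as delicate.

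\textbf{The one-edge reduction cannot be achieved.} Take $k$ to be the first node of the $\mathcal{G}'$-order whose parent sets differ. Every neighbour $m \in B$ of $k$ then satisfies $m \to k$ in both graphs: in $\mathcal{G}$, an edge $k \to m$ would put $k$ in $\mathcal{P}_{\mathcal{G}}\parn{m} = \mathcal{P}_{\mathcal{G}'}\parn{m}$, which is impossible. So the reversed edge is always $\ell \to k$ in $\mathcal{G}$ with $\ell \notin B$.
\begin{itemize}
\item Consequently $B \cup \curl{k}$ is never ancestral in $\mathcal{G}$.
\item The joint ancestral closure must absorb $\ell$, then the $\mathcal{G}'$-parents of $\ell$, and so on. It can be all of $\mathcal{V}$ and can contain reversals not incident to $k$, e.g.\ $r \to \ell \to k$ in $\mathcal{G}$ versus $k \to \ell \to r$ in $\mathcal{G}'$.
\end{itemize}
Everything therefore rests on your fallback.

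\textbf{The fallback's decisive step is unproved.} You need that, given $\vect{x}_{A \setminus \curl{k, \ell}}$, no factor other than the edge factor depends on both $x_k$ and $x_\ell$. Only then are the extra terms in the log-odds-ratio independent of $y$: the children of the ordinal endpoint on one side, and the Bayes-factor remainder on the other. Your proposed justification, that any common child of $k$ and $\ell$ lies outside $A$, does not follow from how $A$ is built. Take $\ell \to k \to c$, $\ell \to c$ in $\mathcal{G}$ and $k \to c \to \ell$, $k \to \ell$ in $\mathcal{G}'$. The joint closure of $\curl{k}$ is then $\curl{k, \ell, c}$, which contains the common child $c$. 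So the property has to come from the skeleton, not from $A$.

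\textbf{The missing idea is that $\mathcal{G}_u$ is bipartite.} Adjacent nodes have opposite types, so adjacent nodes have no common neighbour in either orientation, and the property holds automatically. With this observation, the ancestral marginalization and the choice of $k$ can be dropped entirely:
\begin{itemize}
\item Take any edge $\curl{o, e}$, with $o$ ordinal and $e$ exponential family, that $\mathcal{G}$ and $\mathcal{G}'$ orient differently, and condition on all of $\vect{x}_{\mathcal{V} \setminus \curl{o, e}}$.
\item Factors not involving $x_o$ cancel from $R\parn{y, u, x}$. The remaining non-edge factors give $y$-independent constants.
\item Where $e \to o$, $R$ is the ordered-logit log-odds with shifted cutpoints plus constants. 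Where $o \to e$, $R$ is affine in $T\parn{y}$ with slopes $\alpha_{2,1}, \alpha_{s,s-1}$ of the same sign.
\item The elimination $\Psi = \alpha_{s,s-1} B - \alpha_{2,1} U$ then contradicts $\bars{\mathcal{X}_e} \geq 3$.
\end{itemize}

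\textbf{Comparison with the paper.} Repaired this way, your route is more elementary than the paper's and shows every reversed edge is individually detectable. The paper handles the coupling issue differently. It verifies causal minimality and invokes Proposition~29(i) of \citet{peters2014causal} to select a reversed edge, $b \to a$ in $\mathcal{G}$ and $a \to b$ in $\mathcal{G}'$, whose conditioning set $\mathcal{C}$ consists of non-descendants of the child endpoint in each graph. The local Markov property then makes the pair law given $\vect{x}_{\mathcal{C}}$ simultaneously a bivariate forward and reverse model, so Theorem~\ref{thm:general_id} applies. In that argument bipartiteness is used only to guarantee that the selected edge is ordinal--exponential.
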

\begin{proof}
See Appendix~\ref{app:proof_orient_id}.
\end{proof}

Theorem~\ref{thm:orient_id} strengthens Theorem~\ref{thm:multivariate_id} from a single reversal to the entire orientation space, and the strengthening does not follow from repeated application of the earlier result: a competing orientation may differ in many edges at once, and the comparisons cannot be chained through intermediate graphs, none of which is the truth. Theorem~\ref{thm:orient_id} instead excludes every member of $\mathcal{O}\parn{\mathcal{G}_u}$ simultaneously.

We note that the result concerns distributions rather than any particular score, although a consequence for score-based estimation is immediate: for a likelihood-based criterion, Theorem~\ref{thm:orient_id} gives strictly positive Kullback--Leibler divergence from $p_{\mathcal{V}}$ at every competing orientation and every parameter value, so the truth is the unique population optimum. Uniqueness is weaker than consistency, which would additionally require that the divergence remain bounded away from zero as the competing parameters range over an unbounded set, a condition that rests on a compactness argument we do not pursue herein. Whether the two conditions of Theorem~\ref{thm:multivariate_id} remain necessary once the edge sits in a graph is settled next.

\begin{thm}
\label{thm:multivariate_converse}
Let $\mathcal{G} = \parn{\mathcal{V}, \mathcal{E}}$ be a causal DAG on $d$ nodes with undirected skeleton $\mathcal{G}_u = \parn{\mathcal{V}, \mathcal{E}_u}$, in which each node is designated either ordinal or exponential family and every edge weight is nonzero. Let each non-root ordinal node $k$ with $s_k$ categories follow the ordered-logit conditional \eqref{eq:rev_cond_X} given its parents with cutpoints $-\infty = \gamma_0 < \gamma_1 < \cdots < \gamma_{s_k - 1} < \gamma_{s_k} = +\infty$, let each non-root exponential family node $k$ follow the regular one-parameter exponential family conditional \eqref{eq:fwd_cond_Y} given its parents with sufficient statistic $T_k$, monotone injective link $\eta$, and support $\mathcal{X}_k$, and let each root node $k$ follow an arbitrary strictly positive marginal on its support, categorical on $\curl{1, \dots, s_k}$ when $k$ is ordinal. Let the joint distribution $p_{\mathcal{V}}$ be Markov with respect to $\mathcal{G}$, factorizing as $p_{\mathcal{V}} = \prod_{k \in \mathcal{V}} p_{X_k \mid \vect{x}_{\mathcal{P}\parn{k}}}$, with $\mathcal{G}$ causally sufficient, containing no latent confounders and no selection variables. Let $\parn{i, j} \in \mathcal{E}_u$ join an ordinal node $i$ with $s$ categories to an exponential family node $j$ with sufficient statistic $T$ and support $\mathcal{X}_j$, where neither $i$ nor $j$ is adjacent in $\mathcal{G}_u$ to any node other than the other endpoint. If either the exponential family node has exactly two points of support, $\bars{\mathcal{X}_j} = 2$, with affine sufficient statistic $T\parn{y} = c_1 y + c_0$, $c_0, c_1 \in \mathbb{R}$, $c_1 \neq 0$, and canonical link $\eta\parn{z} = z$, or the ordinal node has exactly two categories, $s = 2$, with $T$ affine on $\mathcal{X}_j$ in the same sense, then \textbf{the direction of the edge $\parn{i, j}$ is not distributionally identifiable}.
\end{thm}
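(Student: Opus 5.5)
The plan is to reduce the multivariate converse to the bivariate converses, Theorem~\ref{thm:converse_two_points} and Theorem~\ref{thm:converse_two_categories}, using the isolation hypothesis. Since neither $i$ nor $j$ is adjacent in $\mathcal{G}_u$ to any node other than the other endpoint, the pair $\curl{i,j}$ forms a connected component of the skeleton. In both $\mathcal{G}^{i \to j}$ and $\mathcal{G}^{j \to i}$, the parent sets of nodes outside $\curl{i,j}$ do not involve $i$ or $j$. The source endpoint has no parents, and the sink endpoint has the other endpoint as its only parent. Consequently, for any LPM on $\mathcal{G}^{i \to j}$ the Markov factorization splits as
\begin{equation*}
p_{\mathcal{V}} = p_{X_i, X_j} \cdot \prod_{k \notin \curl{i,j}} p_{X_k \mid \vect{x}_{\mathcal{P}\parn{k}}},
\end{equation*}
where $p_{X_i,X_j}$ is exactly a member of the bivariate forward class $\mathcal{M}_{X \to Y}$ (root categorical marginal on $i$, exponential-family conditional on $j$ with link $\eta$ and weight $w_{i,j}$). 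Likewise, an LPM on $\mathcal{G}^{j \to i}$ factors with a member of $\mathcal{M}_{Y \to X}$ (arbitrary positive root marginal on $j$, ordered-logit conditional on $i$).

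The construction proceeds in three steps. First, fix an arbitrary admissible specification of all factors outside $\curl{i,j}$: parents, weights, cutpoints, and root marginals as prescribed on $\mathcal{G}$ restricted to $\mathcal{V} \setminus \curl{i,j}$. These factors are common to both orientations because $\mathcal{G}^{i\to j}$ and $\mathcal{G}^{j\to i}$ agree off the edge. Second, in the case $\bars{\mathcal{X}_j} = 2$ with affine $T$ and canonical link, invoke Theorem~\ref{thm:converse_two_points}, or concretely the construction of Remark~\ref{rem:reconstruction}, to obtain forward parameters $\parn{\pi, w}$ and reverse parameters $\parn{\gamma, v}$ with $w, v \neq 0$ inducing the same bivariate law. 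In the case $s = 2$ with affine $T$, invoke the \emph{only if} direction of Theorem~\ref{thm:converse_two_categories}, or Remark~\ref{rem:reconstruction_two_categories}: starting from any forward $\parn{\pi, w}$, it produces $\parn{\gamma_1, v}$, $v \neq 0$, together with the reverse marginal of $Y$. Third, multiply each bivariate law by the common product of the remaining factors. This yields $p_{\mathcal{V}} \in \mathcal{M}\parn{\mathcal{G}^{i\to j}}$ and $q_{\mathcal{V}} \in \mathcal{M}\parn{\mathcal{G}^{j\to i}}$ with $p_{\mathcal{V}} = q_{\mathcal{V}}$, contradicting \eqref{eq:g_id_disjoint}.

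The main care point is checking that the bivariate objects are legitimate LPM node specifications in the graph. The reverse marginal of $j$ produced by the remarks, a mixture over $x$ of the forward conditionals, must be strictly positive on $\mathcal{X}_j$. This holds because $H > 0$ and $\pi_x > 0$, so it qualifies under \eqref{eq:ef_root} as an arbitrary positive root marginal. The forward marginal $\pi$ must be a positive categorical law, which is true by construction. The reverse cutpoints must be strictly ordered. Theorem~\ref{thm:converse_two_points} guarantees this, and for $s=2$ it is vacuous. We also verify that both orientations are acyclic, which is trivial since the edge lies in its own component, and that no faithfulness or other cross-component constraint is imposed by the hypotheses. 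A secondary subtlety is that the theorem's statement fixes the link $\eta$ globally. In the $s=2$ case, Theorem~\ref{thm:converse_two_categories} already allows an arbitrary monotone injective link for the forward model, and the reverse ordinal conditional does not use $\eta$, so no restriction arises.
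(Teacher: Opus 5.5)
Your proposal is correct and follows the paper's proof essentially step for step. Like the paper, you use the isolation hypothesis to split each factorization into the law of $(X_i, X_j)$ times a common product over $\mathcal{V}\setminus\curl{i,j}$. You then obtain coinciding pair laws from Theorem~\ref{thm:converse_two_points} or Theorem~\ref{thm:converse_two_categories} and choose the remaining conditionals identically in both orientations. Your extra legitimacy checks (strict positivity of the induced root marginal of $j$, strict ordering of the constructed cutpoints, and compatibility with the global link) are details the paper leaves implicit.
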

\begin{proof}
See Appendix~\ref{app:proof_multivariate_converse}.
\end{proof}

Theorem~\ref{thm:multivariate_converse} shows that neither condition of Theorem~\ref{thm:multivariate_id} can be dropped, under the condition that the true DAG has isolated ordinal--exponential-family edges. When the edge is not isolated, the remaining parents of the endpoints impose constraints that the bivariate construction need not satisfy, and whether identifiability is restored at such an edge is beyond the scope of this paper. The converse regime is correspondingly narrow, requiring a two-node connected component of the skeleton together with either a two-point support or a binary ordinal node, and an affine sufficient statistic in either case. The conditions on the number of ordinal categories and the size of the support are therefore sufficient in an arbitrary causal DAG, and necessary as well for an edge whose endpoints carry no neighbors other than each other.

\section{Numerical Results and Discussion}
\label{sec:num}
\label{sec:three_node}

We illustrate the identifiability results of Section~\ref{sec:id} on synthetic data generated from the models of Section~\ref{sec:bg}. Since Definition~\ref{def:g_id} compares graphs sharing a skeleton, the estimation problem the theory poses is the orientation of a known bipartite skeleton rather than its recovery, and no procedure below inserts or deletes an edge. The graph considered here is sufficiently small that an exhaustive search yields an exact solution to the score minimization, so the recovery error reflects only finite-sample noise. The large-graph regime, in which greedy search supplies a scalable but heuristic estimator, is reported in Appendix~\ref{app:d_node}. {Our main aim for this section is to show that DAGs are identifiable beyond the MEC, in contrast to non-identifiable settings such as linear SEMs, where observational discovery is limited to the MEC \citep{spirtes2000causation, chickering2002optimal, peters2014identifiability}.} 
%\um{a feature we gain due to the LPM model and not true for SEMs}. 
% RESPONSE: Cannot write that as SEM is a very wide array of models and there is no paper out there which disproves identifiability for SEMs. The dangerous ones are the non-linear non-parametric NN based SEMs which will lead to SHD almost zero but no one can prove or disprove identifiability in those cases. Adding this line will invite a baseline request that will just be a disaster to handle.
Additionally, we note that estimation is not a central focus of this work. The estimators used in this section and the corresponding appendices are either exhaustive search-based or greedy heuristics, and are used solely to demonstrate the theoretical population-level identifiability results, which remain the paper's core focus.

Let $\vect{X} \in \mathbb{R}^{n \times d}$ be an $n$-sample dataset generated by an unknown $d$-node ground truth DAG $\mathcal{G}$, and let $\mathcal{P}_{\mathcal{G}_{\operatorname{est}}}\parn{i}$ be the parent set of node $i$ under an estimated graph $\mathcal{G}_{\operatorname{est}}$ with weighted adjacency matrix $\vect{W}_{\operatorname{est}}$. Candidates are scored by the Bayesian Information Criterion \citep{chickering2002optimal},
\begin{eqnarray}
\operatorname{BIC}\parn{\mathcal{G}_{\operatorname{est}}; \vect{X}}
 \triangleq -\sum_{j = 1}^{n} \sum_{i = 1}^{d} \ln p\parn{\vect{X}_{j, i} \mid \vect{X}_{j, \mathcal{P}_{\mathcal{G}_{\operatorname{est}}}\parn{i}}; \vect{W}_{\operatorname{est}}, \boldsymbol{\gamma}_i} + \, \frac{\log n}{2} \sum_{i = 1}^{d} k_i,
\label{eq:bic}
\end{eqnarray}
which is to be minimized over the acyclic orientations $\mathcal{O}\parn{\mathcal{G}_u}$ of the supplied skeleton, with $k_i$ the number of parameters fitted at node $i$. Both terms are sums over nodes, so the score decomposes into node contributions, and the parameters of each candidate are estimated per node using conditional maximum likelihood. Additional details on estimation algorithms, parameter values, and the simulation code repository are provided in Appendix~\ref{app:alg} and Appendix~\ref{app:per_node_mle}, with experimentation details in Appendix~\ref{app:exp_details}.

Every candidate carries the skeleton $\mathcal{G}_u$ of the truth, so an estimate can differ from the truth only in the direction assigned to an edge, and we measure recovery by the orientation error rate
\begin{equation}
\rho\parn{\mathcal{G}_{\operatorname{est}}, \mathcal{G}} \triangleq \frac{1}{\bars{\mathcal{E}_u}} \bars{\curl{\parn{i, j} \in \mathcal{E}_u \; : \; \mathcal{G}_{\operatorname{est}} \text{ and } \mathcal{G} \text{ orient } \parn{i, j} \text{ oppositely}}},
\label{eq:nshd}
\end{equation}
which measures the fraction of skeleton edges assigned the wrong direction. With the skeleton fixed, every discrepancy is a reversal, so $\rho$ is the structural Hamming distance normalized by edge count, lies in $\sqr{0, 1}$, and is comparable across skeletons of different size and density. An estimated graph that is Markov-equivalent to the truth but oriented differently has strictly positive $\rho$, so the figures below report identifiability \emph{within} Markov equivalence classes rather than \emph{up to} them.

\begin{figure}[!htbp]
\centering
\subfigure[$\mathcal{G}_1$: $X_1 \to X_2 \to X_3$ (chain)]{
\includegraphics[width=0.475\linewidth]{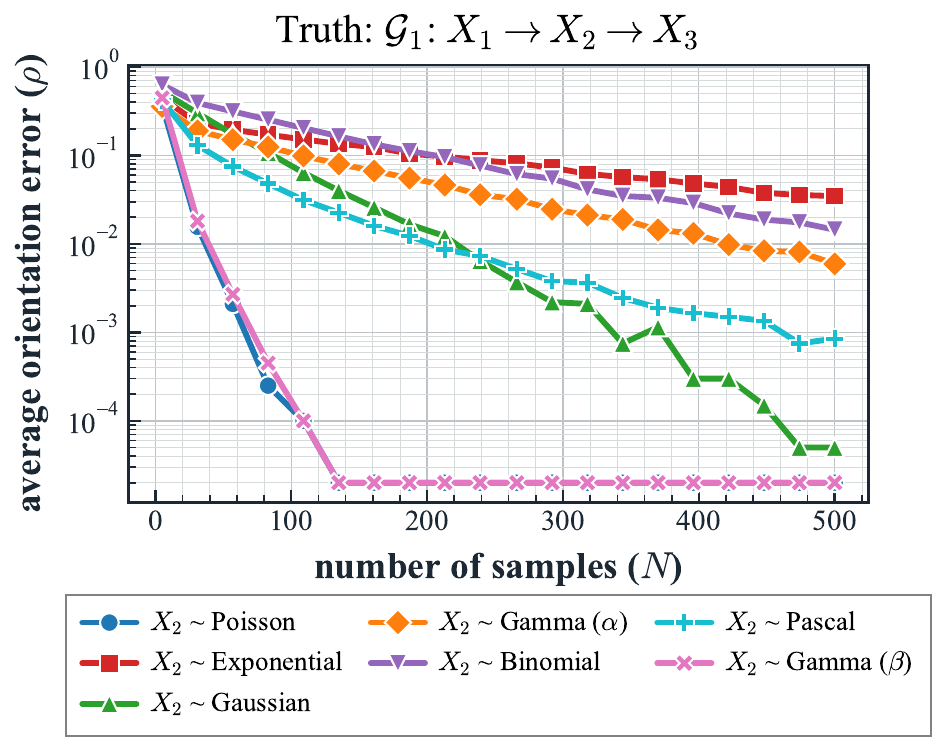}
\label{fig:g1_nshd}
}
\hfill
\subfigure[$\mathcal{G}_2$: $X_1 \to X_2 \leftarrow X_3$ (inverse fork)]{
\includegraphics[width=0.475\linewidth]{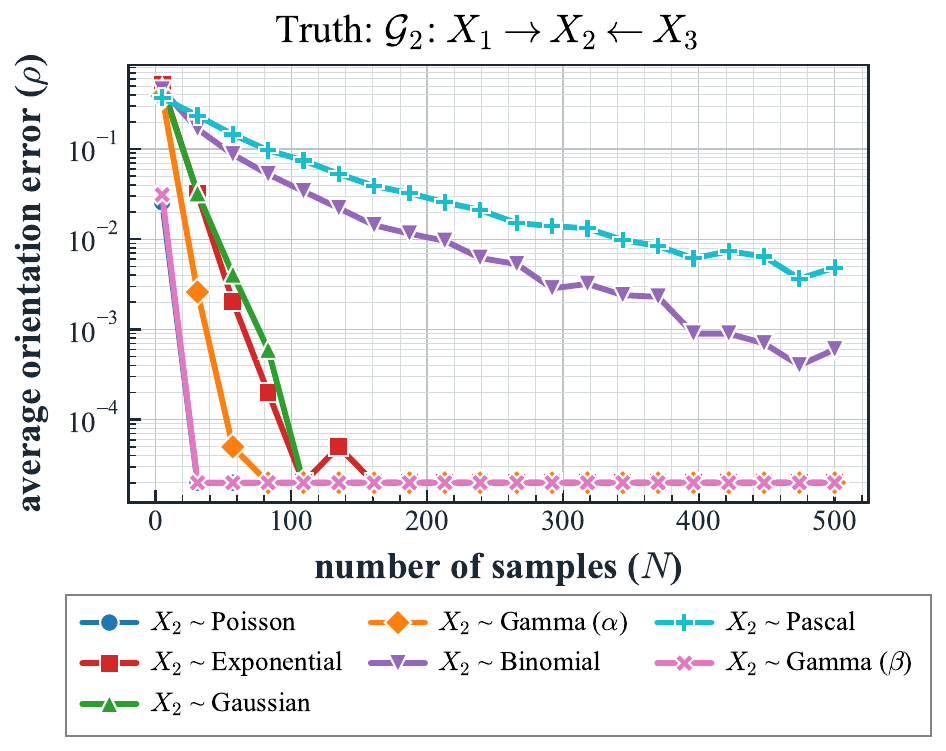}
\label{fig:g2_nshd}
}
\hfill
\subfigure[$\mathcal{G}_3$: $X_1 \leftarrow X_2 \to X_3$ (fork)]{
\includegraphics[width=0.475\linewidth]{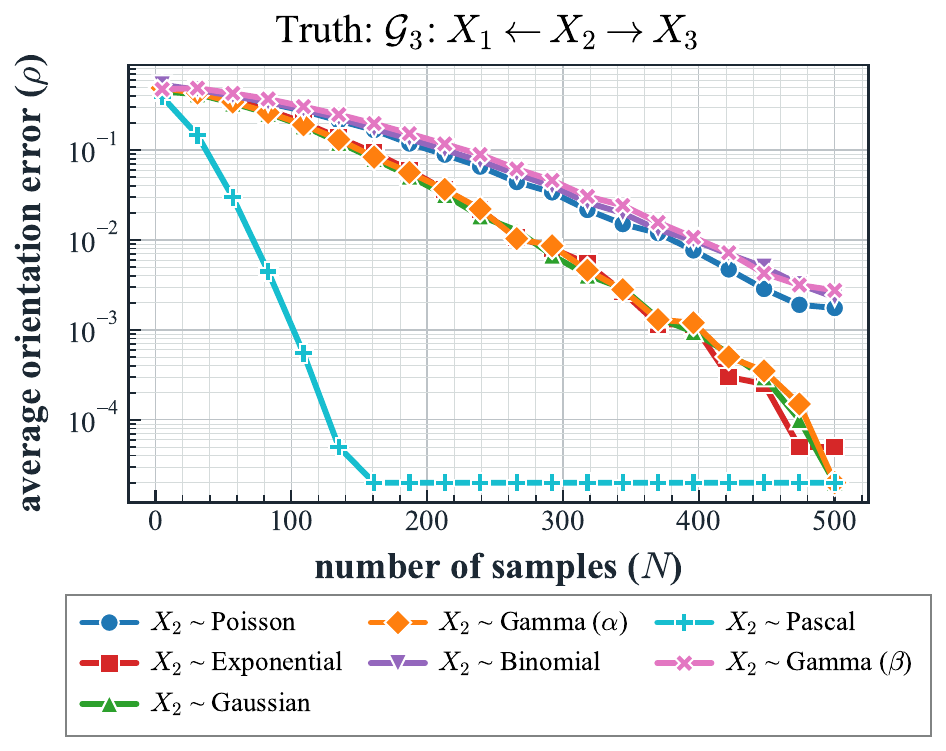}
\label{fig:g3_nshd}
}
\caption{Orientation error rate $\rho$ versus sample size $N$ for the three-node bipartite DAGs,
with $X_2$ drawn from each family of Table~\ref{tab:exp_fam_dists}, over $B = 10000$ trials.}
\label{fig:three_node_nshd}
\end{figure}

We consider three ground-truth bipartite DAGs over $\curl{X_1, X_2, X_3}$ with $X_1, X_3 \in \mathcal{V}_{\operatorname{ord}}$ and $X_2 \in \mathcal{V}_{\operatorname{exp}}$:
\begin{align}
&\mathcal{G}_1 \text{ (chain):}\quad X_1 \to X_2 \to X_3, \nonumber \\
&\mathcal{G}_2 \text{ (inverse fork):}\quad X_1 \to X_2 \leftarrow X_3, \nonumber \\
&\mathcal{G}_3 \text{ (fork):}\quad X_1 \leftarrow X_2 \to X_3.
\end{align}
The chain and the fork share a skeleton and have no v-structures, so both lie in one Markov equivalence class, while the inverse fork has a v-structure at $X_2$ and lies in another. Exhaustive search runs over the four acyclic orientations of $X_1 - X_2 - X_3$, exactly the set Theorem~\ref{thm:orient_id} separates, so any deviation from the truth is finite-sample noise rather than optimization error. {The four do not carry equal parameter counts: summing $k_i$ gives the edge count and the ordinal cutpoint totals, both fixed across orientations, plus the number of exponential-family nodes left without parents (which are treated as free variables as they can have an arbitrary marginal as given in \eqref{eq:ef_root}), so the fork carries one parameter more than the rest and the penalty in \eqref{eq:bic} charges accordingly.}

{Figure~\ref{fig:three_node_nshd} shows $\rho$ against $N$ for each structure, with $X_2$ drawn in turn from each of the seven families of Table~\ref{tab:exp_fam_dists}. Every curve decays with $N$, the flat asymptotes being the numerical floor of the log plot, at which no trial recovered an incorrect orientation. The three structures differ in the orientation error decay rates 
%\um{do these different decay rates make sense, are they predictable?}
%RESPONSE: No, its hard to quantify them here as exhaustive search is a discrete method and we have no mathematical result for BIC consistency (out of scope for this paper; I am deliberately keeping estimation as out of scope here and the focus is on population level identifaibility.
the inverse fork is fastest, reaching the floor for every family but Binomial and Pascal; the chain reaches the floor for Poisson and Gamma~$\parn{\beta}$ alone, with Exponential the slowest curve in the figure; and the fork moves least at the smallest sample sizes and accelerates as sample sizes get larger, reaching the floor for Pascal, Gaussian, and Gamma~$\parn{\alpha}$. The rate of descent depends on the criterion at finite $N$ and on the generating parameters of Appendix~\ref{app:exp_details}, which cannot be justified from population-level identifiability alone. 
%\um{awkward and too longe, break up?} 
We see in Figure~\ref{fig:three_node_nshd} that both the fork and the chain structures can be distinguished despite being in the same MEC.  That the error decays with the number of observations  and is zero for some of the families verifies the result that Theorem~\ref{thm:multivariate_id} establishes in the population law. The rapid convergence of the inverse fork instead reflects the ease of detecting a v-structure, {as it is the only member of its MEC,} 
%\um{why easier?} 
and serves as a check that the search favors no particular structure.}

\section{{Conclusions}}
\label{sec:conc}
{This paper establishes the identifiability of causal DAGs whose nodes follow either the ordered-logit ordinal model or a regular one-parameter exponential family, within the Linear Parametric Model framework defined herein. Two features distinguish the guarantee obtained: it holds at every parameter value rather than outside an exceptional set, and it does not constrain the sufficient statistic -- governing finite, countable, and continuous supports alike. We also provide converses highlighting the limits of the identifiability results and the necessity of the conditions; a multivariate extension that carries the conclusion to every such edge of a $d$-node DAG under the Markov property, causal sufficiency, and the absence of selection. Our numerical experiments confirm that the orientation error rate decays as the sample size increases within a Markov equivalence class. Future work includes consistency guarantees for likelihood-based estimators under the LPM, edges joining two exponential-family nodes, multi-parameter families with unknown nuisance parameters, and cumulative links beyond the logit.}

\acks{We are grateful to Christine K. Johnson and Yingying Wang of the University of California, Davis, for discussions of epidemiological practice, and to Joni Shaska for discussions of ordinal links and for early assistance with the simulation setup.

This work has been funded by one or all of the following grants: ARO W911NF1910269, ARO W911NF2410094, ONR N00014-22-1-2363, NSF CIF-2311653, NSF CIF-2148313, NSF RINGS-2148313, NSF DBI-2412522, and is also supported in part by funds from federal agencies and industry partners as specified in the RINGS program.}
\newpage

\appendix

\section{Proofs}
\subsection{Proof of Proposition~\ref{prop:ord_ne}}
\label{app:ord_ne}
\begin{proof}
We proceed by contradiction. Assume the conditional distribution $p_{X \mid Y}\parn{\cdot \mid \cdot}$ is a one-parameter exponential family in $X$. Then there exist a base measure $\curl{h_x}_{x = 1}^{s}$ with $h_x > 0$, a sufficient statistic $\curl{T_x}_{x = 1}^{s}$, a scalar natural parameter $\theta\parn{y}$, and a log-partition function $A$ such that
\begin{equation}
    p\parn{X = x \mid Y = y}
    = h_x \exp\!\parn{\theta\parn{y} T_x - A\parn{\theta\parn{y}}},
    \quad \forall\, x \in \mathcal{X},\ y \in \mathcal{Y}.
    \label{eq:ef_hyp}
\end{equation}
Fix distinct $x, x' \in \mathcal{X}$. Taking the ratio in \eqref{eq:ef_hyp} cancels the log-partition term, and taking logarithms gives
\begin{equation}
    \log \frac{p\parn{x \mid y}}{p\parn{x' \mid y}}
    = \log \frac{h_x}{h_{x'}} + \theta\parn{y} \parn{T_x - T_{x'}},
    \quad \forall\, y \in \mathcal{Y},
    \label{eq:ef_pair}
\end{equation}
so every pairwise log-ratio is affine in the scalar $\theta\parn{y}$, with slope $T_x - T_{x'}$ independent of $y$. Let $u, v, r$ be pairwise distinct with $T_v \neq T_r$. Writing \eqref{eq:ef_pair} for the pairs $\parn{u, r}$ and $\parn{v, r}$ and eliminating $\theta\parn{y}$ between them,
\begin{equation}
    \log \frac{p\parn{u \mid y}}{p\parn{r \mid y}}
    = \frac{T_u - T_r}{T_v - T_r}
      \log \frac{p\parn{v \mid y}}{p\parn{r \mid y}} + c,
    \quad \forall\, y \in \mathcal{Y},
    \label{eq:ef_affine}
\end{equation}
with $c$ independent of $y$. We exhibit three categories for which \eqref{eq:ef_affine} fails.
 
Take the categories $\curl{1, 2, s}$ and set $z \triangleq wy$, strictly monotone in $y$ since $w \neq 0$, so that distinct points of $\mathcal{Y}$ give distinct values of $z$. From \eqref{eq:ord_ne_cond} with the boundary conventions $e^{\gamma_0} = 0$ and $e^{\gamma_s} = \infty$,
\begin{equation}
    p\parn{1 \mid y} = \frac{e^{\gamma_1}}{e^{\gamma_1} + e^{z}},
    \quad
    p\parn{s \mid y} = \frac{e^{z}}{e^{\gamma_{s-1}} + e^{z}},
    \quad
    p\parn{2 \mid y} = \frac{\parn{e^{\gamma_2} - e^{\gamma_1}} e^{z}}
                            {\parn{e^{\gamma_1} + e^{z}} \parn{e^{\gamma_2} + e^{z}}},
    \label{eq:ef_probs}
\end{equation}
where $e^{\gamma_2} - e^{\gamma_1} > 0$ by the strict ordering of the cutpoints. Taking category $1$ as reference, define
\begin{equation}
    \ell_{21}\parn{z} \triangleq \log \frac{p\parn{2 \mid y}}{p\parn{1 \mid y}},
    \qquad
    \ell_{s1}\parn{z} \triangleq \log \frac{p\parn{s \mid y}}{p\parn{1 \mid y}}.
    \label{eq:ef_ell_def}
\end{equation}
Substituting \eqref{eq:ef_probs} into \eqref{eq:ef_ell_def},
\begin{align}
    \ell_{21}\parn{z} &= \log\!\parn{e^{\gamma_2} - e^{\gamma_1}} - \gamma_1 + z
                         - \log\!\parn{e^{\gamma_2} + e^{z}}, \nonumber \\
    \ell_{s1}\parn{z} &= z + \log\!\parn{e^{\gamma_1} + e^{z}} - \gamma_1
                         - \log\!\parn{e^{\gamma_{s-1}} + e^{z}},
    \label{eq:ef_ell}
\end{align}
both differentiable on $\mathbb{R}$, with
\begin{equation}
    \ell_{21}'\parn{z} = \sigma\!\parn{\gamma_2 - z} > 0,
    \qquad
    \ell_{s1}'\parn{z} = \sigma\!\parn{\gamma_{s-1} - z} + \sigma\!\parn{z - \gamma_1} > 0.
    \label{eq:ef_derivs}
\end{equation}
Since $\ell_{21}'\parn{z} > 0$, the map $z \mapsto \ell_{21}\parn{z}$ is strictly increasing, so the planar curve $z \mapsto \parn{\ell_{21}\parn{z}, \ell_{s1}\parn{z}}$ admits reparametrization by its first coordinate, along which
\begin{equation}
    \frac{d \ell_{s1}}{d \ell_{21}}
    = \frac{\sigma\!\parn{\gamma_{s-1} - z}}{\sigma\!\parn{\gamma_2 - z}}
    + \frac{\sigma\!\parn{z - \gamma_1}}{\sigma\!\parn{\gamma_2 - z}}.
    \label{eq:ef_ratio}
\end{equation}
Set $t \triangleq e^{z}$, strictly increasing in $z$. Substituting $\sigma\!\parn{\gamma - z} = e^{\gamma} / \parn{e^{\gamma} + t}$ and $\sigma\!\parn{z - \gamma_1} = t / \parn{e^{\gamma_1} + t}$ into \eqref{eq:ef_ratio},
\begin{equation}
    \frac{d \ell_{s1}}{d \ell_{21}} = F_1\parn{t} + F_2\parn{t},
    \quad \text{where }
    F_1\parn{t} \triangleq \frac{e^{\gamma_{s-1}} \parn{e^{\gamma_2} + t}}
                                {e^{\gamma_2} \parn{e^{\gamma_{s-1}} + t}},
    \quad
    F_2\parn{t} \triangleq \frac{t \parn{e^{\gamma_2} + t}}
                                {e^{\gamma_2} \parn{e^{\gamma_1} + t}}.
    \label{eq:ef_F}
\end{equation}
Differentiating in $t$,
\begin{equation}
    F_1'\parn{t} = \frac{e^{\gamma_{s-1}} \parn{e^{\gamma_{s-1}} - e^{\gamma_2}}}
                        {e^{\gamma_2} \parn{e^{\gamma_{s-1}} + t}^{2}} \geq 0,
    \qquad
    F_2'\parn{t} = \frac{t^{2} + 2 t e^{\gamma_1} + e^{\gamma_1 + \gamma_2}}
                        {e^{\gamma_2} \parn{e^{\gamma_1} + t}^{2}} > 0,
    \label{eq:ef_Fderivs}
\end{equation}
the first inequality holding since $\gamma_{s-1} \geq \gamma_2$, with equality throughout exactly when $s = 3$, and the second since $t > 0$. Hence $F_1'\parn{t} + F_2'\parn{t} > 0$, and since $t$ is strictly increasing in $z$,
\begin{align}
    & \frac{d}{d t}\!\parn{\frac{d \ell_{s1}}{d \ell_{21}}} > 0,
    \quad \forall\, t > 0 \nonumber \\
    \implies\ & \frac{d}{d z}\!\parn{\frac{d \ell_{s1}}{d \ell_{21}}} > 0,
    \quad \forall\, z \in \mathbb{R} \nonumber \\
    \implies\ & \frac{d^{2} \ell_{s1}}{d \ell_{21}^{2}}
    = \frac{1}{\ell_{21}'\parn{z}}
      \frac{d}{d z}\!\parn{\frac{d \ell_{s1}}{d \ell_{21}}} > 0,
    \quad \forall\, z \in \mathbb{R},
    \label{eq:ef_convex}
\end{align}
the last implication applying the chain rule to the reparametrization together with $\ell_{21}'\parn{z} > 0$ from \eqref{eq:ef_derivs}. Hence $\ell_{s1}$ is a strictly convex function of $\ell_{21}$, and no three distinct points of the curve are collinear.
 
It remains to contradict \eqref{eq:ef_affine}. Taking $x = 2$ and $x' = 1$ in \eqref{eq:ef_pair} and comparing with \eqref{eq:ef_ell_def},
\begin{equation}
    \ell_{21}\parn{z} = \log \frac{h_2}{h_1} + \theta\parn{y} \parn{T_2 - T_1}.
    \label{eq:ef_ell21_ef}
\end{equation}
If $T_2 = T_1$, then $\ell_{21}$ would be constant on $\mathcal{Y}$ by \eqref{eq:ef_ell21_ef}, contradicting $\ell_{21}'\parn{z} > 0$ in \eqref{eq:ef_derivs}. Hence $T_2 \neq T_1$, so \eqref{eq:ef_affine} applies to $\parn{u, v, r} = \parn{s, 2, 1}$ and yields constants $a = \parn{T_s - T_1} / \parn{T_2 - T_1}$ and $b$, both independent of $y$, with
\begin{equation}
    \ell_{s1}\parn{z} = a\, \ell_{21}\parn{z} + b.
    \label{eq:ef_collinear}
\end{equation}
Since $\bars{\mathcal{Y}} \geq 3$ and $\ell_{21}$ is strictly increasing in $z$, three distinct points of $\mathcal{Y}$ give three distinct points of the curve, which \eqref{eq:ef_collinear} places on a common line, contradicting the strict convexity established in \eqref{eq:ef_convex}.
Therefore, we conclude by contradiction that no base measure, sufficient statistic, natural parameter, and log-partition function satisfy \eqref{eq:ef_hyp}, and the conditional is therefore not a one-parameter exponential family in $X$.
\end{proof}

\subsection{Proof of Lemma \ref{lem:fwd_affine}}
\label{app:proof_fwd_affine}
\begin{proof}
We proceed by direct computation. From \eqref{eq:fwd_jnt}, the joint distribution under the forward model $\mathcal{M}_{X \to Y}$ is
\begin{equation}
    p_{X, Y}\parn{x, y; \mathcal{M}_{X \to Y}}
    = \pi_x H\parn{y} \exp\!\parn{\eta\!\parn{wx} T\parn{y} - a\!\parn{\eta\!\parn{wx}}},
    \quad \forall\, y \in \mathcal{Y},\ x \in \mathcal{X}.
\end{equation}
Since $\pi_x > 0$ and, by definition of an exponential family, $H\parn{y} \exp\!\parn{\eta\!\parn{wx} T\parn{y} - a\!\parn{\eta\!\parn{wx}}} > 0$ for all $y \in \mathcal{Y}$,
\begin{align}
    & p_{X, Y}\parn{x, y; \mathcal{M}_{X \to Y}} > 0,
    \quad \forall\, x \in \mathcal{X},\ y \in \mathcal{Y} \nonumber \\
    \implies\ & \sum_{x' \in \mathcal{X}}
    p_{X, Y}\parn{x', y; \mathcal{M}_{X \to Y}} > 0,
    \quad \forall\, y \in \mathcal{Y} \nonumber \\
    \implies\ & p_{Y}\parn{y; \mathcal{M}_{X \to Y}} > 0,
    \quad \forall\, y \in \mathcal{Y}.
    \label{eq:mar_Y_fwd}
\end{align}
Fix distinct $u, x \in \mathcal{X}$. Since $p_{Y}\parn{y; \mathcal{M}_{X \to Y}} > 0$ by \eqref{eq:mar_Y_fwd}, dividing numerator and denominator by the marginal equates the posterior ratio with the joint ratio,
\begin{equation}
    \frac{p_{X \mid Y}\parn{u \mid y; \mathcal{M}_{X \to Y}}}
         {p_{X \mid Y}\parn{x \mid y; \mathcal{M}_{X \to Y}}}
    = \frac{p_{X \mid Y}\parn{u \mid y; \mathcal{M}_{X \to Y}}\,
            p_{Y}\parn{y; \mathcal{M}_{X \to Y}}}
           {p_{X \mid Y}\parn{x \mid y; \mathcal{M}_{X \to Y}}\,
            p_{Y}\parn{y; \mathcal{M}_{X \to Y}}}
    = \frac{p_{X,Y}\parn{u, y; \mathcal{M}_{X \to Y}}}
           {p_{X,Y}\parn{x, y; \mathcal{M}_{X \to Y}}},
\end{equation}
the last equality applying the product rule $p_{X,Y}\parn{x,y} = p_{X \mid Y}\parn{x \mid y}\, p_{Y}\parn{y}$. Taking logarithms and substituting \eqref{eq:fwd_jnt},
\begin{align}
    R\parn{y, u, x; \mathcal{M}_{X \to Y}}
    &= \log \frac{p_{X,Y}\parn{u, y; \mathcal{M}_{X \to Y}}}
                 {p_{X,Y}\parn{x, y; \mathcal{M}_{X \to Y}}} \nonumber \\
    &= \parn{\eta\parn{wu} - \eta\parn{wx}} T\parn{y}
       + a\parn{\eta\parn{wx}} - a\parn{\eta\parn{wu}}
       + \log\!\parn{\frac{\pi_u}{\pi_x}}.
\end{align}
Define $\alpha$ and $\beta$, independent of $y$, as
\begin{equation}
    \alpha = \eta\parn{wu} - \eta\parn{wx}, \qquad
    \beta = a\parn{\eta\parn{wx}} - a\parn{\eta\parn{wu}}
    + \log\!\parn{\frac{\pi_u}{\pi_x}},
\end{equation}
where $\alpha \neq 0$ since $\eta\parn{\cdot}$ is monotone injective, $w \neq 0$, and $u \neq x$, yielding
\begin{equation}
    R\parn{y, u, x; \mathcal{M}_{X \to Y}} = \alpha\, T\parn{y} + \beta,
\end{equation}
as claimed.
\end{proof}

\subsection{Proof of Lemma \ref{lem:rev_bounds}}
\label{app:proof_rev_bounds}
\begin{proof}
We compute the two extreme adjacent log-odds-ratios from the conditional \eqref{eq:rev_cond_X} and the definition \eqref{eq:log_odds_ratio}. Using $\sigma\parn{\gamma_k - vy} = e^{\gamma_k} / \parn{e^{\gamma_k} + e^{vy}}$ together with the boundary conventions $e^{\gamma_0} = 0$ and $e^{\gamma_s} = \infty$, we obtain 
\begin{align}
    B\parn{y} &= \log\parn{\frac{p_{X \mid Y}\parn{2 \mid y; \mathcal{M}_{Y \to X}}}{p_{X \mid Y}\parn{1 \mid y; \mathcal{M}_{Y \to X}}}} \nonumber \\
    &= \log\parn{\frac{\sigma\parn{\gamma_2 - vy} - \sigma\parn{\gamma_1 - vy}}{\sigma\parn{\gamma_1 - vy} - \sigma\parn{\gamma_0 - vy}}} \nonumber \\
    &= \log\parn{\frac{e^{vy}\parn{e^{\gamma_2} - e^{\gamma_1}}}{e^{\gamma_1}\parn{e^{\gamma_2} + e^{vy}}}} \nonumber \\
    &= \log\parn{\frac{e^{\gamma_2} - e^{\gamma_1}}{e^{\gamma_1}}} - \log\parn{1 + e^{\gamma_2 - vy}}.
\end{align}
Similarly, for $U\parn{y}$ we have
\begin{align}
    U\parn{y} &= \log\parn{\frac{p_{X \mid Y}\parn{s \mid y; \mathcal{M}_{Y \to X}}}{p_{X \mid Y}\parn{s - 1 \mid y; \mathcal{M}_{Y \to X}}}} \nonumber \\
    &= \log\parn{\frac{\sigma\parn{\gamma_s - vy} - \sigma\parn{\gamma_{s - 1} - vy}}{\sigma\parn{\gamma_{s - 1} - vy} - \sigma\parn{\gamma_{s - 2} - vy}}} \nonumber \\
    &= \log\parn{\frac{e^{\gamma_{s - 2}} + e^{vy}}{e^{\gamma_{s - 1}} - e^{\gamma_{s - 2}}}} \nonumber \\
    &= \log\parn{\frac{e^{\gamma_{s - 2}}}{e^{\gamma_{s - 1}} - e^{\gamma_{s - 2}}}} + \log\parn{1 + e^{vy - \gamma_{s - 2}}}.
\end{align}
Defining constants $c_B$ and $c_U$, independent of $y$, as
\begin{equation}
    c_B = \log\parn{\frac{e^{\gamma_2} - e^{\gamma_1}}{e^{\gamma_1}}}, \quad
    c_U = \log\parn{\frac{e^{\gamma_{s - 2}}}{e^{\gamma_{s - 1}} - e^{\gamma_{s - 2}}}},
\end{equation}
where the strict ordering of the cutpoints together with $s \geq 3$, so that $s - 2 \geq 1$, ensures that $c_B$ and $c_U$ are finite, yields the closed-form expressions
\begin{equation}
    B\parn{y} = c_B - \log\parn{1 + e^{\gamma_2 - vy}}, \quad
    U\parn{y} = c_U + \log\parn{1 + e^{vy - \gamma_{s - 2}}}.
\end{equation}
The closed forms involve $y$ only through $e^{\pm vy}$ and are therefore twice differentiable at every $y \in \mathbb{R}$, whereas $B$ and $U$ are defined only on $\mathcal{Y}$, which need not be an interval. Let $\bar{B}, \bar{U} : \mathbb{R} \to \mathbb{R}$ denote the functions the closed forms define on all of $\mathbb{R}$, so $B = \bar{B}\big\vert_{\mathcal{Y}}$ and $U = \bar{U}\big\vert_{\mathcal{Y}}$. Differentiating twice with respect to $y$,
\begin{align}
    \frac{d^2 \bar{B}\parn{y}}{dy^2} &= -\frac{v^2 e^{\gamma_2 - vy}}{\parn{1 + e^{\gamma_2 - vy}}^2} < 0, \quad \forall\, y \in \mathbb{R}, \\
    \frac{d^2 \bar{U}\parn{y}}{dy^2} &= \frac{v^2 e^{vy - \gamma_{s - 2}}}{\parn{1 + e^{vy - \gamma_{s - 2}}}^2} > 0, \quad \forall\, y \in \mathbb{R},
\end{align}
both inequalities being strict because $v \neq 0$. Hence $\bar{B}$ is strictly concave and $\bar{U}$ strictly convex on $\mathbb{R}$, with $B$ and $U$ having their restrictions to $\mathcal{Y}$.
\end{proof}

\subsection{Proofs of Theorem \ref{thm:general_id}}
\label{app:proof_general_id}

\begin{namedproof}{First proof, by the curvature of Lemma~\ref{lem:rev_bounds}}
\begin{figure}[!htbp]
\centering
%==================== (a) the two lemmas ====================
\begin{minipage}[t]{0.49\textwidth}
\centering
\begin{tikzpicture}[baseline=0pt, scale=0.90]
  \draw[->,semithick] (-0.25,0) -- (5.4,0) node[right,font=\scriptsize] {$y$};
  \draw[->,semithick] (-0.25,0) -- (-0.25,3.6);
  \draw[thick,blue!65!black] plot coordinates {(0.000,0.254) (0.167,0.372) (0.333,0.487) (0.500,0.598) (0.667,0.705) (0.833,0.807) (1.000,0.904) (1.167,0.995) (1.333,1.079) (1.500,1.157) (1.667,1.229) (1.833,1.293) (2.000,1.351) (2.167,1.403) (2.333,1.448) (2.500,1.487) (2.667,1.521) (2.833,1.550) (3.000,1.575) (3.167,1.596) (3.333,1.614) (3.500,1.629) (3.667,1.641) (3.833,1.651) (4.000,1.660) (4.167,1.667) (4.333,1.673) (4.500,1.678) (4.667,1.682) (4.833,1.685) (5.000,1.688)};
  \draw[thick,red!65!black] plot coordinates {(0.000,1.712) (0.167,1.715) (0.333,1.718) (0.500,1.722) (0.667,1.727) (0.833,1.733) (1.000,1.740) (1.167,1.749) (1.333,1.759) (1.500,1.771) (1.667,1.786) (1.833,1.804) (2.000,1.825) (2.167,1.850) (2.333,1.879) (2.500,1.913) (2.667,1.952) (2.833,1.997) (3.000,2.049) (3.167,2.107) (3.333,2.171) (3.500,2.243) (3.667,2.321) (3.833,2.405) (4.000,2.496) (4.167,2.593) (4.333,2.695) (4.500,2.802) (4.667,2.913) (4.833,3.028) (5.000,3.146)};
  \draw[dashed,black!75] (0.833,0.807) -- (4.167,1.667);
  \foreach \x/\y in {0.833/0.807, 4.167/1.667}
    {\fill[black] (\x,\y) circle (1.5pt);}
  \fill[black] (2.500,1.487) circle (1.5pt);
  \draw[dotted,black!60] (2.500,1.237) -- (2.500,1.487);
  \fill[black!40] (2.500,1.237) circle (1.2pt);
  \node[font=\scriptsize,blue!65!black,anchor=west] at (3.35,1.25) {$\bar{B}$ concave};
  \node[font=\scriptsize,red!65!black,anchor=east]  at (4.9,3.30) {$\bar{U}$ convex};
  \draw[dotted,black!55] (3.00,0.98) -- (3.00,1.36);
  \node[font=\scriptsize,anchor=north] at (3.00,0.96) {$\alpha_{2,1}T\parn{y}+\beta_{2,1}$};
  \node[font=\small] at (2.5,-0.85) {(a) Lemma~\ref{lem:fwd_affine} straight, Lemma~\ref{lem:rev_bounds} bent};
\end{tikzpicture}
\end{minipage}
\hfill
%==================== (b) the contradiction ====================
\begin{minipage}[t]{0.49\textwidth}
\centering
\begin{tikzpicture}[baseline=0pt, scale=0.90]
  \draw[->,semithick] (-0.25,0) -- (5.4,0) node[right,font=\scriptsize] {$y$};
  \draw[->,semithick] (-0.25,0) -- (-0.25,3.0);
  \draw[thick,violet!70!black] plot coordinates {(0.000,0.501) (0.167,0.704) (0.333,0.901) (0.500,1.091) (0.667,1.274) (0.833,1.448) (1.000,1.611) (1.167,1.761) (1.333,1.899) (1.500,2.021) (1.667,2.126) (1.833,2.214) (2.000,2.283) (2.167,2.333) (2.333,2.363) (2.500,2.373) (2.667,2.363) (2.833,2.333) (3.000,2.283) (3.167,2.214) (3.333,2.126) (3.500,2.021) (3.667,1.899) (3.833,1.761) (4.000,1.611) (4.167,1.448) (4.333,1.274) (4.500,1.091) (4.667,0.901) (4.833,0.704) (5.000,0.501)};
  \draw[dashed,black!75] (0.05,1.947) -- (5.05,1.947);
  \fill[black] (1.397,1.947) circle (1.5pt);
  \fill[black] (3.603,1.947) circle (1.5pt);
  \fill[black] (2.500,2.373) circle (1.5pt);
  \foreach \x/\lab in {1.397/{y_1}, 2.500/{y_2}, 3.603/{y_3}}
    {\draw[dotted,black!45] (\x,0.06) -- (\x,1.93);
     \node[font=\scriptsize, anchor=north] at (\x,-0.02) {$\lab$};}
  \draw[dotted,black!60] (2.500,1.947) -- (2.500,2.373);
  \node[font=\scriptsize,violet!70!black,anchor=west] at (3.30,2.80) {$\Psi$ strictly curved};
  \draw[dotted,black!55] (1.00,2.58) -- (1.00,1.96);
  \node[font=\scriptsize,anchor=west] at (0.05,2.70) {forced constant};
  \node[font=\small] at (2.5,-0.85) {(b) at most two crossings, but $\bars{\mathcal{Y}} \geq 3$};
\end{tikzpicture}
\end{minipage}
\caption{The asymmetry behind Theorem~\ref{thm:general_id}.}
\label{fig:curvature}
\end{figure}
We proceed by contradiction. Assume that the models are not identifiable. By Definition~\ref{def:d_id}, there exist joint distributions $p_{X,Y} \in \mathcal{M}_{X \to Y}$ and $q_{X, Y} \in \mathcal{M}_{Y \to X}$ such that
\begin{equation}
    p_{X, Y}\parn{x, y} = q_{X, Y}\parn{x, y}, \quad \forall\, x \in \mathcal{X},\ y \in \mathcal{Y}.
\end{equation}
Summing over $x \in \mathcal{X}$ gives $p_Y\parn{y} = q_Y\parn{y}$, strictly positive, and dividing the joint by the common marginal yields
\begin{equation}
    p_{X \mid Y}\parn{x \mid y} = q_{X \mid Y}\parn{x \mid y}, \quad \forall\, x \in \mathcal{X},\ y \in \mathcal{Y}.
\end{equation}
Hence, for distinct $u, x \in \mathcal{X}$ and any $y \in \mathcal{Y}$, the log-odds-ratios of the two models coincide,
\begin{equation}
    R\parn{y, u, x; \mathcal{M}_{X \to Y}} = R\parn{y, u, x; \mathcal{M}_{Y \to X}}, \quad \forall\, y \in \mathcal{Y},\ u, x \in \mathcal{X},\ u \neq x.
    \label{eq:p_eq_lgo_eq_t2}
\end{equation}
From Lemma~\ref{lem:fwd_affine}, there exist real $\alpha_{u,x} \neq 0$ and
$\beta_{u,x}$, independent of $y$, such that
\begin{equation}
    R\parn{y, u, x; \mathcal{M}_{X \to Y}} = \alpha_{u, x} T\parn{y} + \beta_{u, x}, \quad \forall\, y \in \mathcal{Y},\ u, x \in \mathcal{X},\ u \neq x.
\end{equation}
Combining this with \eqref{eq:p_eq_lgo_eq_t2},
\begin{equation}
    R\parn{y, u, x; \mathcal{M}_{Y \to X}} = \alpha_{u, x} T\parn{y} + \beta_{u, x}, \quad \forall\, y \in \mathcal{Y},\ u, x \in \mathcal{X},\ u \neq x.
    \label{eq:p_eq_rev_affine_t2}
\end{equation}
Choosing $u = 2$ and $x = 1$, valid since $s \geq 3$, and substituting into
\eqref{eq:p_eq_rev_affine_t2},
\begin{align}
    & R\parn{y, 2, 1; \mathcal{M}_{Y \to X}} = \alpha_{2,1} T\parn{y} + \beta_{2,1}, \quad \forall\, y \in \mathcal{Y}, \nonumber \\
    \implies\ & B\parn{y} = \alpha_{2,1} T\parn{y} + \beta_{2,1}, \quad \forall\, y \in \mathcal{Y},
    \label{eq:p_eq_b_affine_rev_t2}
\end{align}
where $B\parn{y}$ is the lower extreme log-odds-ratio as defined in \eqref{eq:b_u_def}. Choosing $u = s$ and $x = s - 1$, also valid since $s \geq 3$, and substituting into \eqref{eq:p_eq_rev_affine_t2},
\begin{align}
    & R\parn{y, s, s - 1; \mathcal{M}_{Y \to X}} = \alpha_{s,s - 1} T\parn{y} + \beta_{s,s - 1}, \quad \forall\, y \in \mathcal{Y}, \nonumber \\
    \implies\ & U\parn{y} = \alpha_{s,s - 1} T\parn{y} + \beta_{s,s - 1}, \quad \forall\, y \in \mathcal{Y},
    \label{eq:p_eq_u_affine_rev_t2}
\end{align}
where $U\parn{y}$ is the upper extreme log-odds-ratio also defined in \eqref{eq:b_u_def}. Define
\begin{equation}
    \Psi\parn{y} \triangleq \alpha_{s,s - 1} B\parn{y} - \alpha_{2,1} U\parn{y}.
    \label{eq:p_eq_psi_def_t2}
\end{equation}
Multiplying \eqref{eq:p_eq_b_affine_rev_t2} by $\alpha_{s,s-1}$ and
\eqref{eq:p_eq_u_affine_rev_t2} by $\alpha_{2,1}$ and subtracting, the term in
$T\parn{y}$ cancels, leaving only the intercepts, so that
\begin{equation}
    \Psi\parn{y} = \alpha_{s,s - 1}\beta_{2,1} - \alpha_{2,1}\beta_{s,s - 1}, \quad \forall\, y \in \mathcal{Y},
    \label{eq:p_eq_psi_zero}
\end{equation}
the right-hand side being independent of $y$. Set $t \triangleq e^{vy}$, a strictly monotone injective transformation of $y$, and let $\mathcal{T} \triangleq \curl{e^{vy} : y \in \mathcal{Y}} \subseteq \parn{0, \infty}$. Substituting the closed forms of $\bar{B}$ and $\bar{U}$ from Lemma~\ref{lem:rev_bounds} into \eqref{eq:p_eq_psi_def_t2}, which extends $\Psi$ from $\mathcal{T}$ to all of $\parn{0, \infty}$, and using $e^{\gamma_2 - vy} = e^{\gamma_2}/t$ and $e^{vy - \gamma_{s-2}} = t/e^{\gamma_{s-2}}$ we get,
\begin{equation}
    \Psi\parn{t} = \alpha_{s, s - 1}\parn{\log t - \log\parn{e^{\gamma_2} + t}} - \alpha_{2, 1}\log\parn{e^{\gamma_{s - 2}} + t} + k, \quad \forall\, t \in \parn{0, \infty},
    \label{eq:p_eq_psi_t_t2}
\end{equation}
where $k$ collects the terms independent of $t$. Setting the derivative in $t$ to zero,
\begin{align}
    & \frac{d\Psi\parn{t}}{dt} = 0 \nonumber \\
    \implies\ & \alpha_{s, s - 1}\parn{\frac{1}{t} - \frac{1}{e^{\gamma_2} + t}} - \frac{\alpha_{2, 1}}{e^{\gamma_{s - 2}} + t} = 0.
\end{align}
Since $t > 0$, both $e^{\gamma_2} + t > 0$ and $e^{\gamma_{s - 2}} + t > 0$; clearing these denominators reduces the equation to
\begin{equation}
    \alpha_{2, 1} t^2 + \parn{\alpha_{2,1} - \alpha_{s, s - 1}} e^{\gamma_2} t - \alpha_{s, s - 1} e^{\gamma_2 + \gamma_{s - 2}} = 0.
    \label{eq:p_eq_quad_t2}
\end{equation}
By Lemma~\ref{lem:fwd_affine} and \eqref{eq:fwd_cond_Y}, $\eta\parn{\cdot}$ is monotone injective and $\alpha_{u,x} = \eta\parn{wu} - \eta\parn{wx} \neq 0$ for distinct $u, x$. Hence
\begin{equation}
    \frac{\alpha_{s, s - 1}}{\alpha_{2, 1}} = \frac{\eta\parn{sw} - \eta\parn{sw - w}}{\eta\parn{2w} - \eta\parn{w}} > 0, \quad \forall\, w \neq 0,
    \label{eq:p_eq_rat_alpha_t2}
\end{equation}
since a monotone $\eta$ makes both differences share the sign of $w$. Writing $t_+$ and $t_-$ for the two roots of \eqref{eq:p_eq_quad_t2}, their product is
\begin{equation}
    t_+ t_- = -\frac{\alpha_{s, s - 1}}{\alpha_{2, 1}} e^{\gamma_2 + \gamma_{s - 2}} < 0,
\end{equation}
by \eqref{eq:p_eq_rat_alpha_t2}, so exactly one root is strictly positive. Without loss of generality, fix $t_+ > 0$ and $t_- < 0$. Since $\mathcal{T} \subseteq \parn{0, \infty}$, the only admissible critical point of $\Psi$ is $t_+$. Consequently $\Psi$ has exactly one critical point on $\parn{0, \infty}$, is strictly monotone on each of $\parn{0, t_+}$ and $\parn{t_+, \infty}$, and hence attains any given value at most twice.

Because $t = e^{vy}$ is a bijection from $\mathbb{R}$ onto $\parn{0, \infty}$, the points of $\mathcal{Y}$ map to distinct points of $\mathcal{T}$. But \eqref{eq:p_eq_psi_zero} transported to $t$ makes $\Psi$ constant on $\mathcal{T}$, and since $\bars{\mathcal{Y}} \geq 3$ there exist three pairwise distinct $\tau_1, \tau_2, \tau_3 \in \mathcal{T}$ at which $\Psi$ attains the common value $\alpha_{s,s - 1}\beta_{2,1} - \alpha_{2,1}\beta_{s,s - 1}$, contradicting the preceding paragraph.

The contradiction establishes that the affine representations \eqref{eq:p_eq_b_affine_rev_t2} and \eqref{eq:p_eq_u_affine_rev_t2} cannot both hold. By Lemma~\ref{lem:fwd_affine} the two are equivalent to \eqref{eq:p_eq_lgo_eq_t2}, a necessary consequence of the equality of conditionals and in turn of the equality of joints. Hence no $p_{X,Y} \in \mathcal{M}_{X \to Y}$ and $q_{X,Y} \in \mathcal{M}_{Y \to X}$ coincide on $\mathcal{X} \times \mathcal{Y}$; the two classes induce disjoint sets of joint distributions, and by Definition~\ref{def:d_id} the edge direction is distributionally identifiable.

Figure~\ref{fig:curvature} gives a visual summary of the proof. Panel (a) presents the obstruction the affine representations \eqref{eq:p_eq_b_affine_rev_t2} face, a strictly concave $\bar{B}$ meeting any straight line at no more than two points, and panel (b) presents the same obstruction after the elimination, $\Psi$ strictly curved yet held constant on $\mathcal{Y}$ by \eqref{eq:p_eq_psi_zero}, which the three points supplied by $\bars{\mathcal{Y}} \geq 3$ contradict.
\end{namedproof}

\begin{namedproof}{Second proof, by Proposition~\ref{prop:ord_ne}}
We again argue by contradiction, showing this time that the forward model enforces the conditional distribution of $X$ given $Y$ to be a one-parameter exponential family in $X$, whereas the reverse model, by Proposition~\ref{prop:ord_ne}, does not. Assume that the models are not identifiable. Then, from Definition~\ref{def:d_id}, there exist joint distributions $p_{X,Y} \in \mathcal{M}_{X \to Y}$ and $q_{X,Y} \in \mathcal{M}_{Y \to X}$ such that
\begin{equation}
    p_{X,Y}\parn{x, y} = q_{X,Y}\parn{x, y}, \quad \forall\, x \in \mathcal{X},\ y \in \mathcal{Y}.
    \label{eq:p2_jnt}
\end{equation}
Summing over $x \in \mathcal{X}$, the marginals of $Y$ under the two models satisfy
\begin{align}
    & p_{X,Y}\parn{x, y} = q_{X,Y}\parn{x, y}, \quad \forall\, x \in \mathcal{X},\ y \in \mathcal{Y} \nonumber \\
    \implies\ & \sum_{x' \in \mathcal{X}} p_{X,Y}\parn{x', y} = \sum_{x' \in \mathcal{X}} q_{X,Y}\parn{x', y}, \quad \forall\, y \in \mathcal{Y} \nonumber \\
    \implies\ & p_{Y}\parn{y} = q_{Y}\parn{y}, \quad \forall\, y \in \mathcal{Y}.
\end{align}
The marginals of $Y$ thus coincide, and since $p_{Y}\parn{y} > 0$ for all $y \in \mathcal{Y}$ by \eqref{eq:mar_Y_fwd}, so too is $q_{Y}\parn{y} > 0$. Combining with \eqref{eq:p2_jnt}, the product rule gives
\begin{equation}
    p_{X \mid Y}\parn{x \mid y; \mathcal{M}_{X \to Y}} = p_{X \mid Y}\parn{x \mid y; \mathcal{M}_{Y \to X}}, \quad \forall\, x \in \mathcal{X},\ y \in \mathcal{Y}.
    \label{eq:p2_cond}
\end{equation}

We now compute the left-hand side of \eqref{eq:p2_cond}. Applying Bayes' rule to the marginal of \eqref{eq:fwd_mar_X} and the $Y \mid X$ conditional given in \eqref{eq:fwd_cond_Y}, further canceling the base measure $H\parn{y}$, which is strictly positive and common to numerator and denominator, yields
\begin{equation}
    p_{X \mid Y}\parn{x \mid y; \mathcal{M}_{X \to Y}} = \frac{\pi_x \exp\parn{\eta\parn{wx} T\parn{y} - a\parn{\eta\parn{wx}}}}{\sum_{x' \in \mathcal{X}} \pi_{x'} \exp\parn{\eta\parn{wx'} T\parn{y} - a\parn{\eta\parn{wx'}}}}, \quad \forall\, x \in \mathcal{X},\ y \in \mathcal{Y}.
    \label{eq:p2_bayes}
\end{equation}
Define the quantities
\begin{equation}
    h_x \triangleq \pi_x e^{-a\parn{\eta\parn{wx}}}, \quad
    \tilde{T}_x \triangleq \eta\parn{wx}, \quad
    \theta\parn{y} \triangleq T\parn{y}, \quad
    A\parn{\theta} \triangleq \log \sum_{x' \in \mathcal{X}} h_{x'} e^{\theta \tilde{T}_{x'}},
    \label{eq:p2_ef_data}
\end{equation}
where $h_x > 0$ for every $x \in \mathcal{X}$, since $\pi_x > 0$ by \eqref{eq:fwd_mar_X} and the log-partition function $a$ is finite on the natural parameter space. Substituting \eqref{eq:p2_ef_data} into \eqref{eq:p2_bayes} yields
\begin{align}
    & p_{X \mid Y}\parn{x \mid y; \mathcal{M}_{X \to Y}} = \frac{h_x e^{\theta\parn{y} \tilde{T}_x}}{\sum_{x' \in \mathcal{X}} h_{x'} e^{\theta\parn{y} \tilde{T}_{x'}}}, \quad \forall\, x \in \mathcal{X},\ y \in \mathcal{Y} \nonumber \\
    \implies\ & p_{X \mid Y}\parn{x \mid y; \mathcal{M}_{X \to Y}} = h_x \exp\parn{\theta\parn{y} \tilde{T}_x - A\parn{\theta\parn{y}}}, \quad \forall\, x \in \mathcal{X},\ y \in \mathcal{Y},
    \label{eq:p2_is_ef}
\end{align}
which is a one-parameter exponential family on $\mathcal{X}$ indexed by $y$, with base measure $\curl{h_x}$, sufficient statistic $\curl{\tilde{T}_x}$, scalar natural parameter $\theta\parn{y}$, and log-partition function $A$. Equation \eqref{eq:p2_is_ef} restates Lemma~\ref{lem:fwd_affine} in a different form, since taking log-ratios returns the affine representation \eqref{eq:forward_lpr} with $\alpha_{u,x} = \tilde{T}_u - \tilde{T}_x$.

We next consider the right-hand side of \eqref{eq:p2_cond}, which by \eqref{eq:rev_cond_X} is the ordered-logit conditional with edge weight $v \neq 0$ and strictly ordered cutpoints $-\infty = \gamma_0 < \gamma_1 < \cdots < \gamma_{s-1} < \gamma_s = +\infty$, on the support $\mathcal{X} = \curl{1, \dots, s}$ with $s \geq 3$, indexed by $y$ ranging over $\mathcal{Y}$ with $\bars{\mathcal{Y}} \geq 3$. These are the conditions of Proposition~\ref{prop:ord_ne}, under which the conditional admits no representation of the form \eqref{eq:p2_is_ef}, for any base measure, sufficient statistic, natural parameter, and log-partition function whatsoever.

The two sides of \eqref{eq:p2_cond} therefore cannot be equal, the left-hand side being a one-parameter exponential family in $X$ indexed by $y$ and the right-hand side admitting no such representation. Hence \eqref{eq:p2_cond} cannot hold, and being a necessary consequence of \eqref{eq:p2_jnt}, renders the assumption \eqref{eq:p2_jnt} untenable. No $p_{X,Y} \in \mathcal{M}_{X \to Y}$ and $q_{X,Y} \in \mathcal{M}_{Y \to X}$ coincide on $\mathcal{X} \times \mathcal{Y}$; the two classes induce disjoint sets of joint distributions, and by Definition~\ref{def:d_id} the edge direction is distributionally identifiable.
\end{namedproof}

\subsection{Proof of Proposition~\ref{prop:general_link}}
\label{app:proof_general_link}
\begin{proof}
Assume, to show contradiction, that the two model classes are not disjoint. Then there exist joint distributions $p_{X,Y} \in \mathcal{M}_{X \to Y}$ and $q_{X,Y} \in \mathcal{M}_{Y \to X}$ such that
\begin{equation}
    p_{X,Y}\parn{x, y} = q_{X,Y}\parn{x, y}, \quad \forall\, x \in \mathcal{X},\ y \in \mathcal{Y}.
    \label{eq:gl_jnt}
\end{equation}
Since $F$ is strictly increasing and the cutpoints are strictly ordered, every difference $F\parn{\gamma_x - vy} - F\parn{\gamma_{x-1} - vy}$ in \eqref{eq:gen_link_cond} is strictly positive, so the reverse conditional is strictly positive on $\mathcal{X}$ for every $y \in \mathcal{Y}$ and the quantities \eqref{eq:gen_link_bu} are well defined.

Summing \eqref{eq:gl_jnt} over $x \in \mathcal{X}$, the marginals of $Y$ under the two models satisfy
\begin{align}
    & p_{X,Y}\parn{x, y} = q_{X,Y}\parn{x, y}, \quad \forall\, x \in \mathcal{X},\ y \in \mathcal{Y} \nonumber \\
    \implies\ & \sum_{x' \in \mathcal{X}} p_{X,Y}\parn{x', y} = \sum_{x' \in \mathcal{X}} q_{X,Y}\parn{x', y}, \quad \forall\, y \in \mathcal{Y} \nonumber \\
    \implies\ & p_{Y}\parn{y} = q_{Y}\parn{y}, \quad \forall\, y \in \mathcal{Y}.
\end{align}
This common marginal is strictly positive, so dividing \eqref{eq:gl_jnt} by it and applying the product rule gives
\begin{equation}
    p_{X \mid Y}\parn{x \mid y} = q_{X \mid Y}\parn{x \mid y}, \quad \forall\, x \in \mathcal{X},\ y \in \mathcal{Y},
    \label{eq:gl_cond}
\end{equation}
and hence, by the definition \eqref{eq:log_odds_ratio} of the log-odds-ratio, for distinct $u, x \in \mathcal{X}$,
\begin{equation}
    R\parn{y, u, x; \mathcal{M}_{X \to Y}} = R\parn{y, u, x; \mathcal{M}_{Y \to X}}, \quad \forall\, y \in \mathcal{Y},\ u \neq x.
    \label{eq:gl_lor}
\end{equation}
From Lemma~\ref{lem:fwd_affine} there exist real $\alpha_{u,x} = \eta\parn{wu} - \eta\parn{wx} \neq 0$ and $\beta_{u,x}$, independent of $y$, with $R\parn{y, u, x; \mathcal{M}_{X \to Y}} = \alpha_{u,x} T\parn{y} + \beta_{u,x}$, so that \eqref{eq:gl_lor} forces
\begin{equation}
    R\parn{y, u, x; \mathcal{M}_{Y \to X}} = \alpha_{u,x} T\parn{y} + \beta_{u,x}, \quad \forall\, y \in \mathcal{Y},\ u \neq x.
    \label{eq:gl_affine}
\end{equation}
Choosing $u = 2$, $x = 1$ and then $u = s$, $x = s - 1$, both admissible since $s \geq 3$, and substituting into \eqref{eq:gl_affine},
\begin{equation}
    \bar{B}\parn{y} = \alpha_{2,1} T\parn{y} + \beta_{2,1}, \qquad
    \bar{U}\parn{y} = \alpha_{s,s-1} T\parn{y} + \beta_{s,s-1}, \qquad \forall\, y \in \mathcal{Y}.
    \label{eq:gl_bu}
\end{equation}
Define
\begin{equation}
    \Psi\parn{y} \triangleq \alpha_{s,s-1}\bar{B}\parn{y} - \alpha_{2,1}\bar{U}\parn{y}, \qquad y \in \mathbb{R}.
    \label{eq:gl_psi}
\end{equation}
Multiplying the first identity in \eqref{eq:gl_bu} by $\alpha_{s,s-1}$ and the second by $\alpha_{2,1}$ and subtracting, the term in $T\parn{y}$ cancels and only the intercepts remain, so that
\begin{equation}
    \Psi\parn{y} = \alpha_{s,s-1}\beta_{2,1} - \alpha_{2,1}\beta_{s,s-1}, \quad \forall\, y \in \mathcal{Y},
    \label{eq:gl_const}
\end{equation}
the right-hand side being independent of $y$.

We next fix the signs of the two slopes. Both $\alpha_{2,1} = \eta\parn{2w} - \eta\parn{w}$ and $\alpha_{s,s-1} = \eta\parn{sw} - \eta\parn{sw - w}$ compare the values of $\eta$ at two points separated by $w$, the larger argument being taken first in each. Since $\eta$ is strictly monotone and $w \neq 0$, both differences are nonzero and carry the same sign, that of $w$ when $\eta$ is increasing and its opposite when $\eta$ is decreasing. In either case
\begin{equation}
    \operatorname{sign}\parn{\alpha_{2,1}} = \operatorname{sign}\parn{\alpha_{s,s-1}}.
    \label{eq:gl_sign}
\end{equation}

Suppose first that both slopes are positive. By hypothesis $\bar{B}$ is strictly concave on $\mathbb{R}$, so $\alpha_{s,s-1}\bar{B}$ is strictly concave, being a positive multiple of it; and $\bar{U}$ is strictly convex on $\mathbb{R}$, so $-\alpha_{2,1}\bar{U}$ is strictly concave, being a negative multiple of it. Their sum $\Psi$ is therefore strictly concave on $\mathbb{R}$. Suppose instead that both slopes are negative. Then $\alpha_{s,s-1}\bar{B}$ is strictly convex and $-\alpha_{2,1}\bar{U}$ is strictly convex, so $\Psi$ is strictly convex on $\mathbb{R}$. By \eqref{eq:gl_sign} no other case arises, and $\Psi$ is strictly concave or strictly convex on $\mathbb{R}$.

A function of either kind attains any given value at most twice. Suppose $\Psi$ attained the value $c$ at three pairwise distinct points, and assume without loss of generality that $y_3 < y_2 < y_1$. Setting
\begin{equation}
    \lambda = \frac{y_2 - y_3}{y_1 - y_3} \in \parn{0, 1},
    \label{eq:gl_lambda}
\end{equation}
which lies in $\parn{0, 1}$ because $0 < y_2 - y_3 < y_1 - y_3$, the middle point admits the convex representation $y_2 = \lambda y_1 + \parn{1 - \lambda} y_3$. Strict concavity would then give
\begin{equation}
    \Psi\parn{y_2} > \lambda \Psi\parn{y_1} + \parn{1 - \lambda} \Psi\parn{y_3} = c,
\end{equation}
and strict convexity would yield the reverse strict inequality, each contradicting $\Psi\parn{y_2} = c$.

Since $\bars{\mathcal{Y}} \geq 3$, there exist three pairwise distinct points of $\mathcal{Y}$ at which, by \eqref{eq:gl_const}, the function $\Psi$ attains the common value $\alpha_{s,s-1}\beta_{2,1} - \alpha_{2,1}\beta_{s,s-1}$. This contradicts the preceding paragraph.

The contradiction establishes that \eqref{eq:gl_const} cannot hold, and therefore that the affine representations \eqref{eq:gl_bu} cannot both hold. By Lemma~\ref{lem:fwd_affine} the two are equivalent to \eqref{eq:gl_lor}, a necessary consequence of \eqref{eq:gl_cond} and in turn of \eqref{eq:gl_jnt}, so the assumption \eqref{eq:gl_jnt} is untenable. No $p_{X,Y} \in \mathcal{M}_{X \to Y}$ and $q_{X,Y} \in \mathcal{M}_{Y \to X}$ coincide on $\mathcal{X} \times \mathcal{Y}$. Thus, the two classes induce disjoint sets of joint distributions, and by Definition~\ref{def:d_id}, the edge direction is distributionally identifiable.
\end{proof}

\subsection{Proof of Theorem~\ref{thm:converse_two_points}}
\label{app:proof_converse_two_points}
\begin{proof}
We construct parameters inducing identical joint distributions under both models. Set $\Delta y = y_2 - y_1 > 0$, $t_i \triangleq e^{vy_i}$ for $i \in \curl{1, 2}$, and $L \triangleq v\Delta y$. The sign of $v$ is not free: the construction below yields a strictly decreasing sequence whose two-step differences all equal $w \Delta T$, which must therefore be positive, and since $\Delta y > 0$ and $\Delta T = c_1 \Delta y$, the positivity forces $\operatorname{sign}\parn{v} = \operatorname{sign}\parn{w c_1}$. We take $w c_1 > 0$ and hence $v > 0$ below, the opposite sign being obtained throughout by reversing the inequalities. For each $x \in \mathcal{X}$ write $l_{x}\parn{y; \mathcal{M}} = R\parn{y, x, 1; \mathcal{M}}$ for the log-odds ratio of category $x$ against category $1$ under model $\mathcal{M}$.

Now, for $j = 0, 1, \dots, s$ define,
\begin{eqnarray}
    D_j = \log\parn{\frac{e^{\gamma_j} + t_2}{e^{\gamma_j} + t_1}}, \quad \forall j \in \curl{0, 1, \dots, s},
    \label{eq:p_eq_dj_def_t3}
\end{eqnarray}
where by convention $\gamma_0 = -\infty$ and $\gamma_s = +\infty$ we would get the boundary values as,
\begin{eqnarray}
    D_s = 0, \quad D_0 = \log\parn{\frac{t_2}{t_1}} = v\Delta y = L.
    \label{eq:p_eq_d0ds_conv_t3}
\end{eqnarray}
Now, differentiating $D_j$ w.r.t. $\gamma_j$ for $j \in \curl{1, \dots, s - 1}$ gives us,
\begin{eqnarray}
    \frac{dD_j}{d\gamma_j} = \frac{\parn{t_1 - t_2}e^{\gamma_j}}{\parn{e^{\gamma_j} + t_1}\parn{e^{\gamma_j} + t_2}}, \quad \forall j \in \curl{1, \dots, s - 1}.
\end{eqnarray}
As $y_2 > y_1$ is given, $t_i$ is an injective transformation of $y_i$, giving us $t_2 > t_1$, and by definition $t_i > 0$ for $i \in \curl{1, 2}$. Thus, we get,
\begin{eqnarray}
    \frac{dD_j}{d\gamma_j} < 0, \quad \forall j \in \curl{1, \dots, s - 1}.
\end{eqnarray}
Therefore, $D_j$ is a monotone decreasing function in $\gamma_j$. As we have $-\infty = \gamma_0 < \gamma_1 < \dots < \gamma_{s - 1} < \gamma_s = +\infty$, therefore we should also have,
\begin{eqnarray}
    L = D_0 > D_1 > \dots > D_{s - 1} > D_s = 0.
    \label{eq:p_eq_d_ord_t3}
\end{eqnarray}
From Lemma~\ref{lem:fwd_affine}, we know that there exist $\alpha_{x, 1} = \eta\parn{wx} - \eta\parn{w} \neq 0$ for all $x \in \curl{2, \dots, s}$ and $\beta_{x, 1}$ independent of $y$ such that,
\begin{eqnarray}
    l_{x}\parn{y; \mathcal{M}_{X \to Y}} = \alpha_{x, 1}T\parn{y} + \beta_{x, 1}, \quad \forall y \in \mathcal{Y} \text{ and } x \in \curl{2, \dots, s}.
\end{eqnarray}
As $\eta\parn{z} = z$ is given, therefore, we would also have $\alpha_{x, 1} = w\parn{x - 1}$. Defining $\Delta T= T\parn{y_2} - T\parn{y_1}$, we will have,
\begin{eqnarray}
    l_{x}\parn{y_2; \mathcal{M}_{X \to Y}} - l_{x}\parn{y_1; \mathcal{M}_{X \to Y}} = w\parn{x - 1}\Delta T, \quad \forall x \in \curl{2, \dots, s}.
    \label{eq:p_eq_diff_l_fwd_t3}
\end{eqnarray}
Now, from the reverse model, we get,
\begin{eqnarray}
    &&l_x\parn{y_2; \mathcal{M}_{Y \to X}} - l_x\parn{y_1; \mathcal{M}_{Y \to X}} \nonumber \\
    &=& R\parn{y_2, x, 1; \mathcal{M}_{Y \to X}} - R\parn{y_1, x, 1; \mathcal{M}_{Y \to X}} \nonumber \\
    &=& \log\parn{\frac{\sigma\parn{\gamma_x - vy_2} - \sigma\parn{\gamma_{x - 1} - vy_2}}{\sigma\parn{\gamma_1 - vy_2} - \sigma\parn{\gamma_{0} - vy_2}}} - \log\parn{\frac{\sigma\parn{\gamma_x - vy_1} - \sigma\parn{\gamma_{x - 1} - vy_1}}{\sigma\parn{\gamma_1 - vy_1} - \sigma\parn{\gamma_{0} - vy_1}}} \nonumber \\
    &=& \log\parn{\frac{e^{vy_2}\parn{e^{\gamma_x} - e^{\gamma_{x-1}}}\parn{e^{\gamma_1} + e^{vy_2}}}{\parn{e^{\gamma_x} + e^{vy_2}}\parn{e^{\gamma_{x-1}} + e^{vy_2}}e^{\gamma_1}}} - \log\parn{\frac{e^{vy_1}\parn{e^{\gamma_x} - e^{\gamma_{x-1}}}\parn{e^{\gamma_1} + e^{vy_1}}}{\parn{e^{\gamma_x} + e^{vy_1}}\parn{e^{\gamma_{x-1}} + e^{vy_1}}e^{\gamma_1}}} \nonumber \\
    &=& \log\parn{\frac{e^{\gamma_1} + e^{vy_2}}{e^{\gamma_1} + e^{vy_1}}} + \log\parn{\frac{e^{vy_2}}{e^{vy_1}}} - \log\parn{\frac{e^{\gamma_x} + e^{vy_2}}{e^{\gamma_x} + e^{vy_1}}} - \log\parn{\frac{e^{\gamma_{x-1}} + e^{vy_2}}{e^{\gamma_{x-1}} + e^{vy_1}}} \nonumber \\
    &=& \parn{D_1 + D_0} - \parn{D_x + D_{x-1}}, \quad \forall x \in \curl{2, \dots, s}.
    \label{eq:p_eq_diff_l_rev_t3}
\end{eqnarray}
Equating \eqref{eq:p_eq_diff_l_fwd_t3} and \eqref{eq:p_eq_diff_l_rev_t3} gives us,
\begin{eqnarray}
    \parn{D_1 + D_0} - \parn{D_x + D_{x-1}} = \parn{x - 1}w\Delta T, \quad \forall x \in \curl{2, \dots, s}.
    \label{eq:p_eq_d01x_t3}
\end{eqnarray}
Now choose some arbitrary $i \in \curl{0, \dots, s - 3}$, the upper limit being forced by the requirement that $x = i + 3$ lie in $\curl{2, \dots, s}$. Substituting $x = i + 2$ and $x = i + 3$ in \eqref{eq:p_eq_d01x_t3} gives us,
\begin{eqnarray}
    \label{eq:p_eq_jp2_t3}
    \parn{D_1 + D_0} - \parn{D_{i + 2} + D_{i + 1}} = \parn{i + 1}w\Delta T, \\
    \parn{D_1 + D_0} - \parn{D_{i + 3} + D_{i + 2}} = \parn{i + 2}w\Delta T.
    \label{eq:p_eq_jp3_t3}
\end{eqnarray}
Subtracting \eqref{eq:p_eq_jp2_t3} from \eqref{eq:p_eq_jp3_t3} gives us,
\begin{eqnarray}
    D_{i + 1} - D_{i + 3} = w\Delta T.
    \label{eq:p_eq_j_init_t3}
\end{eqnarray}
Similarly, substituting $x = 2$ in \eqref{eq:p_eq_d01x_t3} gives us,
\begin{eqnarray}
    D_0 - D_2 = w\Delta T.
    \label{eq:p_eq_02_t3}
\end{eqnarray}
Since $i$ was arbitrary in $\curl{0, \dots, s - 3}$, relation \eqref{eq:p_eq_j_init_t3} holds with $i + 1$ ranging over $\curl{1, \dots, s - 2}$, and \eqref{eq:p_eq_02_t3} supplies the one remaining index. Together they give
\begin{eqnarray}
    D_i - D_{i + 2} = w\Delta T, \quad \forall i \in \curl{0, \dots, s - 2}.
    \label{eq:p_eq_d_rec_t3}
\end{eqnarray}
We also know that $D_0 = L$ and $D_s = 0$. Therefore, the recursion in \eqref{eq:p_eq_d_rec_t3} decouples into even and odd chains as,
\begin{eqnarray}
    D_{2k} = L - kw\Delta T, \quad D_{2k + 1} = D_1 - kw\Delta T, \quad \forall 2k, 2k+1 \in \curl{0, \dots, s}.
    \label{eq:p_eq_chains_t3}
\end{eqnarray}
If $s$ is even, i.e. $s = 2m$ for some $m \in \curl{2, 3, \dots}$, then the condition $D_s = 0$ leads us to,
\begin{eqnarray}
    D_{2m} = 0 \implies L = mw\Delta T,
\end{eqnarray}
leaving $D_1$ as a free variable. On the contrary, if $s$ is odd, i.e. $s = 2m + 1$ for some $m \in \curl{1, 2, 3, \dots}$, then the condition $D_s = 0$ leads us to,
\begin{eqnarray}
    D_{2m + 1} = 0 \implies D_1 = mw\Delta T,
\end{eqnarray}
leaving $L$ as a free variable. In order to constitute a valid solution, we need to choose the free variable such that the whole sequence of $D_j$ for all $j \in \curl{0, \dots, s}$ follows the strict ordering given in \eqref{eq:p_eq_d_ord_t3}.

For even $s = 2m$, using \eqref{eq:p_eq_chains_t3} with $L = mw\Delta T$, the strict ordering in \eqref{eq:p_eq_d_ord_t3} reduces to the two conditions $L - w\Delta T < D_1$ and $D_1 < L$, which lead to the open interval
\begin{eqnarray}
    D_1 \in \parn{\frac{m - 1}{m}L,\ L},
    \label{eq:p_eq_d1_int_t3}
\end{eqnarray}
which is non-empty as $m \geq 2$. By \eqref{eq:p_eq_dj_def_t3}, the first cutpoint $\gamma_1$ satisfies
\begin{eqnarray}
    e^{\gamma_1} = \frac{t_2 - t_1 e^{D_1}}{e^{D_1} - 1},
    \label{eq:p_eq_g1_from_d1_t3}
\end{eqnarray}
and since $D_1$ is strictly decreasing in $\gamma_1$, the interval \eqref{eq:p_eq_d1_int_t3} corresponds to an open interval for $\gamma_1$. At the endpoint $D_1 \to L$, \eqref{eq:p_eq_g1_from_d1_t3} gives $e^{\gamma_1} \to 0$, that is $\gamma_1 \to -\infty$, and at the endpoint $D_1 \to \frac{m - 1}{m}L$, it gives the finite value
\begin{eqnarray}
    e^{\gamma_1} \to \frac{t_2 e^{L/m} - t_1 e^{L}}{e^{L} - e^{L/m}},
\end{eqnarray}
which is strictly positive, since $t_2 = t_1 e^{L}$ makes the numerator $t_1 e^{L}\parn{e^{L/m} - 1} > 0$ and the denominator $e^{L} - e^{L/m} > 0$ as $L > L/m$. Therefore, for even $s$, the first cutpoint $\gamma_1$ ranges over the non-empty open interval
\begin{eqnarray}
    \gamma_1 \in \parn{-\infty,\ \log\parn{\frac{t_2 e^{L/m} - t_1 e^{L}}{e^{L} - e^{L/m}}}}.
    \label{eq:p_eq_g1_int_t3}
\end{eqnarray}

For odd $s = 2m + 1$, using \eqref{eq:p_eq_chains_t3} with $D_1 = mw\Delta T$, the strict ordering in \eqref{eq:p_eq_d_ord_t3} reduces to the two conditions $mw\Delta T < L$ and $L < \parn{m + 1}w\Delta T$, which lead to the open interval
\begin{eqnarray}
    L \in \parn{mw\Delta T,\ \parn{m + 1}w\Delta T},
    \label{eq:p_eq_L_int_t3}
\end{eqnarray}
which is non-empty as $w\Delta T > 0$. Since $L = v\Delta y$ and $\Delta T = c_1\Delta y$, the interval \eqref{eq:p_eq_L_int_t3} is equivalently an open interval for the reverse edge weight $v$,
\begin{eqnarray}
    v \in \parn{mwc_1,\ \parn{m + 1}wc_1},
    \label{eq:p_eq_v_int_t3}
\end{eqnarray}
which is non-empty, its length being $w c_1$, positive under the sign convention fixed at the start of the proof.

In either parity, a model parameter ranges over a non-empty open interval, the first cutpoint $\gamma_1$ in \eqref{eq:p_eq_g1_int_t3} when $s$ is even and the reverse edge weight $v$ in \eqref{eq:p_eq_v_int_t3} when $s$ is odd. For every admissible value of this parameter, \eqref{eq:p_eq_chains_t3} determines all $D_j$, and \eqref{eq:p_eq_dj_def_t3} returns strictly ordered cutpoints $\gamma_j$ through $e^{\gamma_j} = \parn{t_2 - t_1 e^{D_j}}/\parn{e^{D_j} - 1}$. Setting $\beta_{x, 1} = l_{x}\parn{y_1; \mathcal{M}_{Y \to X}} - \alpha_{x, 1}T\parn{y_1}$ and applying Lemma~\ref{lem:fwd_affine}, the forward marginal is recovered as
\begin{eqnarray}
    \frac{\pi_x}{\pi_1} = \exp\parn{\beta_{x, 1} - a\parn{\eta\parn{w}} + a\parn{\eta\parn{wx}}} > 0, \quad \forall x \in \curl{2, \dots, s},
\end{eqnarray}
which, after normalization, yields a valid categorical marginal. By construction, these forward parameters $\parn{\pi, w}$ with $w \neq 0$ and reverse parameters $\parn{\gamma, v}$ with $v \neq 0$ produce equal increments \eqref{eq:p_eq_diff_l_fwd_t3} and \eqref{eq:p_eq_diff_l_rev_t3}, and since the log-odds ratios at $y_1$ are matched through the choice of $\beta_{x, 1}$, the two models induce identical conditional laws $p_{X \mid Y}$ on $\mathcal{Y}$. In the reverse model the marginal of $Y$ is an unconstrained positive distribution on $\mathcal{Y}$, specified independently of the cutpoints $\gamma$ and the edge weight $v$, so we set it equal to the marginal induced by the forward model,
\begin{eqnarray}
    p_Y\parn{y; \mathcal{M}_{Y \to X}} = \sum_{x' \in \mathcal{X}} p_{X, Y}\parn{x', y; \mathcal{M}_{X \to Y}}, \quad \forall y \in \mathcal{Y}.
\end{eqnarray}
With the conditionals and the marginals of $Y$ now equal, the joint distributions coincide on $\mathcal{X} \times \mathcal{Y}$. As the admissible parameter ranges over a non-empty open interval, the constructed pairs form a family of forward and reverse parameters inducing the same joint law, and hence $\mathcal{M}_{X \to Y}$ and $\mathcal{M}_{Y \to X}$ are not distributionally identifiable.
\end{proof}

\subsection{Proof of Theorem~\ref{thm:converse_two_categories}}
\label{app:proof_converse_two_categories}
\begin{proof}
We prove both directions of the equivalence. As it is given that $\bars{\mathcal{X}} = 2$, the reverse model has a single finite cutpoint $\gamma_1$. For each $y \in \mathcal{Y}$ the only distinct pair $u, x \in \mathcal{X}$ is $u = 2, x = 1$, so the only log-odds-ratio is $R\parn{y, 2, 1; \mathcal{M}}$ for $\mathcal{M} \in \curl{\mathcal{M}_{X \to Y}, \mathcal{M}_{Y \to X}}$. For brevity, we define $l\parn{y; \mathcal{M}} \triangleq R\parn{y, 2, 1; \mathcal{M}}$.

Unlike the case $s \geq 3$, here the reverse log-odds-ratio $l\parn{y; \mathcal{M}_{Y \to X}}$ is affine in $y$ on $\mathcal{Y}$. Indeed,
\begin{eqnarray}
    l\parn{y; \mathcal{M}_{Y \to X}} &=& \log\parn{\frac{1 - \sigma\parn{\gamma_1 - vy}}{\sigma\parn{\gamma_1 - vy}}}, \quad \forall y \in \mathcal{Y}, \nonumber \\
    &=& \log\parn{e^{vy - \gamma_1}}, \quad \forall y \in \mathcal{Y}, \nonumber \\
    &=& vy - \gamma_1, \quad \forall y \in \mathcal{Y}.
    \label{eq:p_eq_rev_l_t4}
\end{eqnarray}
From Lemma~\ref{lem:fwd_affine}, for the forward model there exist $\alpha_{2,1} \neq 0$ and $\beta_{2,1}$ independent of $y$ such that
\begin{eqnarray}
    l\parn{y; \mathcal{M}_{X \to Y}} = \alpha_{2, 1}T\parn{y} + \beta_{2, 1}, \quad \forall y \in \mathcal{Y}.
    \label{eq:p_eq_fwd_l_t4}
\end{eqnarray}

We first prove that if $T\parn{y}$ is affine on the support, then the models are not distributionally identifiable. Suppose $T\parn{y} = c_1 y + c_0$ for all $y \in \mathcal{Y}$ with $c_1 \neq 0$. Given any forward model with parameters $\parn{\pi, w}$, substituting $T\parn{y} = c_1 y + c_0$ into \eqref{eq:p_eq_fwd_l_t4} gives
\begin{eqnarray}
    l\parn{y; \mathcal{M}_{X \to Y}} = \alpha_{2,1}c_1\, y + \parn{\alpha_{2,1}c_0 + \beta_{2, 1}}, \quad \forall y \in \mathcal{Y},
\end{eqnarray}
which is affine in $y$. Choosing the reverse parameters as
\begin{eqnarray}
    v = \alpha_{2,1}c_1 \neq 0, \qquad \gamma_1 = -\parn{\alpha_{2,1}c_0 + \beta_{2,1}},
\end{eqnarray}
where $v \neq 0$ since $\alpha_{2,1} \neq 0$ and $c_1 \neq 0$, the reverse log-odds-ratio in \eqref{eq:p_eq_rev_l_t4} becomes
\begin{eqnarray}
    l\parn{y; \mathcal{M}_{Y \to X}} = \alpha_{2,1}c_1\, y + \parn{\alpha_{2,1}c_0 + \beta_{2, 1}} = l\parn{y; \mathcal{M}_{X \to Y}}, \quad \forall y \in \mathcal{Y},
\end{eqnarray}
so the two models induce identical conditional laws $p_{X \mid Y}$ on $\mathcal{Y}$. It remains to match the marginals of $Y$. In the reverse model, the marginal of $Y$ is an unconstrained positive distribution on $\mathcal{Y}$,
specified independently of $\gamma_1$ and $v$, so we may set it equal to the marginal of $Y$ induced by the forward model,
\begin{eqnarray}
    p_Y\parn{y; \mathcal{M}_{Y \to X}} = \sum_{x' \in \mathcal{X}} p_{X, Y}\parn{x', y; \mathcal{M}_{X \to Y}}, \quad \forall y \in \mathcal{Y}.
\end{eqnarray}
With the conditionals and the marginals of $Y$ now equal, the joint distributions coincide on $\mathcal{X} \times \mathcal{Y}$, and hence the models are not distributionally identifiable.

We next prove that if $T\parn{y}$ is not affine on $\mathcal{Y}$, then the models are distributionally identifiable. By non-affineness on the support, there exist pairwise distinct $y_1, y_2, y_3 \in \mathcal{Y}$, which is possible since $\bars{\mathcal{Y}} \geq 3$, such that
\begin{eqnarray}
    \frac{T\parn{y_2} - T\parn{y_1}}{y_2 - y_1} \neq \frac{T\parn{y_3} - T\parn{y_1}}{y_3 - y_1}.
    \label{eq:p_eq_nonaffine_t4}
\end{eqnarray}
Suppose, for contradiction, that the models are not distributionally identifiable. Then there exist parameters inducing the same joint distribution for both models. Summing over $\mathcal{X}$ gives equal marginals of $Y$, and dividing gives equal conditionals, hence equal log-odds-ratios,
\begin{eqnarray}
    l\parn{y; \mathcal{M}_{X \to Y}} = l\parn{y; \mathcal{M}_{Y \to X}}, \quad \forall y \in \mathcal{Y}.
\end{eqnarray}
Substituting \eqref{eq:p_eq_rev_l_t4} and \eqref{eq:p_eq_fwd_l_t4},
\begin{eqnarray}
    \alpha_{2, 1}T\parn{y} + \beta_{2, 1} = vy - \gamma_1, \quad \forall y \in \mathcal{Y}.
    \label{eq:p_eq_fr_t4}
\end{eqnarray}
Evaluating \eqref{eq:p_eq_fr_t4} at $y_1, y_2, y_3$ and taking differences,
\begin{eqnarray}
    \alpha_{2, 1}\parn{T\parn{y_2} - T\parn{y_1}} = \parn{y_2 - y_1}v, \quad \alpha_{2, 1}\parn{T\parn{y_3} - T\parn{y_1}} = \parn{y_3 - y_1}v,
\end{eqnarray}
and since $y_2 \neq y_1$, $y_3 \neq y_1$, and $\alpha_{2,1} \neq 0$, dividing gives
\begin{eqnarray}
    \frac{T\parn{y_2} - T\parn{y_1}}{y_2 - y_1} = \frac{v}{\alpha_{2, 1}} = \frac{T\parn{y_3} - T\parn{y_1}}{y_3 - y_1}.
\end{eqnarray}
This contradicts \eqref{eq:p_eq_nonaffine_t4}. The contradiction arises solely from the assumption that a common joint distribution exists, so no forward and reverse parameters can induce the same joint distribution on $\mathcal{X} \times \mathcal{Y}$. Equivalently, the model classes $\mathcal{M}_{X \to Y}$ and $\mathcal{M}_{Y \to X}$ induce disjoint sets of joint distributions, and by Definition~\ref{def:d_id} the edge direction is distributionally identifiable.
\end{proof}

\subsection{Proof of Theorem~\ref{thm:multivariate_id}}
\label{app:proof_multivariate_id}
\begin{proof}
\begin{figure}[!htbp]
\centering
\begin{tikzpicture}[
  onode/.style = {circle, draw=black, line width=0.6pt, fill=blue!12,
                  minimum size=6.5mm, inner sep=0pt, font=\scriptsize},
  enode/.style = {circle, draw=black, line width=0.6pt, fill=green!14,
                  minimum size=6.5mm, inner sep=0pt, font=\scriptsize},
  onode2/.style = {circle, draw=black!45, line width=0.5pt, fill=black!6,
                  minimum size=5.5mm, inner sep=0pt, font=\scriptsize},
  ed/.style    = {-{Stealth[length=2mm,width=1.6mm]}, semithick, black!55},
  hed/.style   = {-{Stealth[length=2.6mm,width=2mm]}, line width=1pt, black},
]
  % ---- left: edge embedded in a DAG ----
  \node[onode2] (a1) at (-1.4,1.15) {};
  \node[onode2] (a2) at (-1.4,0.15) {};
  \node[onode2] (b1) at (2.2,1.15) {};
  \node[onode2] (b2) at (2.2,0.15) {};
  \node[onode]  (i)  at (-0.15,0.0) {$i$};
  \node[enode]  (j)  at (1.0,0.0)  {$j$};
  \draw[ed] (a1) -- (i);  \draw[ed] (a2) -- (i);
  \draw[ed] (b1) -- (j);  \draw[ed] (b2) -- (j);
  \draw[hed] (i) -- (j);
  \node[font=\scriptsize, text=black!60] at (-1.1,1.7) {$\mathcal{A}$};
  \node[font=\scriptsize, text=black!60] at (2.2,1.7)  {$\mathcal{B}$};
  \node[font=\scriptsize] at (0.55,-1.0) {edge $\curl{i,j}$ inside $\mathcal{G}$};

  % ---- arrow ----
  \draw[-{Stealth[length=3mm,width=2.4mm]}, line width=1pt, black!70]
       (3.1,0.2) -- (4.2,0.2)
       node[midway, above=2pt, font=\scriptsize, align=center]
       {condition on\\ $\vect{x}_{\mathcal{A}}, \vect{x}_{\mathcal{B}}$};

  % ---- right: reduced bivariate instance ----
  \node[onode] (i2) at (5.5,0.0) {$i$};
  \node[enode] (j2) at (8.6,0.0) {$j$};
  \draw[hed] (i2) -- (j2);
  \node[font=\scriptsize, anchor=north, text width=26mm, align=center] at (5.1,-0.55)
       {cutpoints shifted\\ $\gamma_k \mapsto \gamma_k - h_{\mathcal{A}}$\\ \emph{ordering preserved}};
  \node[font=\scriptsize, anchor=north, text width=26mm, align=center] at (9.0,-0.55)
       {link shifted\\ $\tilde{\eta}\parn{t} = \eta\parn{t + g_{\mathcal{B}}}$\\ \emph{injectivity preserved}};
  \node[font=\scriptsize] at (7.05,1.15) {an instance of the bivariate setting};
\end{tikzpicture}
\caption{The reduction behind Theorem~\ref{thm:multivariate_id}.}
\label{fig:mvreduction}
\end{figure}

Suppose, for contradiction, that the direction of the edge $\curl{i, j}$ is not distributionally identifiable. Then, from Definition~\ref{def:g_id}, there exist joint distributions $p_{\mathcal{V}} \in \mathcal{M}\parn{\mathcal{G}^{i \to j}}$ and $q_{\mathcal{V}} \in \mathcal{M}\parn{\mathcal{G}^{j \to i}}$ such that
\begin{eqnarray}
    p_{\mathcal{V}}\parn{\vect{x}} = q_{\mathcal{V}}\parn{\vect{x}}, \quad \forall\, \vect{x} \in \prod_{k \in \mathcal{V}} \mathcal{X}_k.
    \label{eq:p_eq_jnt_mv}
\end{eqnarray}
Write $\mathcal{Z} \triangleq \mathcal{V} \setminus \curl{i, j}$, and define $\mathcal{A} \triangleq \mathcal{P}\parn{i} \setminus \curl{j}$ and $\mathcal{B} \triangleq \mathcal{P}\parn{j} \setminus \curl{i}$. Since $\mathcal{G}^{i \to j}$ and $\mathcal{G}^{j \to i}$ differ only in the orientation of $\curl{i, j}$, the sets $\mathcal{A}$ and $\mathcal{B}$ are the same in both graphs, and every $k \in \mathcal{Z}$ has the same parent set in both graphs.

Every conditional distribution is determined by the joint distribution, so \eqref{eq:p_eq_jnt_mv} gives, for each $k \in \mathcal{Z}$,
\begin{eqnarray}
    p\parn{x_k \mid \vect{x}_{\mathcal{P}\parn{k}}} = q\parn{x_k \mid \vect{x}_{\mathcal{P}\parn{k}}}.
    \label{eq:p_eq_zcond_mv}
\end{eqnarray}
Fix any $\vect{x}_{\mathcal{Z}}$ in the support of $\mathcal{Z}$, which satisfies $p\parn{\mathcal{Z} = \vect{x}_{\mathcal{Z}}} > 0$ by the strict positivity of $p_{\mathcal{V}}$, and write $\vect{x}_{\mathcal{A}}$ and $\vect{x}_{\mathcal{B}}$ for its components indexed by $\mathcal{A}$ and $\mathcal{B}$. For distinct $u, x \in \curl{1, \dots, s}$ and $y \in \mathcal{X}_j$, define
\begin{eqnarray}
    R\parn{y, u, x} \triangleq \log\parn{\frac{p\parn{X_i = u \mid X_j = y, \mathcal{Z} = \vect{x}_{\mathcal{Z}}}}{p\parn{X_i = x \mid X_j = y, \mathcal{Z} = \vect{x}_{\mathcal{Z}}}}},
    \label{eq:p_eq_rdef_mv}
\end{eqnarray}
which by \eqref{eq:p_eq_jnt_mv} takes the same value whether computed from $p_{\mathcal{V}}$ or from $q_{\mathcal{V}}$.

Factorizing $p_{\mathcal{V}}$ according to $\mathcal{G}^{i \to j}$, in which $\mathcal{P}\parn{i} = \mathcal{A}$ and $\mathcal{P}\parn{j} = \mathcal{B} \cup \curl{i}$, and canceling the factors that do not depend on $X_i$, we get
\begin{eqnarray}
    R\parn{y, u, x} = \log\parn{\frac{p\parn{u \mid \vect{x}_{\mathcal{A}}}}{p\parn{x \mid \vect{x}_{\mathcal{A}}}}} + \log\parn{\frac{p\parn{y \mid u, \vect{x}_{\mathcal{B}}}}{p\parn{y \mid x, \vect{x}_{\mathcal{B}}}}} + \Lambda\parn{y, u, x},
    \label{eq:p_eq_rfwd_mv}
\end{eqnarray}
where
\begin{eqnarray}
    \Lambda\parn{y, u, x} \triangleq \sum_{k \in \mathcal{Z}} \log\parn{\frac{p\parn{x_k \mid \vect{x}_{\mathcal{P}\parn{k}}}\big\vert_{X_i = u}}{p\parn{x_k \mid \vect{x}_{\mathcal{P}\parn{k}}}\big\vert_{X_i = x}}}
\end{eqnarray}
collects the contributions of the nodes in $\mathcal{Z}$, the terms of which vanish for every $k$ with $i \notin \mathcal{P}\parn{k}$. Factorizing $q_{\mathcal{V}}$ according to $\mathcal{G}^{j \to i}$, in which $\mathcal{P}\parn{i} = \mathcal{A} \cup \curl{j}$ and $\mathcal{P}\parn{j} = \mathcal{B}$, and canceling in the same way,
\begin{eqnarray}
    R\parn{y, u, x} = \log\parn{\frac{q\parn{u \mid y, \vect{x}_{\mathcal{A}}}}{q\parn{x \mid y, \vect{x}_{\mathcal{A}}}}} + \Lambda'\parn{y, u, x},
    \label{eq:p_eq_rrev_mv}
\end{eqnarray}
where $\Lambda'$ is defined as $\Lambda$ with $p$ replaced by $q$. By \eqref{eq:p_eq_zcond_mv} the two sums coincide, $\Lambda = \Lambda'$, so equating \eqref{eq:p_eq_rfwd_mv} and \eqref{eq:p_eq_rrev_mv} gives
\begin{eqnarray}
    \log\parn{\frac{p\parn{u \mid \vect{x}_{\mathcal{A}}}}{p\parn{x \mid \vect{x}_{\mathcal{A}}}}} + \log\parn{\frac{p\parn{y \mid u, \vect{x}_{\mathcal{B}}}}{p\parn{y \mid x, \vect{x}_{\mathcal{B}}}}} = \log\parn{\frac{q\parn{u \mid y, \vect{x}_{\mathcal{A}}}}{q\parn{x \mid y, \vect{x}_{\mathcal{A}}}}}, \quad \forall\, y \in \mathcal{X}_j, \; u \neq x.
    \label{eq:p_eq_cancel_mv}
\end{eqnarray}

We now exhibit two bivariate models to which \eqref{eq:p_eq_cancel_mv} applies, the reduction being the one drawn in Figure~\ref{fig:mvreduction}. Let $g_{\mathcal{B}}$ denote the contribution of $\vect{x}_{\mathcal{B}}$ to the linear predictor of $X_j$, so that the conditional $p\parn{y \mid x, \vect{x}_{\mathcal{B}}}$ is the regular one-parameter exponential family conditional \eqref{eq:fwd_cond_Y} with sufficient statistic $T$, support $\mathcal{X}_j$, and natural parameter $\eta\parn{w x + g_{\mathcal{B}}}$. Writing $\tilde{\eta}\parn{t} \triangleq \eta\parn{t + g_{\mathcal{B}}}$, which is monotone injective since $\eta$ is, and taking the categorical marginal $p\parn{\cdot \mid \vect{x}_{\mathcal{A}}}$, which is strictly positive, the pair
\begin{eqnarray}
    \tilde{p}\parn{x, y} \triangleq p\parn{x \mid \vect{x}_{\mathcal{A}}} \, p\parn{y \mid x, \vect{x}_{\mathcal{B}}}
    \label{eq:p_eq_aux_fwd_mv}
\end{eqnarray}
is a forward model of the form $\mathcal{M}_{X \to Y}$ with edge weight $w \neq 0$, link $\tilde{\eta}$, and $s \geq 3$ categories, the support $\mathcal{X}_j$ of node $j$ playing the role of $\mathcal{Y}$ in the bivariate models of Section~\ref{ss:id_bivar}. Similarly, let $h_{\mathcal{A}}$ denote the contribution of $\vect{x}_{\mathcal{A}}$ to the linear predictor of $X_i$, so that $q\parn{x \mid y, \vect{x}_{\mathcal{A}}}$ is the ordered-logit conditional \eqref{eq:rev_cond_X} with cutpoints $\gamma_1 - h_{\mathcal{A}} < \cdots < \gamma_{s-1} - h_{\mathcal{A}}$ and edge weight $v \neq 0$, the shift preserving the strict ordering. Taking the marginal of $X_j$ to be the marginal induced by \eqref{eq:p_eq_aux_fwd_mv}, which is admissible since the marginal of $Y$ in $\mathcal{M}_{Y \to X}$ is an unconstrained positive distribution on $\mathcal{X}_j$, the pair
\begin{eqnarray}
    \tilde{q}\parn{x, y} \triangleq \parn{\sum_{x' \in \curl{1, \dots, s}} \tilde{p}\parn{x', y}} q\parn{x \mid y, \vect{x}_{\mathcal{A}}}
    \label{eq:p_eq_aux_rev_mv}
\end{eqnarray}
is a reverse model of the form $\mathcal{M}_{Y \to X}$.

By \eqref{eq:p_eq_cancel_mv}, the log-odds ratios of $\tilde{p}$ and $\tilde{q}$ against category $1$ agree for every $y \in \mathcal{X}_j$, and a categorical law is determined by its log-odds against a fixed category, so the conditional laws of $X_i$ given $X_j$ coincide under $\tilde{p}$ and $\tilde{q}$. The marginals of $X_j$ coincide by the construction in \eqref{eq:p_eq_aux_rev_mv}, and therefore
\begin{eqnarray}
    \tilde{p}\parn{x, y} = \tilde{q}\parn{x, y}, \quad \forall\, x \in \curl{1, \dots, s}, \; y \in \mathcal{X}_j.
    \label{eq:p_eq_aux_eq_mv}
\end{eqnarray}
Since $s \geq 3$ and $\bars{\mathcal{X}_j} \geq 3$, Theorem~\ref{thm:general_id} states that no forward model of the form $\mathcal{M}_{X \to Y}$ and reverse model of the form $\mathcal{M}_{Y \to X}$ can induce the same joint distribution, which contradicts \eqref{eq:p_eq_aux_eq_mv}.

The contradiction establishes that the bivariate models \eqref{eq:p_eq_aux_fwd_mv} and \eqref{eq:p_eq_aux_rev_mv} cannot coincide, and therefore that \eqref{eq:p_eq_cancel_mv} cannot hold. Since \eqref{eq:p_eq_cancel_mv} was obtained from the two factorizations \eqref{eq:p_eq_rfwd_mv} and \eqref{eq:p_eq_rrev_mv} of the log-odds ratio \eqref{eq:p_eq_rdef_mv}, which are necessary consequences of the equality of joints \eqref{eq:p_eq_jnt_mv}, the assumption \eqref{eq:p_eq_jnt_mv} is untenable. We conclude that no $p_{\mathcal{V}} \in \mathcal{M}\parn{\mathcal{G}^{i \to j}}$ and $q_{\mathcal{V}} \in \mathcal{M}\parn{\mathcal{G}^{j \to i}}$ can coincide on $\prod_{k \in \mathcal{V}} \mathcal{X}_k$. Equivalently, the two orientations induce disjoint sets of joint distributions, and by Definition~\ref{def:g_id} the direction of the edge $\curl{i, j}$ is distributionally identifiable.
\end{proof}

\subsection{Proof of Theorem~\ref{thm:orient_id}}
\label{app:proof_orient_id}
\begin{proof}
We proceed by contradiction, reducing any competing orientation to a bivariate comparison governed by Theorem~\ref{thm:general_id}. Densities are taken throughout with respect to the product of the counting measure on the ordinal coordinates and the dominating measure of the relevant exponential family on the remaining coordinates, and every conditional in the model is strictly positive on its support, so $p_{\mathcal{V}}$ is strictly positive.

We first record that $p_{\mathcal{V}}$ is causally minimal with respect to $\mathcal{G}$. By Proposition 4 of \citet{peters2014causal}, causal minimality with respect to a graph $\mathcal{H}$ is equivalent to
\begin{eqnarray}
    X_k \not\perp X_l \mid \vect{x}_{\mathcal{P}_{\mathcal{H}}\parn{k} \setminus \curl{l}}, \quad \forall\, k \in \mathcal{V}, \; l \in \mathcal{P}_{\mathcal{H}}\parn{k}.
    \label{eq:p_eq_min_or}
\end{eqnarray}
Fix such a $k$ and $l$ and hold the remaining parents at any value. Since the weight of the edge from $l$ into $k$ is nonzero, two distinct values of $X_l$ give two distinct values of the linear predictor of $X_k$. These index distinct conditional laws, by injectivity of $\eta$ together with minimality of $T_k$ when $k$ is an exponential family node, and by strict monotonicity of $\sigma$ when $k$ is ordinal. Both parent configurations carry positive probability, so \eqref{eq:p_eq_min_or} holds.

Assume, for contradiction, that some $\mathcal{G}' \in \mathcal{O}\parn{\mathcal{G}_u}$ with $\mathcal{G}' \neq \mathcal{G}$ carries an ordinal-exponential LPM generating $p_{\mathcal{V}}$. Then $p_{\mathcal{V}}$ is strictly positive and is Markov and causally minimal with respect to both $\mathcal{G}$ and $\mathcal{G}'$, which are the hypotheses of part (i) of Proposition 29 of \citet{peters2014causal}. Two distinct orientations must disagree on some edge, but an arbitrary disagreeing edge need not admit a conditioning set lying outside the descendants of its endpoints in both graphs at once, which is what the reduction below requires. The proposition produces one that does, supplying nodes $a$ and $b$, with $b \to a$ in $\mathcal{G}$ and $a \to b$ in $\mathcal{G}'$, together with the sets
\begin{eqnarray}
    \mathcal{A} \triangleq \mathcal{P}_{\mathcal{G}}\parn{a} \setminus \curl{b}, \qquad \mathcal{B} \triangleq \mathcal{P}_{\mathcal{G}'}\parn{b} \setminus \curl{a}, \qquad \mathcal{C} \triangleq \mathcal{A} \cup \mathcal{B},
    \label{eq:p_eq_sets_or}
\end{eqnarray}
every member of $\mathcal{C}$ being a non-descendant of $a$ in $\mathcal{G}$ other than $b$ and a non-descendant of $b$ in $\mathcal{G}'$ other than $a$. Since $\mathcal{G}$ and $\mathcal{G}'$ orient the common skeleton, $\parn{a, b} \in \mathcal{E}_u$, so exactly one of $a$ and $b$ is ordinal and the other an exponential family node.

Fix any $\vect{x}_{\mathcal{C}}$ in the support of $\mathcal{C}$, which carries positive probability. By the non-descendant property just recorded, every member of $\mathcal{C} \setminus \mathcal{A}$ is a non-descendant of $a$ in $\mathcal{G}$ that is not a parent of $a$, so the local Markov property renders $X_a$ independent of them given its parents, and $p\parn{x_a \mid x_b, \vect{x}_{\mathcal{C}}} = p\parn{x_a \mid x_b, \vect{x}_{\mathcal{A}}}$ is the parametric conditional of $X_a$ given its parents in $\mathcal{G}$. Hence
\begin{eqnarray}
    p\parn{x_a, x_b \mid \vect{x}_{\mathcal{C}}} = p\parn{x_b \mid \vect{x}_{\mathcal{C}}} \, p\parn{x_a \mid x_b, \vect{x}_{\mathcal{A}}},
    \label{eq:p_eq_pair_g_or}
\end{eqnarray}
and the same argument applied to $\mathcal{G}'$, using $\mathcal{P}_{\mathcal{G}'}\parn{b} = \mathcal{B} \cup \curl{a}$, gives
\begin{eqnarray}
    q\parn{x_a, x_b \mid \vect{x}_{\mathcal{C}}} = q\parn{x_a \mid \vect{x}_{\mathcal{C}}} \, q\parn{x_b \mid x_a, \vect{x}_{\mathcal{B}}}.
    \label{eq:p_eq_pair_gp_or}
\end{eqnarray}

Suppose $b$ is ordinal and $a$ an exponential family node. In \eqref{eq:p_eq_pair_g_or} the factor $p\parn{x_b \mid \vect{x}_{\mathcal{C}}}$ is a strictly positive categorical law, and $p\parn{x_a \mid x_b, \vect{x}_{\mathcal{A}}}$ is the conditional \eqref{eq:fwd_cond_Y} with natural parameter $\eta\parn{w x_b + g_{\mathcal{A}}}$, where $g_{\mathcal{A}}$ collects the contribution of $\vect{x}_{\mathcal{A}}$. Writing $\tilde{\eta}\parn{t} \triangleq \eta\parn{t + g_{\mathcal{A}}}$, monotone injective since $\eta$ is, \eqref{eq:p_eq_pair_g_or} is a forward model of the form $\mathcal{M}_{X \to Y}$ with edge weight $w \neq 0$. In \eqref{eq:p_eq_pair_gp_or} the factor $q\parn{x_a \mid \vect{x}_{\mathcal{C}}}$ is a strictly positive law on $\mathcal{X}_a$, and $q\parn{x_b \mid x_a, \vect{x}_{\mathcal{B}}}$ is the ordered-logit conditional \eqref{eq:rev_cond_X} with cutpoints displaced by the contribution of $\vect{x}_{\mathcal{B}}$, which preserves their strict ordering, so \eqref{eq:p_eq_pair_gp_or} is a reverse model of the form $\mathcal{M}_{Y \to X}$ with edge weight $v \neq 0$. Exchanging the two displays covers the case in which $a$ is ordinal.

The ordinal member of the pair carries at least three categories and the exponential family member at least three points of support, so by Theorem~\ref{thm:general_id} the two model classes induce disjoint sets of joint distributions on $\parn{X_a, X_b}$. But \eqref{eq:p_eq_pair_g_or} and \eqref{eq:p_eq_pair_gp_or} are both computed from $p_{\mathcal{V}}$ and therefore coincide. This contradiction establishes that no such $\mathcal{G}'$ exists.
\end{proof}

\subsection{Proof of Theorem~\ref{thm:multivariate_converse}}
\label{app:proof_multivariate_converse}
\begin{proof}
We construct joint distributions on $\mathcal{V}$ coinciding under the two orientations. Write $\mathcal{Z} \triangleq \mathcal{V} \setminus \curl{i, j}$. Since neither $i$ nor $j$ is adjacent in $\mathcal{G}_u$ to any node other than the other endpoint, both have no parents and no children in $\mathcal{Z}$ in either orientation, and every $k \in \mathcal{Z}$ has the same parent set, contained in $\mathcal{Z}$, in $\mathcal{G}^{i \to j}$ and $\mathcal{G}^{j \to i}$. Factorizing according to each graph therefore gives
\begin{eqnarray}
    p_{\mathcal{V}}\parn{\vect{x}} = p_{i, j}\parn{x, y} \prod_{k \in \mathcal{Z}} p\parn{x_k \mid \vect{x}_{\mathcal{P}\parn{k}}}, \nonumber \\
    q_{\mathcal{V}}\parn{\vect{x}} = q_{i, j}\parn{x, y} \prod_{k \in \mathcal{Z}} q\parn{x_k \mid \vect{x}_{\mathcal{P}\parn{k}}},
    \label{eq:p_eq_split_mvc}
\end{eqnarray}
so that the joint distribution over $\mathcal{V}$ factorizes into the law of the pair $\parn{X_i, X_j}$ and a factor depending only on $\mathcal{Z}$.

In $\mathcal{G}^{i \to j}$ the node $i$ is a root and therefore follows an arbitrary strictly positive categorical marginal on $\curl{1, \dots, s}$, and $X_j$ follows the regular one-parameter exponential family conditional \eqref{eq:fwd_cond_Y} given $X_i$ with sufficient statistic $T$, link $\eta$, and support $\mathcal{X}_j$. Hence $p_{i, j}$ ranges exactly over the forward models $\mathcal{M}_{X \to Y}$ with edge weight $w \neq 0$. In $\mathcal{G}^{j \to i}$, the node $j$ is a root and therefore follows an arbitrary strictly positive marginal on $\mathcal{X}_j$, and $X_i$ follows the ordered-logit conditional \eqref{eq:rev_cond_X} given $X_j$ with cutpoints $\gamma_1 < \cdots < \gamma_{s-1}$. Hence $q_{i, j}$ ranges exactly over the reverse models $\mathcal{M}_{Y \to X}$ with edge weight $v \neq 0$.

Consider first the case $\bars{\mathcal{X}_j} = 2$ with an affine sufficient statistic and a canonical link. By Theorem~\ref{thm:converse_two_points} there exist forward parameters $\parn{\pi, w}$ with $w \neq 0$ and reverse parameters $\parn{\gamma, v}$ with $v \neq 0$ such that
\begin{eqnarray}
    p_{i, j}\parn{x, y} = q_{i, j}\parn{x, y}, \quad \forall\, x \in \curl{1, \dots, s}, \; y \in \mathcal{X}_j.
    \label{eq:p_eq_pair_eq_mvc}
\end{eqnarray}
Consider next the case $s = 2$ with a sufficient statistic affine on $\mathcal{X}_j$. By Theorem~\ref{thm:converse_two_categories} there exist forward parameters $\parn{\pi, w}$ with $w \neq 0$ and reverse parameters $\parn{\gamma_1, v}$ with $v \neq 0$ satisfying \eqref{eq:p_eq_pair_eq_mvc}. In either case the law of the pair coincides under the two orientations.

Choosing the conditionals of the nodes in $\mathcal{Z}$ to be identical under the two graphs, which is admissible since their parent sets coincide, the second factors in \eqref{eq:p_eq_split_mvc} agree, and combining this with \eqref{eq:p_eq_pair_eq_mvc} gives
\begin{eqnarray}
    p_{\mathcal{V}}\parn{\vect{x}} = q_{\mathcal{V}}\parn{\vect{x}}, \quad \forall\, \vect{x} \in \prod_{k \in \mathcal{V}} \mathcal{X}_k.
\end{eqnarray}
The two orientations, therefore, induce joint distributions that coincide on $\prod_{k \in \mathcal{V}} \mathcal{X}_k$, so their sets of induced joint distributions are not disjoint, and by Definition~\ref{def:g_id} the direction of the edge $\parn{i, j}$ is not distributionally identifiable.
\end{proof}

\section{Algorithms for Mixed DAG Discovery}
\label{app:alg}

\subsection{Score and Optimization Problem}
\label{app:alg_score}

Let $\vect{X} \in \mathbb{R}^{n \times d}$ be an $n$-sample dataset generated by a $d$-node DAG $\mathcal{G} = \parn{\mathcal{V}, \mathcal{E}}$ with $\mathcal{V} = \mathcal{V}_{\operatorname{ord}} \cup \mathcal{V}_{\operatorname{exp}}$. Edges are permitted only between nodes of unlike type, enforced by the mask
\begin{equation}
\sqr{\vect{M}}_{i, j} = \begin{cases} 0, & \text{if } i, j \in \mathcal{V}_{\operatorname{ord}} \text{ or } i, j \in \mathcal{V}_{\operatorname{exp}}, \\ 1, & \text{otherwise}, \end{cases}
\label{eq:bipartite_mask}
\end{equation}
so $\mathcal{G}$ is bipartite by construction. Root nodes of a candidate structure are fitted rather than fixed: a root ordinal node retains its $s_i - 1$ cutpoints, giving an unconstrained categorical marginal on $\curl{1, \dots, s_i}$, and a root exponential-family node has an empty linear predictor with its natural parameter fitted as a single free scalar. Both conventions are narrower than the arbitrary positive marginal of Section~\ref{sec:bg}, so restrict rather than enlarge the model class.

Let $\mathcal{P}_{\mathcal{G}_{\operatorname{est}}}\parn{i}$ be the parent set of node $i$ under a candidate $\mathcal{G}_{\operatorname{est}}$ with weighted adjacency matrix $\vect{W}_{\operatorname{est}}$, and $k_i$ the number of fitted parameters in its conditional,
\begin{equation}
k_i = \begin{cases} \max\curl{\bars{\mathcal{P}_{\mathcal{G}_{\operatorname{est}}}\parn{i}}, 1}, & i \in \mathcal{V}_{\operatorname{exp}}, \\ \bars{\mathcal{P}_{\mathcal{G}_{\operatorname{est}}}\parn{i}} + s_i - 1, & i \in \mathcal{V}_{\operatorname{ord}}. \end{cases}
\label{eq:param_count}
\end{equation}
Candidates are scored by the Bayesian Information Criterion \citep{chickering2002optimal},
\begin{eqnarray}
\operatorname{BIC}\parn{\mathcal{G}_{\operatorname{est}}; \vect{X}}
 \triangleq -\sum_{j = 1}^{n} \sum_{i = 1}^{d} \ln p\parn{\vect{X}_{j, i} \mid \vect{X}_{j, \mathcal{P}_{\mathcal{G}_{\operatorname{est}}}\parn{i}}; \vect{W}_{\operatorname{est}}, \boldsymbol{\gamma}_i} + \, \frac{\log n}{2} \sum_{i = 1}^{d} k_i,
\label{eq:bic_app}
\end{eqnarray}
repeating \eqref{eq:bic}. Both terms are sums over nodes, so the score decomposes into node contributions. Writing $\mathcal{G}_u = \parn{\mathcal{V}, \mathcal{E}_u}$ for the skeleton, taken as given, and $\mathcal{O}\parn{\mathcal{G}_u}$ for its acyclic orientations, each respecting \eqref{eq:bipartite_mask}, the optimization problem is
\begin{equation}
\mathcal{G}_{\operatorname{est}}^{*} = \arg\min_{\mathcal{G}_{\operatorname{est}} \in \mathcal{O}\parn{\mathcal{G}_u}} \operatorname{BIC}\parn{\mathcal{G}_{\operatorname{est}}; \vect{X}},
\label{eq:opt_problem}
\end{equation}
with $\parn{\vect{W}_{\operatorname{est}}, \curl{\boldsymbol{\gamma}_i}_{i \in \mathcal{V}_{\operatorname{ord}}}}$ obtained by per-node conditional maximum likelihood estimation (Appendix~\ref{app:per_node_mle}). The node conditionals available to $\mathcal{V}_{\operatorname{exp}}$ are listed in Table~\ref{tab:exp_fam_dists}.

\begin{table}[!htbp]
\caption{Regular one-parameter exponential families used as node conditionals.}
\label{tab:exp_fam_dists}
\setlength{\tabcolsep}{4pt}
\small
\centering
\begin{tabular}{@{} l l l l l @{}}
\hline
\textbf{Distribution} & \textbf{Natural Parameter} & \textbf{Link Function} & \textbf{Suff.\ Stat.} & \textbf{Support} \\
 & $\eta$ & $\eta = g(z_i)$ & $T(x)$ & $\mathcal{X}$ \\
\hline
Exponential & $-\lambda,\; (\lambda > 0)$ & $\eta = -e^{z_i}$ & $x$ & $[0, \infty)$ \\
Poisson & $\log \lambda,\; (\lambda > 0)$ & $\eta = z_i$ & $x$ & $\{0, 1, \dots\}$ \\
Gaussian (fixed variance $\sigma^2$) & $\frac{\mu}{\sigma^2},\; (\mu \in \mathbb{R})$ & $\eta = z_i$ & $x$ & $\mathbb{R}$ \\
Gamma (fixed shape $\alpha$) & $-\beta,\; (\beta > 0)$ & $\eta = -e^{z_i}$ & $x$ & $(0, \infty)$ \\
Binomial (fixed trials $n \geq 3$) & $\log\frac{p}{1 - p},\; (p \in (0,1))$ & $\eta = z_i$ & $x$ & $\{0, \dots, n\}$ \\
Pascal (fixed successes $r$) & $\log(1 - p),\; (p \in (0,1))$ & $\eta = -\log(1 + e^{z_i})$ & $x$ & $\{0, 1, \dots\}$ \\
Gamma (fixed rate $\beta$) & $\alpha - 1,\; (\alpha > 0)$ & $\eta = e^{z_i} - 1$ & $\log(x)$ & $(0, \infty)$ \\
\hline
\end{tabular}
\end{table}

\subsection{Search Procedures}
\label{app:algorithms}

A skeleton with $\bars{\mathcal{E}_u}$ edges admits at most $2^{\bars{\mathcal{E}_u}}$ orientations, the acyclic ones constituting $\mathcal{O}\parn{\mathcal{G}_u}$. We solve \eqref{eq:opt_problem} by two procedures.

\textbf{Exhaustive search.} Enumerate $\mathcal{O}\parn{\mathcal{G}_u}$, fit each member, and return the minimizer. The global minimizer is recovered exactly, so any deviation from the truth is finite-sample noise rather than optimization error. Feasible up to a few dozen edges.

\textbf{Greedy reversal search.} Start from an arbitrary acyclic orientation and repeatedly reverse the single edge whose reversal most reduces the score, subject to acyclicity, halting when no reversal improves the score. Insertion and deletion do not arise, the skeleton being fixed. By decomposability of \eqref{eq:bic_app} a reversal refits only the two endpoints, giving per-iteration cost $O\parn{\bars{\mathcal{E}_u}}$. The procedure is a heuristic and may halt where several joint reversals would improve the score. Algorithm~\ref{alg:greedy_bic} states the procedure.

\begin{algorithm}[ht]
\caption{Greedy Orientation Search with BIC Score}
\label{alg:greedy_bic}
\begin{algorithmic}[1]
\Require Data matrix $\vect{X}$, skeleton $\mathcal{G}_u = (\mathcal{V}, \mathcal{E}_u)$, initial acyclic orientation $\mathcal{G}_{\operatorname{est}}$
\Ensure Estimated orientation $\mathcal{G}_{\operatorname{est}}^{*}$

\State Compute $\operatorname{BIC}\parn{\mathcal{G}_{\operatorname{est}}; \vect{X}}$ via \eqref{eq:bic_app}

\Repeat
    \State $\Delta^{*} \gets 0$, \texttt{best\_edge} $\gets$ \texttt{None}
    \For{each edge $e \in \mathcal{E}_u$ whose reversal in $\mathcal{G}_{\operatorname{est}}$ preserves acyclicity}
        \State Let $\mathcal{G}_e$ be $\mathcal{G}_{\operatorname{est}}$ with $e$ reversed
        \State Compute $\Delta_e \gets \operatorname{BIC}\parn{\mathcal{G}_e; \vect{X}} - \operatorname{BIC}\parn{\mathcal{G}_{\operatorname{est}}; \vect{X}}$ by refitting the two endpoints of $e$
        \If{$\Delta_e < \Delta^{*}$}
            \State $\Delta^{*} \gets \Delta_e$, \texttt{best\_edge} $\gets e$
        \EndIf
    \EndFor
    \If{\texttt{best\_edge} is not \texttt{None}}
        \State Reverse \texttt{best\_edge} in $\mathcal{G}_{\operatorname{est}}$
    \EndIf
\Until{\texttt{best\_edge} is \texttt{None}}

\State \Return $\mathcal{G}_{\operatorname{est}}^{*} \gets \mathcal{G}_{\operatorname{est}}$
\end{algorithmic}
\end{algorithm}

\section{Per-Node Conditional MLE}
\label{app:per_node_mle}

Both algorithms of Appendix~\ref{app:alg} score a candidate DAG by fitting the per-node conditional parameters by maximum likelihood on a fixed parent set, then computing \eqref{eq:bic_app}. Since the joint negative log-likelihood and the penalty decompose over nodes, the fitting can be performed one node at a time. We describe the two node types separately.

\subsection{Exponential-Family Nodes}
\label{app:per_node_expfam}

For node $i \in \mathcal{V}_{\operatorname{exp}}$ with conditional family $p_i$ and known sufficient statistic $T_i$, the conditional MLE problem is
\begin{equation}
\hat{\vect{w}}_i = \arg\min_{\vect{w} \in \mathbb{R}^{\bars{\mathcal{P}\parn{i}}}} -\frac{1}{n} \sum_{j = 1}^{n} \log p_i\!\parn{\vect{X}_{j, i} \mid \vect{X}_{j, \mathcal{P}\parn{i}}; \vect{w}},
\label{eq:expfam_mle}
\end{equation}
where the conditional log-likelihood takes the exponential-family form with natural parameter $\eta_i = g_i\parn{\vect{w}^{\mathsf{T}} \vect{X}_{j, \mathcal{P}\parn{i}}}$. The objective in \eqref{eq:expfam_mle} is convex in $\vect{w}$ for the families of Table~\ref{tab:exp_fam_dists} under the canonical link, and we solve \eqref{eq:expfam_mle} via L-BFGS-B \citep{byrd1995limited} initialized at $\vect{w} = \vect{0}$. For families with an exponential link (Poisson, Exponential, Gamma with fixed shape, Gamma with fixed rate), the linear predictor is constrained within an interval keeping $\bars{\vect{w}^{\mathsf{T}} \vect{X}_{j, \mathcal{P}\parn{i}}}$ bounded for the bulk of the sample, preventing numerical overflow in $\exp$ during the line search. The procedure terminates when the relative change in the objective falls below $\epsilon_{\operatorname{f}} = 10^{-8}$ or after $200$ iterations.

\subsection{Ordinal Nodes}
\label{app:per_node_ord}

For node $i \in \mathcal{V}_{\operatorname{ord}}$ with $s_i$ categories, the conditional MLE problem jointly fits the weight vector $\vect{w}_i$ and the cutpoints $\boldsymbol{\gamma}_i = \parn{\gamma_{i, 1}, \dots, \gamma_{i, s_i - 1}}$. The conditional negative log-likelihood is
\begin{eqnarray}
l_i\parn{\vect{w}_i, \boldsymbol{\gamma}_i; \vect{X}} &=& -\frac{1}{n} \sum_{j = 1}^{n} \log\Bigl[\sigma\parn{\gamma_{i, \vect{X}_{j, i}} - \vect{w}_i^{\mathsf{T}} \vect{X}_{j, \mathcal{P}\parn{i}}} - \, \sigma\parn{\gamma_{i, \vect{X}_{j, i} - 1} - \vect{w}_i^{\mathsf{T}} \vect{X}_{j, \mathcal{P}\parn{i}}}\Bigr],
\nonumber \\
\label{eq:ordinal_nll}
\end{eqnarray}
where $\sigma$ is the logistic sigmoid and the sentinel values $\gamma_{i, 0} = -\infty$, $\gamma_{i, s_i} = +\infty$ are used as needed at the boundary categories.

To enforce the strict ordering constraint $\gamma_{i, 1} < \cdots < \gamma_{i, s_i - 1}$ without a constrained optimizer, we reparametrize the cutpoints in terms of unconstrained parameters $\boldsymbol{\alpha}_i \in \mathbb{R}^{s_i - 1}$ via
\begin{eqnarray}
\gamma_{i, 1} &=& \alpha_{i, 1}, \nonumber \\
\gamma_{i, k} &=& \gamma_{i, k - 1} + \log\parn{1 + \exp\parn{\alpha_{i, k}}}, \quad k = 2, \dots, s_i - 1. \nonumber \\
\label{eq:cutpoint_reparam}
\end{eqnarray}
The softplus increment $\log\parn{1 + \exp\parn{\alpha_{i, k}}} > 0$ guarantees strict ordering for any $\boldsymbol{\alpha}_i \in \mathbb{R}^{s_i - 1}$, and the joint parameter vector $\parn{\vect{w}_i, \boldsymbol{\alpha}_i} \in \mathbb{R}^{\bars{\mathcal{P}\parn{i}} + s_i - 1}$ is then optimized over an unconstrained domain. We set $\vect{w}_i = \vect{0}$ and $\boldsymbol{\alpha}_i$ to the inverse of the empirical cumulative distribution function of $X_i$ via the logit link, then convert the resulting cutpoint differences to the softplus parametrization. The joint optimization is carried out via L-BFGS-B with analytical gradients, using the same convergence tolerance as in the exponential-family case.

\section{Multi-Node Experiments}
\label{app:d_node}

Setup as in Appendix~\ref{app:exp_details_dnode}: $d = 20$ bipartite skeletons, greedy reversal search (Algorithm~\ref{alg:greedy_bic}) initialized at a uniformly drawn acyclic orientation, sparse $\parn{p = 0.1}$ and dense $\parn{p = 0.9}$ regimes with $10$ and $90$ edges, eight distribution modes, $B = 1000$ trials per configuration. Figure~\ref{fig:d_node_nshd} reports $\rho$ against $N$.

\begin{figure}[!htbp]

\subfigure[Sparse bipartite DAG, $d = 20$, $10$ edges]{
\includegraphics[width=0.48\textwidth]{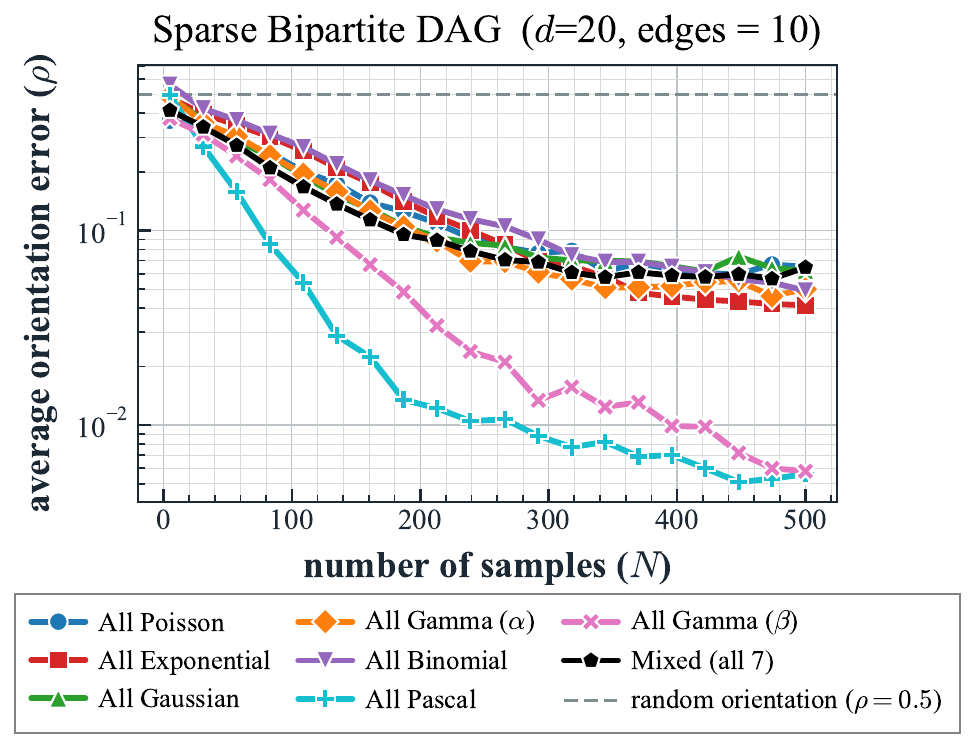}
\label{fig:dn_sparse_nshd}
}
\subfigure[Dense bipartite DAG, $d = 20$, $90$ edges]{
\includegraphics[width=0.48\textwidth]{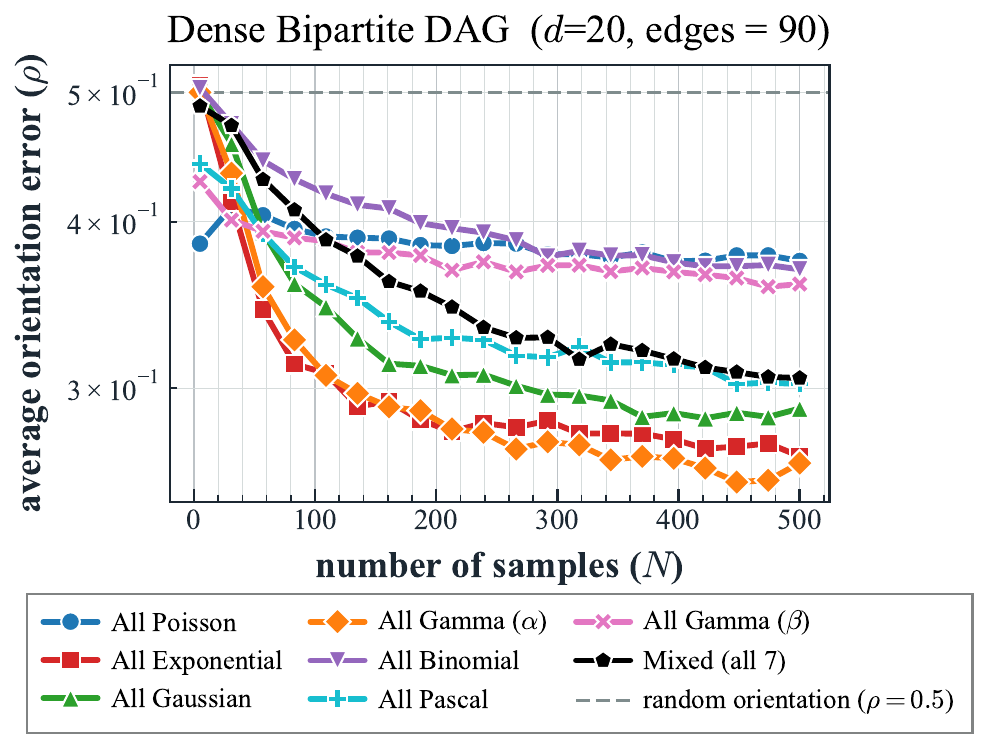}
\label{fig:dn_dense_nshd}
}
\caption{Orientation error rate $\rho$ versus sample size $N$ for $d = 20$ bipartite DAGs,
sparse and dense, over $B = 1000$ trials per configuration.}
\label{fig:d_node_nshd}
\end{figure}

\textbf{Sparse regime.} All eight modes decay across the range, with most trials at $N = 500$ orienting every edge correctly. The modes separate rather than cluster, Pascal and Gamma~$\parn{\beta}$ lowest and Poisson and the mixed mode highest, spanning about one order of magnitude at $N = 500$.

\textbf{Dense regime.} Gamma~$\parn{\alpha}$, Exponential, and Gaussian fall furthest; the mixed mode and Pascal occupy the middle; Poisson, Binomial, and Gamma~$\parn{\beta}$ plateau highest. No mode reaches the floor, and the curves are flat to within trial-to-trial variation over the final portion of the range. The dense model carries $119$ free parameters against $45$ for the sparse one, giving roughly four observations per parameter at $N = 500$; the search is moreover NP-hard in the large-sample limit for any consistent scoring criterion \citep{chickering2004large} and tractable only once the graph is sparse \citep{claassen2013learning}. Neither consideration bears on the population-level identifiability of Section~\ref{sec:id}.

The mixed mode lies within the envelope of the homogeneous modes in both regimes.

\section{Experimentation Details}
\label{app:exp_details}

Implementation in Python \texttt{3.13.14} with NumPy \texttt{2.3.4} and SciPy \texttt{1.16.3}, executed on a multi-core CPU cluster. Codebase at \url{https://github.com/SamMathelete/Ordinal_EF}.

\subsection{Generative Parameters}
\label{app:exp_details_common}

Ordinal cutpoints were generated as $s - 1$ draws from $\operatorname{Uniform}\parn{-1, 1}$, sorted, with $\gamma_{i,0} = -\infty$ and $\gamma_{i,s} = +\infty$ appended; these were drawn once per configuration and held fixed across sample sizes and trials. The link function was ordered logit throughout, and exponential-family hyperparameters were fixed across all experiments: Poisson and Exponential had no free hyperparameters, Gaussian used $\sigma^2 = 1.0$, Gamma with fixed shape used $\alpha = 2.0$, Binomial used $n_{\operatorname{trials}} = 5$, Pascal used $r = 3$, and Gamma with fixed rate used $\beta = 1.0$. For root nodes, the linear predictor is identically zero, so a root exponential-family node has natural parameter $\eta_i\parn{0}$ and a root ordinal node has $\operatorname{Pr}\parn{X_i \leq x} = \sigma\parn{\gamma_{i, x}}$ for $x = 1, \dots, s-1$; this gives unit rate (Exponential, Gamma with fixed shape), unit mean (Poisson), zero mean (Gaussian), success probability $1/2$ (Binomial, Pascal), and unit shape (Gamma with fixed rate), and no root parameter is held at its generating value during estimation. Seeding used a single integer seed of $7$, with per-configuration seeds adding a stable hash of the configuration identifiers, and the $b$-th data replicate with the configuration seed offset by $b$.

\subsection{Three-Node Experiments}
\label{app:exp_details_3node}

The skeleton $X_1 - X_2 - X_3$ was supplied and never estimated, with $X_1, X_3 \in \mathcal{V}_{\operatorname{ord}}$ for $s = 4$ and $X_2 \in \mathcal{V}_{\operatorname{exp}}$. The candidate set $\mathcal{O}\parn{\mathcal{G}_u} = \curl{\mathcal{G}_1, \mathcal{G}_2, \mathcal{G}_3, X_1 \leftarrow X_2 \leftarrow X_3}$ was enumerated exhaustively. Edge weights were set to $w = 0.6$, with $N$ ranging over $20$ equally spaced values in $\sqr{5, 500}$ and $B = 10000$ trials per configuration. One configuration was run per family of Table~\ref{tab:exp_fam_dists}.

\subsection{Multi-Node Experiments}
\label{app:exp_details_dnode}

The dimension was $d = 20$, with $\bars{\mathcal{V}_{\operatorname{ord}}}$ drawn uniformly from $\curl{\lfloor d/2 \rfloor - 1, \lfloor d/2 \rfloor, \lfloor d/2 \rfloor + 1}$ and node identities assigned by random permutation. Each admissible bipartite edge was included independently with probability $p \in \curl{0.1, 0.9}$, yielding $10$ and $90$ edges respectively, with one ground-truth graph per regime. Edge weight magnitudes were drawn from $\operatorname{Uniform}\parn{0.5, 1.0}$ with uniform signs, and ordinal nodes had $s \in \curl{3, 4}$ drawn uniformly. The type partition, weights, and cutpoints were held fixed per regime, so trials are data replicates from one model. Eight distribution modes were considered: seven homogeneous modes, one per family of Table~\ref{tab:exp_fam_dists}, and one mixed mode in which each $i \in \mathcal{V}_{\operatorname{exp}}$ draws uniformly from the same table. $N$ ranged over $20$ equally spaced values in $\sqr{5, 500}$, with $B = 1000$ trials per configuration.

\subsection{Optimizer Settings}
\label{app:exp_details_alg}
Per-node conditional maximum likelihood estimation (Appendix~\ref{app:per_node_mle}) used L-BFGS-B, with at most $200$ iterations and objective tolerance $\epsilon_{\operatorname{f}} = 10^{-8}$. The greedy search (Algorithm~\ref{alg:greedy_bic}) terminated when no reversal improved the score by more than $\epsilon_{\operatorname{s}} = 10^{-6}$, or after $4\bars{\mathcal{E}_u} + 1$ reversals, whichever occurred first.

\bibliography{references}

\end{document}